\documentclass{article}
\pdfoutput=1   

\usepackage{iclr2026_conference}
\iclrfinalcopy

\usepackage{amsmath,amssymb,amsthm,mathtools}
\usepackage[T1]{fontenc}
\usepackage{mathptmx}
\usepackage{booktabs,multirow,array}
\usepackage{float}
\usepackage{algorithm}
\usepackage{algpseudocode}
\usepackage{placeins}
\usepackage{graphicx}
\usepackage{xcolor}
\usepackage{microtype}
\usepackage{url}
\usepackage[colorlinks=true,linkcolor=blue!60!black,citecolor=blue!60!black,urlcolor=blue!60!black]{hyperref}
\hypersetup{
  pdftitle={The Loss Does Not See the Basis, but Adam Does},
  pdfauthor={Devender Singh},
  pdfsubject={Implicit bias of optimizers in factored models},
  pdfkeywords={implicit bias, orthogonal gauge equivariance, Adam, gradient descent,
    matrix factorization, low-rank, preconditioning, Muon, Shampoo},
  bookmarksnumbered=true,
}
\usepackage{caption}
\makeatletter
\def\thm@space@setup{\thm@preskip=5pt plus 2pt minus 2pt \thm@postskip=5pt plus 2pt minus 2pt}
\makeatother

\newtheorem{theorem}{Theorem}[section]
\newtheorem{proposition}[theorem]{Proposition}
\newtheorem{lemma}[theorem]{Lemma}
\newtheorem{corollary}[theorem]{Corollary}
\theoremstyle{definition}
\newtheorem{definition}[theorem]{Definition}
\theoremstyle{remark}
\newtheorem{remark}[theorem]{Remark}

\newcommand{\R}{\mathbb{R}}
\newcommand{\OO}{\mathrm{O}}
\newcommand{\msign}{\mathrm{msign}}
\newcommand{\er}{\mathrm{erank}}
\newcommand{\rec}{\mathrm{rec}}

\newcommand{\norm}[1]{\left\lVert #1 \right\rVert}
\newcommand{\ip}[2]{\left\langle #1, #2 \right\rangle}
\newcolumntype{L}[1]{>{\raggedright\arraybackslash}p{#1}}

\newcommand{\methodlink}[2]{\hyperlink{cite.#1}{\textcolor{blue!60!black}{#2}}}

\newenvironment{spec}
  {\begin{list}{\textbullet}{%
     \setlength{\leftmargin}{1.5em}\setlength{\labelwidth}{1.0em}\setlength{\labelsep}{0.5em}
     \setlength{\itemsep}{1.5pt}\setlength{\parsep}{0pt}\setlength{\topsep}{3pt}
     \setlength{\partopsep}{0pt}\setlength{\listparindent}{0pt}}}
  {\end{list}}

\newcommand\blfootnote[1]{\begingroup\renewcommand\thefootnote{}\footnote{#1}\addtocounter{footnote}{-1}\endgroup}

\title{The Loss Does Not See the Basis,\\
but Adam Does}

\author{Devender Singh\\
Department of Computer Science\\
Memorial University of Newfoundland\\
\texttt{devenders@mun.ca}}

\begin{document}
\maketitle
\lhead{Preprint}   
\blfootnote{\S\S\ref{sec:intro}--\ref{sec:discussion} are self-contained. The appendices contain the
proofs and the control battery.}

\begin{abstract}
Gradient descent on a factored model $W = UV^\top$ is implicitly biased toward low-rank solutions,
while Adam, starting from the same small initialization, is not. We trace the difference to the
gauge symmetry of the loss, its invariance under $(U, V) \mapsto (UQ, VQ)$. Gradient flow's
low-rank mechanism is available to an optimizer only if that optimizer is gauge-equivariant, a
condition necessary for the transfer but not sufficient for low-rank recovery. Gradient
descent, momentum, ``shared-scalar'' Adam, Muon, and Shampoo satisfy it. Adam, RMSProp, and the
other coordinate-wise methods do not. A structure theorem characterizes the memoryless equivariant
rules as exactly the Gram-determined left preconditioners, and a transfer theorem carries gradient
flow's pathwise properties to common-scalar flows. We then sort nine update rules on
underdetermined matrix sensing by recovery error against the planted ground truth. A one-parameter
family from coordinate-wise to shared-scalar preconditioning restores the bias monotonically,
isolating anisotropy as the cause. A ``spectral schedule'' reconciles two opposing reports about
Muon: equal-rate updates recover exactly low-rank targets but lose their edge as the spectral tail
grows. In transformers, Adam separates two gauge-equivalent initializations at the first step, where
the equivariant optimizers stay at float precision, and ends with the per-head invariants
$W_Q^\top W_K$ $56\%$ apart in relative Frobenius distance, a gap no per-head rotation can close. On
two hyperspectral datasets at matched training loss, gradient descent cuts held-out error by
$43$--$44\%$ at the lowest sampling density, and at lower effective rank. Basis choice is therefore
not a tuning detail but a decision about which interpolant the optimizer selects.
\end{abstract}

\section{Introduction}
\label{sec:intro}

The implicit regularization effect of gradient descent (GD) in factored models is by now well
established: training an overparameterized factorization to convergence, i.e.\ to a global minimum
of an underdetermined matrix problem starting from small initializations, does not yield an
arbitrary interpolating solution. In well-studied regimes it yields approximately the
minimum-nuclear-norm one, a convex proxy which strongly promotes low rank
\citep{gunasekar2017implicit,arora2019implicit,li2021towards}. By contrast, empirical results show
that adaptive methods have different generalization properties---Adam-family optimizers, while
achieving a similar train loss, can end up measurably worse on the test set
\citep{wilson2017marginal,keskar2017improving}. Recent work showed that per-coordinate adaptivity is
basis-dependent \citep{xie2025linfty}, while a preconditioner shared across coordinates restores
rotational equivariance \citep{ling2022vectoradam}. We share this concern about basis dependence. To
the best of our knowledge, its implication for interpolant selection on a factored model has not
been characterized: whether there
is a criterion by which one can determine that an optimizer preserves or breaks the low-rank
inductive bias, in such a way that a range of deployed methods can be sorted by that criterion
and the sorting verified against ground truth.

This is particularly consequential in the context of individual attention heads, which carry the
same symmetry: two transformers that compute the identical function at initialization start behaving
differently from the first Adam update, and ultimately exhibit head-level invariants $56\%$ apart at
matched validation accuracy (\S\ref{sec:attention}).

We answer this question by introducing a criterion according to which an optimizer can be
said to preserve the implicit low-rank bias of factored models. The objective function
$L(U,V) = f(UV^\top)$ remains unchanged under the transformation
\[
(U, V) \;\longmapsto\; (UQ,\; VQ), \qquad Q \in \OO(k),
\]
meaning that rotating the latent factor bases does not alter the resulting matrix $W = UV^\top$ or
any model prediction. This property is referred to as the gauge symmetry. Optimization takes place
in the space of $(U,V)$ factorizations, and different algorithms handle this symmetry to varying
extents. A central question is whether an optimizer is equivariant under orthogonal transformations
of the gauge. The ones that are can exhibit the low-rank behavior typically seen in gradient flow on
factored models.

Equivariance allows us to reproduce gradient-flow dynamics, but does not in general lead to low-rank
solutions (see Remark~\ref{rem:necessary}). For equivariant optimizers, the key to low-rank
convergence is the spectral schedule: whether the dynamics it induces on the singular values of the
factored weights during training lead to attraction to a low-rank manifold within the specified
training budget. For optimizers whose updates respect equivariance---gradient
descent, momentum, shared-scalar-preconditioned Adam, Muon, Shampoo---we can say that their behavior
is consistent across all gauge-equivalent factorizations, while for coordinate-wise
algorithms---Adam, RMSProp, signSGD, Lion, Adafactor---each optimizer reads a preferred basis, so it
can select a different interpolant depending on where in the gauge orbit it starts. In particular,
in the optimization problems considered in this paper, the coordinate-wise optimizers failed to
reach the low-rank attractor, settling instead on interpolants of higher rank and worse recovery
(Figure~\ref{fig:zoo}, Table~\ref{tab:zoo}). Thus only equivariant algorithms were able to access
the low-rank solutions found by gradient flow. Note, however, that this does not necessarily mean
that a coordinate-wise optimizer cannot converge to a low-rank solution by some other route, just
that it has less direct access to it. One example of such a case is given in
Appendix~\ref{app:c10}.

\begin{figure}[t]
\centering
\includegraphics[width=\linewidth]{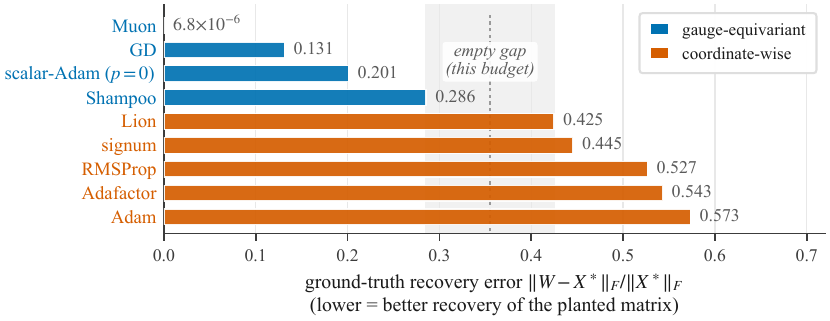}
\caption{\textbf{Equivariance structures the set of available optimizers under a fixed computational
budget.} Ground-truth recovery for nine optimizer rules (eight standard rules plus a shared-scalar
Adam as a baseline) on an underdetermined matrix sensing task without weight decay. All are run
until the training residuals vanish (interpolation), which rules out any performance difference due
to residual training error. The four gauge-equivariant methods (blue) maintain the low-rank bias
while the five coordinate-wise approaches (vermilion) do not: on this task and at this budget,
recovery error is at most $0.286$ for every equivariant method and above $0.42$ for every
coordinate-wise one, leaving the space between the two unoccupied. That qualifier matters, as one of
the coordinate-wise methods eventually reaches similar performance beyond the $2\times10^4$-step
budget used here (Appendix~\ref{app:c10}). Muon's label reads $6.8\times10^{-6}$, the unrounded value of the $0.0000$
entry in Table~\ref{tab:zoo}.}
\label{fig:zoo}
\end{figure}

\paragraph{Contributions.} We develop this criterion in five steps.
\begin{enumerate}
\item \textbf{Classification and first-step defect analysis (\S\ref{sec:theory}).} We characterize
those standard optimization updates that are compatible with the gauge action. Gradient descent
(GD), Polyak momentum, Adam with a shared scalar second moment, Muon's matrix sign update (and all
finite Newton--Schulz iterates of it), and Shampoo's Kronecker preconditioner with damping are all
equivariant. By contrast, any optimizer that applies a fixed entrywise nonlinearity at zero
optimizer state (Adam, RMSProp, signSGD, Lion) is not. Similar distinctions have been noted
previously for unfactored parameterizations, where the rotation acts on the whole parameter space
(\S\ref{sec:related}, Table~\ref{tab:litmap}), but not for the internal gauge of a factored model. For Adam with zero-initialized state, we obtain an exact expression for the
first-step defect. We then present a transfer theorem: optimization dynamics with a shared scalar
preconditioner correspond to gradient flow under a rescaled time parameterization, hence previous
results for gradient flow \citep{gunasekar2017implicit,arora2019implicit,li2021towards} carry over
directly whenever their own conditions are met by the same objective. A structure theorem turns the
classification into a full characterization: a memoryless update is equivariant exactly when it
takes the form $\Delta = H(GG^\top)\,G$ for some preconditioner $H$, whose choice defines the
optimizer (Theorem~\ref{thm:structure}, stated for full-column-rank $G$ with $k \le n$). The
defining property is equivariance, not balancedness---which both Muon and Shampoo violate while
still preserving the bias.
\item \textbf{Optimizer comparison (\S\ref{sec:zoo}).} In matrix sensing without weight decay, all
optimizers are driven to interpolation, erasing residual training error as a source of variation.
Under this controlled condition, the theory cleanly separates nine update rules (eight deployed
optimizers plus a control, scalar-Adam) into two classes by their recovery error at this
computational budget (equivariant $0.00$--$0.29$; coordinate-wise $0.42$--$0.57$;
Figure~\ref{fig:zoo}). This distinction is robust to the various controls explored in
Appendix~\ref{app:controls}: symmetrized schedules (cosine decay for all methods), full
learning-rate scans, and variable initialization amplitude. The predicted ranking of GD,
scalar-Adam, and Muon at lower error than the coordinate-wise group is observed up to $n{=}256$ ($10$ seeds,
Appendix~\ref{app:c9}), with two flagged exceptions to the strict two-class split (untuned
Shampoo and long-anneal signum; \S\ref{sec:discussion}).
\item \textbf{A dial that isolates the mechanism (\S\ref{sec:dial}).} We construct a parameterized
family of update rules which smoothly interpolates between preconditioner geometries as $p$ ranges over
$[0,1]$. The interpolation connects Adam's standard coordinate-wise denominator ($p{=}1$) to a
single coordinate-independent scalar ($p{=}0$), which is gauge-equivariant. Along this
interpolation, recovery and effective rank improve monotonically. The only thing that changes with
$p$ is the denominator's coordinate-wise anisotropy---the part of the update that reads the
gauge---plus the overall scaling that comes with it; adaptivity and momentum are kept constant
throughout. The same trends hold when the learning rate is re-optimized at each $p$
(\S\ref{sec:dial}).
\item \textbf{The same gauge lives in attention (\S\ref{sec:attention}).} Because each head's logits
depend on $W_Q$ and $W_K$ only through $W_Q^\top W_K$, each head has the same internal gauge
symmetry as the factored model. We train two gauge-equivalent copies of a small transformer---the
same function at initialization, written in different bases---to compare the behavior of Adam and
heavy-ball SGD. Adam splits them at the first step, and the two copies end with head invariants
$W_Q^\top W_K$ that differ by $56\%$. Heavy-ball SGD and scalar-Adam, by contrast, are
indistinguishable to floating-point precision at that step and stay orders of magnitude below Adam's
split afterwards, and an $A{=}I$ twin confirms that the pipeline itself
is exactly deterministic. A sanity check demonstrates that it is not some imprecision in the
calculation that produces Adam's divergence: perturbing the weights by $10^{-7}$ in the same basis
leaves a step-$1$ drift four orders of magnitude below the gauge split. Only for Muon does the
apparent gauge divergence come from numerical instability in the matrix sign calculation, and in
particular from the presence of small singular values. This is not specific to the small transformer
or to deterministic training: the same
effect can be seen when training a six-layer character-level language model with
$d_{\text{model}}{=}256$ under mini-batch noise, where the Adam twins again diverge at step one
while the equivariant ones stay at machine precision.
\item \textbf{Inside the equivariant class, and on real data (\S\ref{sec:phase}--\S\ref{sec:repair}).}
The optimizer sweep here has essentially two knobs: equivariance, which sets class membership and is
what the dial of \S\ref{sec:dial} moves along, and the aggressiveness of the spectral schedule,
which sets behavior inside the class. We characterize the latter by a phase diagram in the tail
energy, the fraction of the target's energy carried outside its low-rank part. Across $10$ seeds
this locates a boundary near $4\%$ tail energy, above which Muon's equal-rate schedule
\citep{kang2026uniform} stops helping and GD's greedy schedule wins. This reconciles the exact
recovery we see for Muon in \S\ref{sec:zoo} with the removal of the simplicity bias reported for
the same optimizer by \citet{dragutinovic2026muon}. A closed-form solution based on a decoupled
two-timescale approximation recovers a timing mechanism compatible with that boundary
(Proposition~\ref{prop:boundary}). The same consideration applies to practical tasks: on two
hyperspectral image-completion benchmarks at equivalent train loss, GD achieves a $43$--$44\%$
better held-out error than Adam under higher underdetermination, at a much lower effective rank, and
in all four seeds (\S\ref{sec:realdata}; learning rates chosen throughout by a train-only
criterion). Most importantly, our framework is constructive: the coordinate-wise clip on the
injected flow velocity in our own previously published optimizer is by itself enough to break
equivariance, quietly suppressing the very bias it was supposed to inject. By contrast, a clip based on a global
norm restores equivariance of the injected velocity and achieves the best bias restoration among the
Adam-type methods we test---though the hybrid becomes fully equivariant only once Adam's diagonal
preconditioner is taken to its shared-scalar end as well.
\end{enumerate}

\paragraph{Scope.} This work deliberately avoids advocating any particular optimizer as a new
state-of-the-art, and the mechanism does not promise benchmark wins. The bias it encodes is mild,
and we highlight the conditions under which it is beneficial in \S\ref{sec:boundary}: it wins over
tuned explicit regularization only where the optimal regularization is itself mild and aligned in
direction, and it falls short, in ways the theory anticipates, on tasks that demand stronger
regularization or the kind of per-coordinate adaptation Adam provides. Like
\citet{gunasekar2017implicit} and \citet{arora2019implicit}, what we provide is a mechanistic
explanation for an observed phenomenon, stated as a testable claim and evaluated in settings where
it is expected to help as well as where it is not.

\section{Related work}
\label{sec:related}

\paragraph{Implicit bias of gradient methods on factored models.}
\citet{gunasekar2017implicit} conjectured, and proved for commuting measurement matrices, that
gradient flow on symmetric factorizations $UU^\top$ from infinitesimal initialization converges to
minimum-nuclear-norm interpolants of underdetermined matrix sensing. \citet{arora2019implicit} extended the phenomenon to deep matrix
factorizations and argued for a dynamical (rather than purely norm-based) picture, sharpened by
\citet{razin2020implicit} and by the greedy low-rank learning dynamics of \citet{li2021towards}.
Much of this analysis leans on the quantity $U^\top U - V^\top V$, which gradient flow conserves
\citep{arora2018optimization,du2018algorithmic}. That conservation is bought by the same symmetry we
study here: a continuous symmetry of the loss hands the flow a conserved quantity, in the Noether
sense made precise for learning dynamics by \citet{kunin2021neural}.

This entire line of work studies gradient descent and gradient flow. Our
transfer theorem extends its pathwise conclusions to memoryless common-scalar flows under the
original assumptions, and our experiments address a question this literature leaves open: what do
the optimizers people actually deploy do to this bias?

\paragraph{Implicit bias of Adam and adaptive methods.}
\citet{wilson2017marginal} first exhibited adaptive methods finding worse-generalizing solutions on
constructed and real problems. The modern theory of Adam's implicit bias is largely margin-based on
separable classification, following \citet{soudry2018implicit} for gradient descent:
$\ell_\infty$-norm-constrained characterizations of AdamW \citep{loshchilov2019decoupled} by
\citet{xie2024implicit}, Karush--Kuhn--Tucker convergence for Adam \citep{zhang2024implicit,cattaneo2024implicit},
per-sample refinements \citep{baek2026implicit}, and a unified steepest-descent view assigning Adam
the $\ell_\infty$ and Muon the spectral geometry \citep{gronich2026implicit,fan2025implicit,bernstein2024old}.
\citet{li2025adam} characterize a distinct sharpness measure that Adam minimizes near minimizer
manifolds, driving it to qualitatively different solutions from SGD on overparameterized models.
This is an adaptive-vs-SGD selection gap adjacent to ours, but it operates through a
sharpness-reduction mechanism rather than a gauge mechanism, and it is not studied on
factored-model recovery.

\citet{xie2025linfty} show that SGD's optimization trajectory is rotationally invariant while Adam's
is basis-dependent, attributing Adam's empirical efficacy to its alignment with coordinate
($\ell_\infty$) geometry. We examine the complementary aspect of this basis dependence, specifically
isolating the selection cost it imposes. \S\ref{sec:boundary} details the conditions under which
these trade-offs favor one algorithm over the other. To the best of our knowledge, none of these
works explore low-rank recovery or interpolant choice in factored models, the setting where Adam's
$\ell_\infty$-based coordinate structure interacts with Gunasekar-style implicit bias. That setting
is the focus of this paper. \citet{depavia2025rotations} show that small rotations of data or feature space
alter the richness bias of Adam in classification settings, and propose an orthogonal
reparameterization to alleviate this. Our work is complementary to theirs: we study the internal
gauge of parameter factorizations and its impact on solution selection, and we characterize the
optimizer zoo---the nine deployed update rules of \S\ref{sec:zoo}---in relation to this property.

\paragraph{Optimizer design, symmetry, and structured methods.} Table~\ref{tab:litmap} summarizes
the most closely related prior and concurrent studies, categorized by the symmetry examined and
their relation to this paper. Appendix~\ref{app:related} covers this literature in full: transformation
invariance in optimizer design, symmetry-driven accounts of implicit bias, and Muon and structured
optimizers.

\begin{table}[htbp]
\centering
\caption{\textbf{Closest prior and concurrent work.} The recurring distinction is whether an update
respects a relevant orthogonal symmetry; we ask what it predicts for interpolant selection in
factored models at matched training loss.}
\label{tab:litmap}
\vspace{2pt}
\small
\renewcommand{\arraystretch}{1.12}
\begin{tabular}{@{}L{3.0cm}L{4.0cm}L{5.5cm}@{}}
\toprule
work & setting and result & relation to this paper \\
\midrule
\methodlink{ling2022vectoradam}{VectorAdam}; \citet{xie2025linfty}
& Examine ambient rotations of vector parameters under general loss functions. Show Adam is sensitive to such rotations, whereas shared scalar updates restore equivariance.
& We focus instead on an internal factorization gauge and connect the same principle to solution selection among interpolants, rather than optimization speed. \\
\methodlink{yen2025lorarite}{LoRA-RITE}; \citet{depavia2025rotations}
& Study transformations of LoRA factors and rotations in data or feature space. Demonstrate improved optimization through invariance and altered implicit bias in Adam under rotations.
& We isolate the intrinsic gauge of the factorization itself and evaluate recovery performance at matched training loss across a range of optimizers. \\
\citet{silverstein2026symmetry}; \citet{zhang2026polaradamw}
& Analyze per-head query--key (QK) rotations and Schur multiplicity bases. Highlight that optimizer design can intentionally break or preserve symmetry.
& We maintain the gauge structure intact and use it to analyze solutions chosen by standard, deployed optimizers. \\
\citet{lau2026symmetry}; \citet{shirodkar2026ddc}
& Investigate architectural symmetry groups and construct optimizers that exactly respect equivariance.
& These works support symmetry-aware design; we instead assess the practical consequences of preserving or violating such symmetry. \\
\citet{kang2026uniform}; \citet{dragutinovic2026muon}
& Study Muon's spectral dynamics and its impact on simplicity bias.
& We extend this by mapping conditions under which uniform spectral growth helps or harms performance as the target develops a spectral tail. \\
\bottomrule
\end{tabular}
\end{table}

\paragraph{Our own prior work.} FlowAdam \citep{singh2026flowadam} enhanced Adam with clipped
gradient-flow injection and demonstrated improvements on coupled problems using default settings.
The current paper shifts focus from algorithm development to mechanistic explanation. In
\S\ref{sec:repair}, we show that our symmetry criterion retrospectively clarifies FlowAdam's design
evolution: its original per-coordinate clipping introduced a gauge-breaking transformation that
inadvertently constrained the very bias it aimed to introduce, whereas correcting this with
global-norm clipping restores gradient descent's bias within the hybrid method.

\section{Setup: gauge, task, and protocol}
\label{sec:setup}

\paragraph{Factored model and gauge transformation.}
Let $f:\R^{n\times n}\to\R$ be a differentiable function, and consider the factored objective
$L(U,V) = f(UV^\top)$, where $U, V \in \R^{n\times k}$ and the representation is overparameterized
($k$ at least the rank of any solution of interest). For any orthogonal matrix $Q \in \OO(k)$, define the
gauge transformation
\[
\rho_Q(U,V) = (UQ,\, VQ).
\]
Since $(UQ)(VQ)^\top = UV^\top$, it follows that $L \circ \rho_Q = L$: this gauge transformation is a
symmetry of all such factored objectives, regardless of $f$. We consider the square case for
notational simplicity; all subsequent claims hold for rectangular inputs $f:\R^{d_1\times d_2}\to\R$
with $L(U,V) = f(UV^\top)$, $U\in\R^{d_1\times k}$ and $V\in\R^{d_2\times k}$ (as in the
hyperspectral experiments of \S\ref{sec:realdata}), since the gauge only acts on the shared latent
dimension $k$.

\begin{definition}[Gauge equivariance of an optimizer]
\label{def:equivariance}
Consider an optimizer with internal state $s_t$ (e.g.\ momentum terms, preconditioning matrices) and
update rule
\[
(U_{t+1}, V_{t+1}, s_{t+1}) = \mathcal{A}(U_t, V_t, s_t;\, \nabla L).
\]
It is \emph{gauge-equivariant} if for every $Q \in \OO(k)$ there is a corresponding transformation
$\sigma_Q$ on the state such that initializing from $\bigl(\rho_Q(U_0,V_0),\, \sigma_Q(s_0)\bigr)$
leads to states $\bigl(\rho_Q(U_t,V_t),\, \sigma_Q(s_t)\bigr)$ at every step $t$.
\end{definition}

All methods we consider have zero-initialized states, and we impose $\sigma_Q(0)=0$, ensuring that
both trajectories start from the same initial conditions. That convention is not what excludes
Adam: its entrywise second moment admits no state map $\sigma_Q$ at all, whatever it does at zero,
because squaring and right-rotation do not commute---which is exactly what the scalar version
repairs (proof of Proposition~\ref{prop:equivariant}(2)). For an equivariant optimizer it follows
that the product matrices
$W_t = U_tV_t^\top$ are invariant under the gauge action; that invariance is exactly what the
coordinate-wise rules lose. This defines a trajectory on the
orthogonal orbit of the initial pair $(U_0,V_0)$, which is not uniquely defined by $W_0$ alone
(since the full symmetry group is $\mathrm{GL}(k)$, and even gradient descent depends on the balance
term $U_0^\top U_0 - V_0^\top V_0$, which can vary within a $\mathrm{GL}(k)$ orbit). We restrict to
the orthogonal subgroup $\OO(k)$, which is the largest subgroup acting by isometries
(Lemma~\ref{lem:max-isometric}), under which gradients transform covariantly and Frobenius-norm
statistics are preserved (see the attention-related footnote in \S\ref{sec:attention}).

\begin{lemma}[Orthogonal gauge as maximal isometric symmetry]
\label{lem:max-isometric}
For any $A\in\mathrm{GL}(k)$, the transformation $(U,V)\mapsto(UA,\, VA^{-\top})$ preserves the
product $UV^\top$. This transformation preserves the Euclidean product metric
$\norm{U}_F^2+\norm{V}_F^2$ for all $(U,V)$ if and only if $A\in\OO(k)$.
\end{lemma}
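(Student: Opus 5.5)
The plan is to verify the product-preservation claim directly and then prove the isometry characterization in two directions.

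\textbf{Product preservation.} For any $A\in\mathrm{GL}(k)$,
\[
(UA)(VA^{-\top})^\top = UAA^{-1}V^\top = UV^\top,
\]
so the first claim is immediate.

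\textbf{If $A\in\OO(k)$.} Then $A^{-\top}=A$, and
\[
\norm{UA}_F^2=\operatorname{tr}(UAA^\top U^\top)=\norm{U}_F^2,
\]
and likewise for $V$. Hence the sum $\norm{U}_F^2+\norm{V}_F^2$ is preserved.

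\textbf{Only if.} I will specialize the identity to test pairs. The identity
\[
\norm{UA}_F^2+\norm{VA^{-\top}}_F^2=\norm{U}_F^2+\norm{V}_F^2
\]
is assumed to hold for all $(U,V)$, so first take $V=0$. This gives
\[
\operatorname{tr}\bigl(U(AA^\top-I)U^\top\bigr)=0 \quad\text{for every } U\in\R^{n\times k}.
\]
Next take $U=e_1x^\top$ for an arbitrary $x\in\R^k$, which needs only $n\ge 1$. The condition becomes $x^\top(AA^\top-I)x=0$ for all $x$.

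The matrix $AA^\top-I$ is symmetric, and its quadratic form vanishes identically. By polarization it is zero, so $AA^\top=I$ and $A\in\OO(k)$. The $V$-condition, or equivalently $A^{-\top}$ orthogonal, then follows automatically.

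\textbf{Remaining subtleties.} The only real point of care is reading ``preserves the metric'' correctly. If it is read as the squared norm of each pair, the argument above suffices. If it is read as preservation of the Riemannian inner product, note that the map is linear, so its differential is the map itself. Preservation of inner products then follows from preservation of norms by polarization, and the same argument applies.

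I do not expect a genuine obstacle. The step to state carefully is that the identity must hold for \emph{all} $(U,V)$; that universality is what licenses setting $V=0$ and using rank-one $U$.
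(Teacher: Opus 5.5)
Your proof is correct and follows the paper's own argument: you check product preservation by direct computation, get the ``if'' direction from orthogonal right-multiplication preserving Frobenius norms, and get the ``only if'' direction by setting $V=0$ and testing on single-row $U$ to force $AA^\top=I$. Your polarization step (a symmetric matrix with vanishing quadratic form is zero) simply spells out what the paper compresses into ``hence $AA^\top=I$.''
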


A critical property that makes this symmetry useful for verification is that gradients transform
consistently with the factors:
\[
\nabla_U L(UQ,VQ) = \bigl(\nabla_U L(U,V)\bigr) Q,
\]
and similarly for $V$ (Lemma~\ref{lem:grad-cov}). Optimization updates constructed from operations
that respect right multiplication by $Q$ will therefore exhibit equivariance, while updates that
apply a nonlinear operation entrywise in the $(U,V)$ coordinates will generally not.

\paragraph{Task: recovery in the interpolation regime.}
Our test-case task is matrix sensing under the implicit-bias setup of
\citet{gunasekar2017implicit} and \citet{recht2010guaranteed}: sensing a planted matrix
$X^\ast = U^\ast V^{\ast\top}$ ($n{=}40$) of rank $r^\ast{=}3$ through $m = 2\,\mathrm{dof}$ Gaussian
measurements $y_i = \ip{A_i}{X^\ast}$ with $\mathrm{dof} = r^\ast(2n - r^\ast)$. We consider full
overparameterization ($k{=}n$), small initialization ($10^{-3}$), no weight decay, and optimize the
squared loss on the measurements. The parameter space is underdetermined: infinitely many matrices
(low- and full-rank) are compatible with any given measurements.

All methods are stopped at interpolation (loss below $10^{-7}$, final loss reported), to focus the comparison on differences in
the set of possible minima reached once the model has enough expressivity to fit the data.
Accordingly, all methods achieve interpolation (see Appendix~\ref{app:controls}); but four of the
nine considered---Muon, Lion, RMSProp, and signum (signSGD with momentum)---require learning-rate
decay specifically shaped to their own dynamics (see below). We report recovery and effective rank,
\[
\rec(W) = \norm{W - X^\ast}_F / \norm{X^\ast}_F,
\qquad
\er(W) = \exp\Bigl(-\textstyle\sum_i \pi_i \log \pi_i\Bigr),
\]
with $\pi_i = \sigma_i/\sum_j\sigma_j$ and $0\log0\coloneqq0$ \citep{roy2007effective}. Throughout,
\emph{recovery} means the error $\rec(W)$, so \textbf{smaller is better} and ``best recovery'' always
means the smallest value; the spectral shares $\pi_i$ are unrelated to the dial parameter $p$ of
\S\ref{sec:dial}. Recovery is our
main concern, as effective rank can be misleading: a method might reach a low-rank matrix in
parameter space far from the true $X^\ast$ (see \S\ref{sec:zoo}).

\paragraph{Protocol.}
Implicit-bias experiments are notoriously rife with pitfalls and subtle interactions; some of the
most critical ones are addressed by the following controls, which were applied in all experiments
reported in the main text and the appendices. More discussion is in Appendix~\ref{app:controls}.
\begin{itemize}
\item \emph{No weight decay in any implicit-bias experiment.} Not only is implicit bias easily
masked by explicit regularization, but interactions between the two (e.g.\ a regularizer placed in
the loss versus decoupled from it) would require careful dissection to properly disentangle.

\item \emph{Run to full interpolation, or report inability to do so.} See Appendix~\ref{app:c7} for
an example of a sign-descent method where early stopping (at loss $5\times10^{-2}$) creates the
illusion of good recovery.

\item \emph{Symmetric scheduling.} Interpolation either with a fixed learning rate (preferred) or
with cosine decay; methods with constant update norm (Muon, signum, Lion) are compared against
schedules that decay the step size to zero, to avoid an easy win for methods that cannot interpolate
at a constant step size. RMSProp needs decay for a different reason, given in
Appendix~\ref{app:c7}. This is a per-method criterion rather than a class-level one; it is informed by observations, rather
than theory; and it is supplemented by all methods being compared with fully decayed schedules, so
that differences in learning-rate decay cannot explain differences in outcomes
(Appendix~\ref{app:c1}).

\item \emph{Sweeps of learning rates, not single values.} Picking the best recovery value a
posteriori is outcome bias; picking the biggest value compatible with convergence is speed bias. We
therefore report full recovery-versus-learning-rate curves and frame discussion around asymmetries
in achievable performance (Appendices~\ref{app:c2} and~\ref{app:c3}), rather than points.

\item \emph{Report recovery and rank jointly, not as separate metrics}; use the same random seeds
for all methods; and use deterministic, full-batch updates (where relevant) to compare related
methods (\S\ref{sec:attention}).
\end{itemize}

\section{Theory: equivariance as a classifier of update rules}
\label{sec:theory}

All proofs are provided in Appendix~\ref{app:proofs}. Once the orthogonal gauge is fixed, verifying
covariance becomes straightforward. However, the implications go beyond algebra---they determine
which update rules maintain the intrinsic geometry of the factorization, rather than depending on an
arbitrary choice of basis.

\begin{lemma}[Gradient covariance]
\label{lem:grad-cov}
Let $L(U,V) = f(UV^\top)$ and let $Q\in\OO(k)$. Then
\[
\nabla_U L(UQ,VQ) = \bigl(\nabla_U L(U,V)\bigr) Q,
\qquad
\nabla_V L(UQ,VQ) = \bigl(\nabla_V L(U,V)\bigr) Q.
\]
\end{lemma}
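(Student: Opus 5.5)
The plan is to compute the gradients of $L(U,V)=f(UV^\top)$ explicitly by the chain rule, and then substitute the rotated factors. Write $G(W)=\nabla f(W)\in\R^{n\times n}$ for the gradient of $f$ at $W=UV^\top$. Perturbing $U$ by $\delta U$ changes the product by $\delta U\,V^\top$, so the first-order change in $L$ is $\ip{G}{\delta U V^\top}=\operatorname{tr}(G^\top \delta U V^\top)=\ip{GV}{\delta U}$. Hence $\nabla_U L(U,V)=\nabla f(UV^\top)\,V$. Likewise, $\delta V$ changes the product by $U\,\delta V^\top$, which gives $\ip{G}{U\delta V^\top}=\ip{G^\top U}{\delta V}$ and so $\nabla_V L(U,V)=\nabla f(UV^\top)^\top U$. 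These identities use only the differentiability of $f$ and the Frobenius inner product, and they hold verbatim in the rectangular case $U\in\R^{d_1\times k}$, $V\in\R^{d_2\times k}$.

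Next I evaluate these formulas at $(UQ,VQ)$. The key observation is that the product is unchanged: $(UQ)(VQ)^\top=UQQ^\top V^\top=UV^\top$, because $QQ^\top=I$ for $Q\in\OO(k)$. So the outer factor $\nabla f(\cdot)$ is evaluated at the same matrix $W$ on both sides. This gives $\nabla_U L(UQ,VQ)=\nabla f(UV^\top)\,VQ=\bigl(\nabla_U L(U,V)\bigr)Q$. Similarly, $\nabla_V L(UQ,VQ)=\nabla f(UV^\top)^\top UQ=\bigl(\nabla_V L(U,V)\bigr)Q$, which completes the argument.

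There is no real obstacle here. The only step needing care is the correct adjoint computation in the chain rule, namely which side the transpose lands on for the $V$-gradient. As a cross-check, I would note an alternative route: differentiate the identity $L\circ\rho_Q=L$ and use that $\rho_Q$ is a linear isometry whose adjoint is $\rho_{Q^\top}=\rho_Q^{-1}$. This gives $\nabla L(\rho_Q x)=\rho_Q\nabla L(x)$ directly, and it explains why orthogonality, rather than mere invertibility, is what is needed; this is consistent with Lemma~\ref{lem:max-isometric}.
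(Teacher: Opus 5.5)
Your proposal is correct and follows essentially the same route as the paper: you use the chain-rule formulas $\nabla_U L = \nabla f(W)\,V$ and $\nabla_V L = \nabla f(W)^\top U$, note that $(UQ)(VQ)^\top = UV^\top$ so $\nabla f$ is evaluated at the same $W$, and read off the factor $Q$. Your cross-check via the adjoint of $\rho_Q$ is a valid extra remark, but it is not needed for the proof.
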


\begin{proposition}[Equivariant class]
\label{prop:equivariant}
The following update rules satisfy gauge equivariance as defined in
Definition~\ref{def:equivariance}.
\begin{enumerate}
\item \textbf{Gradient descent and Polyak/Nesterov momentum}: any linear combination of past
gradients computed at appropriately transformed points.

\item \textbf{Scalar-preconditioned Adam}: a version of Adam where the per-coordinate second-moment
estimate is replaced by a single scalar $\hat\nu_t$, obtained via bias-corrected exponential moving
average (EMA) of gauge-invariant quantities (such as the average of all entries in $g^{\odot2}$
across both factors). The update uses denominator $a_t = \sqrt{\hat\nu_t} + \epsilon$ and step
direction $\Delta \propto \hat m_t/a_t$.

\item \textbf{Muon}: updates given by $\Delta = \msign(M_t)$, where $M_t$ is the momentum buffer and
$\msign(M) = AB^\top$ comes from the compact SVD $M = A\Sigma B^\top$ (with $\msign(0)=0$).
Equivariance holds exactly under real arithmetic and remains valid for any finite-order
Newton--Schulz approximation.

\item \textbf{Shampoo}: the step
\[
\Delta = (L_t + \lambda I)^{-1/4}\, G_t \,(R_t + \lambda I)^{-1/4},
\qquad
L_t = \sum_{u\le t} G_uG_u^\top,
\quad
R_t = \sum_{u\le t} G_u^\top G_u,
\]
with damping $\lambda>0$, ensuring the matrix roots are well-defined even when matrices are
rank-deficient early in training (our experiments use $\lambda=1$; for $\lambda=0$, interpret the
roots on their active subspace). The subscripts $L_t$ and $R_t$ always refer to Shampoo's left and
right accumulators, never to the objective $L$. In particular, note that equivariance is achieved
exactly in this setting even with damping.
\end{enumerate}
\end{proposition}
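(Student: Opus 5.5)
The proof is an induction on $t$ driven by Lemma~\ref{lem:grad-cov}. For each optimizer I will name an explicit state map $\sigma_Q$ with $\sigma_Q(0)=0$. The inductive hypothesis is that at step $t$ the transformed run sits at $(U_tQ, V_tQ, \sigma_Q(s_t))$. Lemma~\ref{lem:grad-cov} then gives gradients $G_tQ$ for both factors. It remains to show that each update map $\Phi$ satisfies $\Phi(GQ,\sigma_Q(s)) = (\Phi(G,s)Q,\ \sigma_Q(s'))$. Since $U_{t+1}Q = U_tQ - \eta\,\Delta_t Q$, the step closes the induction. The base case is immediate because the initial states are $\rho_Q(U_0,V_0)$ and $\sigma_Q(0)=0$.

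\textbf{Cases (1) and (2).} For momentum and Nesterov, take $\sigma_Q(m) = mQ$ on each factor's buffer. The buffer update $m_{t+1} = \beta m_t + G_t$ is linear, so it commutes with right multiplication by $Q$. Nesterov evaluates the gradient at a look-ahead point $U_t - \eta\beta m_t$. Under the hypothesis this point is exactly $\rho_Q$ of the original look-ahead point, so Lemma~\ref{lem:grad-cov} applies there as well. For scalar Adam, set $\sigma_Q(m,\nu) = (mQ,\nu)$. The key fact is that $\sum_{ij} (GQ)_{ij}^2 = \norm{GQ}_F^2 = \norm{G}_F^2$. So the scalar fed into the EMA is gauge invariant, $\hat\nu_t$ and $a_t$ coincide in the two runs, and $\hat m_t/a_t$ picks up exactly the factor $Q$. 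Bias correction only multiplies by deterministic scalars and is harmless. I will also note why entrywise Adam fails: $(GQ)^{\odot 2}\neq G^{\odot2}Q$ in general, so no $\sigma_Q$ exists. This remark is illuminating but not needed for the proof.

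\textbf{Case (3), Muon.} The momentum buffer transforms as in (1), so it suffices to show $\msign(MQ) = \msign(M)Q$. If $M = A\Sigma B^\top$ is a compact SVD, then $MQ = A\Sigma (Q^\top B)^\top$ is again a compact SVD, because $Q^\top B$ has orthonormal columns. Hence $\msign(MQ) = A B^\top Q$. Compact SVDs are not unique, but $AB^\top$ equals the polar factor $M(M^\top M)^{+1/2}$ and so is well defined; the zero case is trivial. For Newton--Schulz, every iterate is an odd polynomial $X\mapsto X\,p(X^\top X)$ applied to a Frobenius-normalized $X_0 = M/\norm{M}_F$. The normalization is invariant under $Q$. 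Using $(XQ)^\top(XQ) = Q^\top X^\top X Q$ and $p(Q^\top S Q) = Q^\top p(S) Q$, we get $X_{j+1}Q$ for the transformed iterate. Induction on $j$ then covers all finite orders.

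\textbf{Case (4), Shampoo: the main obstacle.} Here the two accumulators transform differently, and I must check that the damped matrix roots respect this. Take $\sigma_Q(L,R) = (L,\ Q^\top R Q)$. Then $(G Q)(GQ)^\top = GG^\top$ leaves the left accumulator $L_t$ unchanged, while $(GQ)^\top(GQ) = Q^\top G^\top G Q$ conjugates $R_t$. Because $Q^\top I Q = I$, the damped matrix is also conjugated: $Q^\top R_t Q + \lambda I = Q^\top(R_t+\lambda I)Q$. A primary matrix function commutes with orthogonal conjugation, so $(Q^\top(R_t+\lambda I)Q)^{-1/4} = Q^\top (R_t+\lambda I)^{-1/4} Q$. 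Therefore the transformed step is $(L_t+\lambda I)^{-1/4} G_t Q\, Q^\top (R_t+\lambda I)^{-1/4} Q = \Delta_t Q$. This is exactly why damping by a multiple of the identity preserves equivariance. In the $\lambda=0$ case, the pseudo-inverse root on the active subspace also commutes with conjugation, since the active subspace of $Q^\top R Q$ is $Q^\top$ applied to that of $R$. The same argument goes through for each factor $U$ and $V$ separately, which completes the induction.
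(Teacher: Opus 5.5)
Your proposal is correct and follows the paper's proof essentially step for step: the same induction driven by Lemma~\ref{lem:grad-cov}, the same state maps ($M\mapsto MQ$, $(M,\nu)\mapsto(MQ,\nu)$, $(L,R)\mapsto(L,Q^\top RQ)$), the same SVD argument for $\msign$, and the same identity $Q^\top RQ+\lambda I=Q^\top(R+\lambda I)Q$ for damped Shampoo. The differences are cosmetic: you write the Newton--Schulz step as $X\,p(X^\top X)$ and use conjugation, where the paper writes $p(XX^\top)X$ and uses the invariance of $XX^\top$; you also handle the non-uniqueness of the compact SVD and the $\lambda=0$ case, which the paper leaves implicit.
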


In all cases, the resulting product of matrices $W_t = U_tV_t^\top$ is uniquely defined by the input
data and the gauge orbit of the initial factors $(U_0,V_0)$, independently of the specific basis
used for those factors. The results hold for the full-matrix cases described above, assuming real
arithmetic. Implementations of practical interest that use variable splitting, mixed update rules,
coordinate-wise clipping, or other non-equivariant operations on subsets of the parameters are a
different matter: in these cases, equivariance is achieved only within the subsets of parameters
updated by the stated rules.

\begin{proposition}[Nonlinear coordinate-wise maps break the gauge]
\label{prop:coordwise}
Assume $k \ge 2$, and suppose that a memoryless update function $\Phi$ has the form
$\Phi(G)_{ij} = \phi(G_{ij})$, where $\phi:\R\to\R$ is a fixed function. If the update function
satisfies
\[
\Phi(GQ) = \Phi(G)\,Q
\qquad\text{for all matrices } G \text{ and all orthogonal transformations } Q\in\OO(k),
\]
then $\phi$ must be linear, $\phi(x) = cx$ for some real constant $c$. In other words, any
optimization algorithm that applies a fixed nonlinear function at the level of coordinates, at the
zero optimizer state, cannot be gauge-equivariant.
\end{proposition}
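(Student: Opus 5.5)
The plan is to test the equivariance identity $\Phi(GQ)=\Phi(G)Q$ on a few well-chosen matrices $G$ and rotations $Q$ acting on two columns only. Because $k\ge 2$, we may restrict attention to the first two columns and take $Q$ to be a planar rotation $R_\theta$ (or a reflection) in the $(1,2)$ coordinate plane, extended by the identity. Reading off a single row then reduces everything to a scalar functional equation. Concretely, take $G$ whose only nonzero row is $(x,y,0,\dots,0)$. The identity $\Phi(GQ)=\Phi(G)Q$ on that row reads
\[
\bigl(\phi(x\cos\theta - y\sin\theta),\ \phi(x\sin\theta+y\cos\theta)\bigr)
=\bigl(\phi(x),\phi(y)\bigr)R_\theta
\]
in the appropriate convention. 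On the zero rows it gives $\phi(0)=\phi(0)\cdot(\text{row sums of }Q)$. The extra coordinates read $\phi(0)=\phi(0)$, so they give nothing directly.

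The steps, in order, are as follows.

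\textbf{Step 1.} Show $\phi(0)=0$. Apply the identity to $G=0$, which gives $\phi(0)\mathbf{1}^\top=\phi(0)\mathbf{1}^\top Q$ for every $Q$. A rotation by $\pi/2$ in the first plane sends $(1,1)$ to $(-1,1)$ or $(1,-1)$, so $\phi(0)=-\phi(0)$.

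\textbf{Step 2.} Show $\phi$ is odd. Take the reflection $Q=\mathrm{diag}(-1,1,\dots)$ and $y=0$. This gives $\phi(-x)=-\phi(x)$.

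\textbf{Step 3.} Show $\phi$ is homogeneous. Take $y=0$ and a general angle $\theta$. The first component gives $\phi(x\cos\theta)=\phi(x)\cos\theta$. Since $\cos\theta$ ranges over $[-1,1]$, this yields $\phi(tx)=t\phi(x)$ for all $|t|\le 1$ and all $x$.

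\textbf{Step 4.} Conclude linearity. For any $x\neq 0$, write $1=(1/x)\cdot x$ if $|x|\ge1$, or $x=x\cdot 1$ if $|x|\le 1$. In the second case Step 3 gives $\phi(x)=x\phi(1)$ directly. In the first case Step 3 gives $\phi(1)=\phi(x)/x$. Either way $\phi(x)=cx$ with $c=\phi(1)$.

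Note that no regularity of $\phi$ is needed, because the rotation angle is continuous and already supplies all scalars in $[-1,1]$. The final sentence of the proposition follows because, at zero optimizer state, Adam, RMSProp, signSGD and Lion all reduce to a memoryless map of the form $\phi(g)=g/(|g|+\epsilon)$ or $\mathrm{sign}(g)$ (up to scaling). These are nonlinear, and equivariance of the full optimizer at $t=0$ with $\sigma_Q(0)=0$ would force equivariance of this first-step map.

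The main obstacle is keeping the row-vector versus column convention straight, that is, $GQ$ acts on the rows of $G$ as $g^\top\mapsto g^\top Q$. The first component of $(x,0)R_\theta$ must really be $x\cos\theta$; a sign slip here is harmless, since $\cos$ is even. I also expect care to be needed in justifying the claim that the first Adam step equals $\Phi(\nabla L)$ with the coordinatewise $\phi$ above. That is routine: with zero moments, the bias-corrected $\hat m_1=g$ and $\hat v_1=g^{\odot 2}$.
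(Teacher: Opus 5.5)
Your proof is correct and follows essentially the same route as the paper's: single-row test matrices under a planar rotation in the first two coordinates, $\phi(0)=0$ from $\theta=\pi/2$, the homogeneity $\phi(tx)=t\,\phi(x)$ for $|t|\le 1$ from the $y=0$ identity, and then an extension to all reals that needs neither additivity nor regularity. The differences are cosmetic. Your reflection step for oddness is already covered by $\theta=\pi$ in Step 3. You finish by pivoting on $1$, where the paper shows $\phi(x)/x$ is constant using a common dominating real. Finally, the entries of $\mathbf{1}^\top Q$ are column sums rather than row sums, though your concrete Step 1 computation is right.
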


The reason is that by Definition~\ref{def:equivariance}, the zero state must be taken to itself by
any gauge transformation. Both the original and the gauge-transformed run therefore start
identically and apply the same update $\Phi$, so equivariance requires
$\Phi(GQ) = \Phi(G)Q$ for all $G$, which is only possible if $\phi$ is linear. Many standard
optimization algorithms have such coordinate-wise nonlinearities in their first update---Adam,
RMSProp, signSGD, Lion---while Adafactor carries them in its factored statistics and is analyzed
separately in the proof. (Checking the equivariance condition for subsequent updates would require
reasoning about how the optimizer state evolves under gauge transformations, but starting from the
zero state avoids this complication.)

\begin{proposition}[Adam has an exact first-step gauge defect]
\label{prop:adam-onestep}
Fix $\epsilon > 0$ and consider bias-corrected Adam using the update convention
$\theta^+ = \theta - \eta\, \hat m \oslash (\sqrt{\hat v} + \epsilon)$, with elementwise operations,
shared stepsize $\eta$, and initial moment estimates set to zero. Define the function
\[
D_\epsilon(G) \coloneqq G \oslash \bigl(\lvert G\rvert + \epsilon\bigr),
\]
applied entrywise, with $\lvert\cdot\rvert$ the entrywise absolute value. Let $G_U = \nabla_U L(U_0,V_0)$ and $G_V = \nabla_V L(U_0,V_0)$ denote the
gradients at initialization, and let $(\widetilde U_1, \widetilde V_1)$ be the first iterate obtained
from the gauge-rotated starting point $(U_0Q, V_0Q)$. Define
\[
E_Q(G) \coloneqq D_\epsilon(GQ)Q^\top - D_\epsilon(G),
\qquad
D_U \coloneqq D_\epsilon(G_U),
\qquad
D_V \coloneqq D_\epsilon(G_V).
\]
Then the difference in the resulting factorized products after one step satisfies the identity
\[
\begin{aligned}
\widetilde W_1 - W_1
&= -\eta\bigl[E_Q(G_U)V_0^\top + U_0E_Q(G_V)^\top\bigr] \\
&\quad + \eta^2\bigl[
E_Q(G_U)D_V^\top + D_UE_Q(G_V)^\top + E_Q(G_U)E_Q(G_V)^\top
\bigr].
\end{aligned}
\]
Equivalently, the second-order term can be rewritten as
$\eta^2\bigl[D_\epsilon(G_UQ)D_\epsilon(G_VQ)^\top - D_UD_V^\top\bigr]$.
\end{proposition}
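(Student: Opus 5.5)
The plan is a direct computation of the first Adam step from zero state, followed by algebraic bookkeeping. We first reduce the bias-corrected first step to the map $D_\epsilon$. With $m_0=v_0=0$, the first moments are $m_1=(1-\beta_1)g$ and $v_1=(1-\beta_2)g^{\odot2}$. Bias correction divides these by $1-\beta_1$ and $1-\beta_2$, so $\hat m_1=g$ and $\hat v_1=g^{\odot 2}$. Hence $\sqrt{\hat v_1}=\lvert g\rvert$ entrywise, and the update is exactly $\theta_1=\theta_0-\eta D_\epsilon(g)$. This gives $U_1=U_0-\eta D_U$ and $V_1=V_0-\eta D_V$ from the unrotated start.

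For the rotated start $(U_0Q,V_0Q)$ we invoke Lemma~\ref{lem:grad-cov}: the gradients there are $G_UQ$ and $G_VQ$. So $\widetilde U_1=U_0Q-\eta D_\epsilon(G_UQ)$, and similarly for $\widetilde V_1$. We then rewrite $D_\epsilon(G_UQ)=(D_U+E_Q(G_U))Q$, which is just the definition of $E_Q$ rearranged using $Q^\top Q=I$. This gives
\[
\widetilde U_1=\bigl(U_0-\eta(D_U+E_Q(G_U))\bigr)Q,
\]
and the analogous expression for $\widetilde V_1$.

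Next we form the product. The factors $Q$ and $Q^\top$ cancel, so
\[
\widetilde W_1=\bigl(U_0-\eta(D_U+E_U)\bigr)\bigl(V_0-\eta(D_V+E_V)\bigr)^\top,
\]
where $E_U=E_Q(G_U)$ and $E_V=E_Q(G_V)$. Likewise $W_1=(U_0-\eta D_U)(V_0-\eta D_V)^\top$. Expanding both and subtracting, the $U_0V_0^\top$ terms cancel, as do the $-\eta(D_UV_0^\top+U_0D_V^\top)$ terms and the $\eta^2 D_UD_V^\top$ term. What survives is the first-order part $-\eta(E_UV_0^\top+U_0E_V^\top)$ and the second-order part $\eta^2(E_UD_V^\top+D_UE_V^\top+E_UE_V^\top)$, which is the claimed identity.

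For the equivalent form, we note that
\[
(D_U+E_U)(D_V+E_V)^\top=D_\epsilon(G_UQ)QQ^\top D_\epsilon(G_VQ)^\top=D_\epsilon(G_UQ)D_\epsilon(G_VQ)^\top.
\]
Subtracting $D_UD_V^\top$ from this gives exactly the three second-order terms.

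No step is a real obstacle. The only points needing care are, first, checking that bias correction makes the first step exactly $D_\epsilon$ and not an approximation of it. The second is keeping the sign conventions consistent when absorbing the rotation, namely that $E_Q$ is defined with $Q^\top$ on the right so that the trailing $Q$ factors cancel in the product.
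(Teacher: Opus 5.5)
Your proposal is correct and follows the paper's proof essentially step for step: bias correction from zero state reduces the first step to $-\eta D_\epsilon(G)$, and Lemma~\ref{lem:grad-cov} supplies the rotated gradients. The identity $D_\epsilon(GQ)Q^\top = D_\epsilon(G)+E_Q(G)$ then gauge-aligns the rotated iterate, the result follows by expanding and subtracting the two products, and the only slip is cosmetic: the middle factor in your final display should be $Q^\top Q$ rather than $QQ^\top$, which changes nothing since both are the identity.
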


This shows that Adam's failure of gauge equivariance manifests already at the level of the
represented matrix in the first step---not merely as a mismatch in internal states. For example,
with $n=1$, $k=2$, $f(w) = \tfrac12 w^2$, $U_0 = V_0 = (1,0)$, and rotation matrix
$Q = 2^{-1/2}\begin{psmallmatrix}1&1\\-1&1\end{psmallmatrix}$, the two resulting products after one
step are $(1-\eta/(1+\epsilon))^2$ and $(1-\eta/(2^{-1/2}+\epsilon))^2$, which differ whenever
$0 < \eta < 2^{-1/2}+\epsilon$.

So far the classification is a list of examples. For memoryless update rules it can be made exact,
and the exact answer is narrow: equivariance forces the update to be a left preconditioner whose
only input is the Gram matrix.

\begin{theorem}[Structure: equivariant memoryless rules are Gram-determined left preconditioners]
\label{thm:structure}
Let $\Phi$ be a memoryless update rule defined on gradients $G \in \R^{n\times k}$ with full column
rank ($k \le n$). Then $\Phi(GQ) = \Phi(G)Q$ holds for all $Q \in \OO(k)$ if and only if
\[
\Phi(G) \;=\; H\!\left(GG^\top\right) G
\]
for some matrix-valued function $H$ depending only on the Gram matrix $GG^\top$---a quantity
invariant under gauge transformations. The function $H$ can be canonically expressed as
$H(GG^\top) = \Phi(G)\,G^{+}$, where $G^{+}$ is the Moore--Penrose pseudoinverse. In the square
invertible case ($k = n$), this simplifies uniquely to $H(P) = \Phi(P^{1/2})\,P^{-1/2}$.
Full column rank is a generic property---an open, dense condition, holding with probability one
under random initialization---so the rank-deficient set has measure zero and $\Phi$ is left
unconstrained on it. The only point of that set the dynamics are forced through is $G = 0$, which
occurs at (i) perfect interpolation or (ii) a degenerate stationary point at which one factor has
become zero. That case carries no freedom either, since the update vanishes identically there:
equivariance forces $\Phi(0) = \Phi(0)Q$ for any $Q \in \OO(k)$, so $\Phi(0) = 0$.
\end{theorem}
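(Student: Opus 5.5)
The proof splits into the two directions of the equivalence, followed by the degenerate case $G=0$.

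\emph{Sufficiency.} Suppose $\Phi(G)=H(GG^\top)G$. Since $(GQ)(GQ)^\top = GQQ^\top G^\top = GG^\top$, the Gram matrix is gauge-invariant. Hence
\[
\Phi(GQ)=H(GG^\top)\,GQ=\Phi(G)Q ,
\]
with no rank assumption needed.

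\emph{Necessity.} The plan is to show that equivariance forces $\Phi(G)$ to depend on $G$ only through its $\OO(k)$-orbit, suitably twisted, and that full column rank makes this orbit coincide with the fiber of $G\mapsto GG^\top$.

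\begin{enumerate}
\item \emph{Fiber equals orbit.} If $G_1,G_2\in\R^{n\times k}$ have full column rank and $G_1G_1^\top=G_2G_2^\top$, then $G_2=G_1Q$ for a unique $Q\in\OO(k)$. The argument goes through polar/SVD forms. Write $G_i = A_i\Sigma_iB_i^\top$ with $A_i\in\R^{n\times k}$ orthonormal. The equal Gram matrices share their nonzero spectrum and eigenspaces, so $\Sigma_1=\Sigma_2$. Moreover $A_2=A_1R$ for some $R\in\OO(k)$ commuting with $\Sigma$ (block-orthogonal on repeated singular values). Then $Q=B_1R B_2^\top$ works, since $R$ commutes with $\Sigma$.

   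A cleaner route avoids SVD bookkeeping. Since $G_1$ has full column rank, $G_1^+G_1=I_k$, so set $Q:=G_1^+G_2$. Because the range of $G_2$ equals the range of $G_1$, we have $G_1Q=G_1G_1^+G_2=G_2$. Orthogonality then follows from
   \[
   Q^\top Q = G_2^\top (G_1^+)^\top G_1^+ G_2 = G_2^\top (G_1G_1^\top)^+ G_2 = G_2^\top (G_2G_2^\top)^+G_2 = I_k .
   \]
   Uniqueness follows from injectivity of $G_1$.

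2. \emph{Definition of $H$.} Set $H(P):=\Phi(G)G^+$ for any full-rank $G$ with $GG^\top=P$. For well-definedness, take $G'=GQ$. Then $(GQ)^+=Q^\top G^+$, so
   \[
   \Phi(GQ)(GQ)^+=\Phi(G)QQ^\top G^+=\Phi(G)G^+ .
   \]

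3. \emph{Recovery of $\Phi$.} We have $H(GG^\top)G=\Phi(G)G^+G=\Phi(G)$, since $G^+G=I_k$ under full column rank. This step is where the hypothesis $k\le n$ with full rank is genuinely used.

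4. \emph{Square invertible case.} When $k=n$, choose the representative $G=P^{1/2}$, whose Gram matrix is $P$. This gives $H(P)=\Phi(P^{1/2})P^{-1/2}$. Uniqueness of $H$ holds because $G$ is invertible, so $H(GG^\top)=\Phi(G)G^{-1}$ is forced.
\end{enumerate}

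\emph{Remaining remarks.} Full rank is generic because the rank-deficient set is the zero set of the $k\times k$ minors, a proper algebraic subset that is closed and has measure zero. At $G=0$, equivariance gives $\Phi(0)=\Phi(0)Q$ for all $Q$. Choosing $Q=-I$ yields $\Phi(0)=-\Phi(0)$, so $\Phi(0)=0$.

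\emph{Expected obstacle.} The only real work is the fiber-equals-orbit step, specifically showing $Q^\top Q=I$ together with the range argument. I expect the pseudoinverse identity $(G_1^+)^\top G_1^+=(G_1G_1^\top)^+$ to settle it; the SVD argument is the fallback if that identity needs more care.
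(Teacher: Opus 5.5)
Your proposal is correct and follows the paper's proof essentially step for step: the same sufficiency check, the same gauge-invariant quantity $\Phi(G)G^{+}$ made well defined via $(GQ)^{+}=Q^\top G^{+}$, and the same fiber-equals-orbit argument with $Q:=G_1^{+}G_2$ (your ``cleaner route'' is exactly the paper's). The differences are cosmetic: the paper treats the square case via the polar decomposition $G=(GG^\top)^{1/2}O$ rather than by evaluating the canonical formula at $P^{1/2}$, and your $Q=-I$ argument for $\Phi(0)=0$ spells out a step that the paper only states in the theorem.
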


The algebra is elementary; the classification it buys is not. It demonstrates that the four cases in
Proposition~\ref{prop:equivariant} are not ad hoc
examples but the only shape available, up to the choice of representation. And it shows where the
remaining flexibility sits. Right-gauge equivariance alone allows $H$ to be any
matrix-valued function of
$GG^\top$. For instance, a fixed nonscalar left multiplier $\Phi(G) = DG$ preserves gauge symmetry
without acting solely on individual singular values; even imposing left-orthogonal equivariance
still permits reweighting in each singular direction to depend on the full spectrum (e.g.\
$\Phi(G) = \norm{G}_F\,G$ is bi-orthogonally equivariant).

When $H$ is further restricted to a fixed
scalar function of the spectrum of $GG^\top$, Proposition~\ref{prop:spectral} reduces it to a single
univariate \emph{spectral transfer function} $h(\sigma)$: the update moves along the gradient's own
singular directions and rescales each one, with $h(\sigma) = \eta\sigma$ for GD and $h \equiv \eta$
for the exact polar map. That one coordinate is the schedule axis of \S\ref{sec:phase}. Stateful
methods instead require explicit handling of their evolving covariant state.
Appendix~\ref{app:structure} states the spectral proposition, the extensions to stateful rules and
to deeper factorizations, and what balancedness does inside the class.

\begin{theorem}[Transfer theorem]
\label{thm:transfer}
Let $a(t)>0$ be a measurable function such that $1/a$ is locally integrable, and suppose $a(t)$ is
determined solely by gauge-invariant statistics of the trajectory up to time $t$. Consider the
scalar-preconditioned dynamics
\[
\dot\theta = -\nabla L(\theta)/a(t),
\qquad \theta = (U,V),
\]
where $\theta$ is locally absolutely continuous, and define the time transformation
$\tau(t) = \int_0^t du/a(u)$. Then the reparameterized trajectory
$\widetilde\theta(\tau) \coloneqq \theta(t(\tau))$ satisfies
\[
\widetilde\theta'(\tau) = -\nabla L\bigl(\widetilde\theta(\tau)\bigr)
\]
for almost every $\tau$, meaning both trajectories follow the same path in parameter space. If in
addition $\int_0^\infty du/a(u) = \infty$---which holds, for instance, when $a$ is bounded
above---the transformed time reaches infinity, ensuring that if the gradient flow converges, so does
the preconditioned version, to the same limit.
\end{theorem}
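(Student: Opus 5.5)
The proof is a change of variables for absolutely continuous functions, done in four steps.

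\textbf{Step 1: properties of $\tau$.} Since $1/a$ is positive and locally integrable, $\tau(t)=\int_0^t du/a(u)$ is locally absolutely continuous and strictly increasing, with $\tau'(t)=1/a(t)$ for almost every $t$ by the Lebesgue differentiation theorem. Set $T=\int_0^\infty du/a(u)\in(0,\infty]$. Then $\tau$ is a homeomorphism of $[0,\infty)$ onto $[0,T)$, with continuous strictly increasing inverse $t(\tau)$. The gauge-invariance hypothesis on $a$ plays no role in the calculus. It only ensures that $a$, and hence $\tau$, is identical along gauge-equivalent trajectories, which is what makes the reparameterization meaningful as an equivariant statement. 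I will remark on this rather than use it.

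\textbf{Step 2: the inverse map is absolutely continuous.} This is where I expect the main obstacle. The composition $\theta\circ t$ need not be absolutely continuous if $t(\cdot)$ fails to be, and $t(\cdot)$ fails to be exactly when $\tau$ maps a set of positive measure onto a null set. That cannot happen here, because $\tau'=1/a>0$ almost everywhere. I will use the standard fact that for an absolutely continuous increasing $\tau$, the image of a set $E$ has measure $\int_E \tau'$. So if $|\tau(E)|=0$, then $\int_E 1/a=0$, which forces $|E|=0$. Hence $t(\cdot)$ maps null sets to null sets, and being monotone and continuous it is absolutely continuous on compacts. Its derivative is $t'(\tau)=a(t(\tau))$ for almost every $\tau$, via the inverse-function rule at points where $\tau$ is differentiable with $\tau'=1/a\neq 0$. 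The exceptional set of $t$-values is null and, by the Luzin N-property just established, maps to a null set of $\tau$-values.

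\textbf{Step 3: the chain rule.} A more robust route avoids pointwise differentiation altogether. For $\tau_1<\tau_2<T$, write
\[
\widetilde\theta(\tau_2)-\widetilde\theta(\tau_1)=\int_{t(\tau_1)}^{t(\tau_2)}\dot\theta(s)\,ds=-\int_{t(\tau_1)}^{t(\tau_2)}\frac{\nabla L(\theta(s))}{a(s)}\,ds .
\]
Then apply the change-of-variables formula $s=t(\sigma)$, $ds/a(s)=d\tau(s)$, which is valid for the absolutely continuous increasing substitution $\tau$ (the integrand is locally integrable since $\nabla L\circ\theta$ is continuous). The right-hand side becomes $-\int_{\tau_1}^{\tau_2}\nabla L(\widetilde\theta(\sigma))\,d\sigma$. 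Since this integrand is continuous, $\widetilde\theta$ is in fact $C^1$ and satisfies $\widetilde\theta'=-\nabla L(\widetilde\theta)$ everywhere, which is stronger than almost everywhere. Both curves are the image of the same map on corresponding parameter sets, so they trace the same path.

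\textbf{Step 4: the limit.} If $\int_0^\infty du/a=\infty$, then $T=\infty$; in particular this holds when $a\le A$, since then $\tau(t)\ge t/A$. The curve $\widetilde\theta$ is then the gradient flow on all of $[0,\infty)$ from $\theta(0)$. I will invoke uniqueness for the gradient-flow ODE, assuming $\nabla L$ is locally Lipschitz, which holds for smooth $f$. If $\widetilde\theta(\tau)\to\theta^\ast$, then because $t\mapsto\tau(t)$ is a bijection onto $[0,\infty)$ with $\tau(t)\to\infty$ as $t\to\infty$, we get $\theta(t)=\widetilde\theta(\tau(t))\to\theta^\ast$. I will also note the failure mode when $T<\infty$: in that case the preconditioned flow only realizes an initial segment of the gradient-flow path.
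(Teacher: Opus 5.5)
Your proposal is correct and follows the paper's argument: the same time change $\tau(t)=\int_0^t du/a(u)$, the same identity $d\widetilde\theta/d\tau=-\nabla L(\widetilde\theta)$, and the same use of $\tau_{\max}=\infty$ (e.g.\ for bounded $a$) to carry the gradient-flow limit over, with the gauge-invariance of $a$ serving only to make the reparameterization gauge-independent. Your integral form in Step~3 is slightly more careful than the paper's formal chain rule $\frac{d\widetilde\theta}{d\tau}=\dot\theta\,\frac{dt}{d\tau}$, because it also gives absolute continuity of $\widetilde\theta$ (which makes Step~2 unnecessary); your everywhere-$C^1$ claim and the uniqueness in Step~4 assume continuity and local Lipschitzness of $\nabla L$, a bit more than the paper's differentiable $f$, though the a.e.\ conclusion does not need them.
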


Consequently, any conclusion regarding a gradient flow which relies solely on the path or limit
point is valid under the joint assumptions of the above theorem and the original result; this is
true for the restricted-regime results
\citep{gunasekar2017implicit,arora2019implicit,li2021towards}. However, conclusions which are
time-dependent, such as rates of convergence or hitting times, cannot be transferred because of the
time rescaling. In the absence of the divergence condition, only the finite part of the path up to
$\tau_{\max} = \int_0^\infty du/a(u)$ is shared.

\begin{remark}[Scope of the transfer, and what equivariance does not buy]
\label{rem:transferscope}
\label{rem:necessary}
Theorem~\ref{thm:transfer} applies to the memoryless, common-scalar continuous flow, so it reaches
gradient descent and common-scalar-preconditioned GD and no further. Scalar-Adam carries a
first-moment exponential moving average (EMA) and is therefore not strictly an instance of it; we
treat its agreement with gradient flow as empirically observed rather than formally implied
(Remark~\ref{rem:flowlimit}, \S\ref{sec:zoo}). Equivariance is what makes the transfer available,
but on its own it is neither necessary nor sufficient for low-rank recovery. The second factor is
the spectral schedule---the greedy, sequential growth of singular values from small initialization
\citep{li2021towards}. ScaledGD \citep{tong2021accelerating} is the clean counterexample on one
side: gauge-equivariant, yet it equalizes the convergence rate across all the singular values and
gives up the greedy behavior we rely on ($\er \approx 13.7$ on the zoo task, not
tabulated).\footnote{Our implementation of ScaledGD uses a factor-aware
preconditioner inspired by \citet{tong2021accelerating}, but serves here as an equivariant control
with equalized schedule, since we do not have a tuned reproducibility baseline for ScaledGD.}
Appendix~\ref{app:c10} supplies the counterexample on the other side, a non-equivariant rule
(long-anneal signum) that recovers anyway. Within the class, how hard a method flattens the spectrum
(empirically GD $<$ Shampoo $<$ Muon on our runs) is the axis that remains; for memoryless updates
it is the $h$ of Proposition~\ref{prop:spectral}, and \S\ref{sec:phase} maps it.
\end{remark}

\section{The optimizer zoo}
\label{sec:zoo}

\paragraph{Design.} We evaluate nine optimization algorithms on the sensing task described in
\S\ref{sec:setup}. Four of these are theoretically expected to be equivariant: gradient descent
(GD), scalar-Adam (using the $p{=}0$ setting from \S\ref{sec:dial} with a gauge-invariant
root-mean-square (RMS) scalar), Muon \citep{jordan2024muon}, and Shampoo \citep{gupta2018shampoo}.
The remaining five are predicted to break gauge symmetry: Adam \citep{kingma2015adam}, RMSProp
\citep{tieleman2012rmsprop}, signSGD with momentum \citep[``signum'';][]{bernstein2018signsgd},
Lion \citep{chen2023lion}, and Adafactor \citep{shazeer2018adafactor}. Experiments use three paired
seeds, with recovery averaged across seeds. For each such method, we pick the learning rate yielding
the best recovery among those on that method's own grid that reach interpolation. This is selection on the outcome, but it is applied
identically to every method and it favors the coordinate-wise ones, so the class gap it produces is a
lower bound. Appendix~\ref{app:c2} replaces the single selected number with the full learning-rate
curves and the ``can/cannot'' reading of them. The four methods labeled \emph{(cosine)} in
Table~\ref{tab:zoo} require learning-rate decay to achieve interpolation, as discussed in
\S\ref{sec:setup}: Muon, signum, and Lion because of their constant update norm, and RMSProp because
of its $\epsilon$-bounded denominator. All nine optimizers achieve the $10^{-7}$ threshold for interpolation (the loss
is evaluated every $200$ steps, so the faster methods are recorded several orders of magnitude below
the threshold; for instance, Adam reaches a final loss of $1.2\times10^{-11}$). To put these numbers
in perspective, we also report the solution to the convex problem suggested by gradient-flow theory:
the minimum of the nuclear norm $\min\norm{X}_\ast$ subject to $\ip{A_i}{X} = y_i$, on the same
three problem instances (\texttt{experiments/nuclear\_norm\_reference.py}). This forms the reference
row of Table~\ref{tab:zoo}; the solve attains a feasibility residual of $7\times10^{-16}$ and
recovers $X^\ast$ on two of the three seeds, while on the third it finds a solution of
\emph{strictly lower} nuclear norm than $X^\ast$ ($38.416$ vs.\ $38.509$), so $X^\ast$ is not the
minimum-nuclear-norm solution at $m = 2\,\mathrm{dof}$ for that seed's measurements.
Main results are displayed in Table~\ref{tab:zoo} and
Figure~\ref{fig:zoo}; further control experiments over the scheduling choices and selection criteria
are deferred to Appendix~\ref{app:controls}.

\begin{table}[ht]
\centering
\caption{\textbf{The optimizer-zoo map.} Matrix sensing, $40{\times}40$, $\mathrm{rank}\,3$ ground
truth, $m = 2\times$dof, no weight decay, three paired random seeds, all nine optimizer runs
continued until interpolation (final training loss reported). Recovery is measured as
$\norm{W - X^\ast}_F/\norm{X^\ast}_F$ (lower is better); erank is effective rank;
bal is $\norm{U^\top U - V^\top V}_F$; the notes column names the theoretical framework each row
falls under---\citet{gunasekar2017implicit} for gradient descent (GD) and
\citet{wilson2017marginal} for Adam. \emph{(cosine)} marks the four methods that need cosine
annealing to interpolate; the first row is a convex baseline, not an optimizer, recovering the
minimum-nuclear-norm solution of the same measurements (both are unpacked in the text above). All
values are means over the three seeds, with no dispersion quoted.}
\label{tab:zoo}
\vspace{2pt}
\small
\setlength{\tabcolsep}{5.5pt}
\begin{tabular}{llccccc}
\toprule
 & method & recovery $\downarrow$ & erank & bal & train loss & notes \\
\midrule
 & min-nuclear-norm & \textit{0.0335} & \textit{3.30} & --- & --- & convex reference, not an optimizer \\
\midrule
\multirow{4}{*}{\rotatebox{90}{\scriptsize equivariant}}
 & Muon (cosine)        & \textbf{0.0000} & 2.95 & 1.65 & $5.7\times10^{-8}$ & near-exact ($6.8\times10^{-6}$ unrounded) \\
 & GD                   & 0.1312 & 4.51 & 0.06 & $6.5\times10^{-8}$ & Gunasekar anchor \\
 & scalar-Adam ($p{=}0$)& 0.2010 & 5.43 & 0.14 & $3.5\times10^{-8}$ & equivariant; flow proxy \\
 & Shampoo              & 0.2856 & 6.95 & 1.47 & $4.5\times10^{-8}$ & \\
\midrule
\multirow{5}{*}{\rotatebox{90}{\scriptsize coord.-wise}}
 & Lion (cosine)        & 0.4248 & 10.58 & 7.35 & $8.0\times10^{-8}$ & \\
 & signum (cosine)      & 0.4454 & 7.83 & 1.79 & $6.5\times10^{-8}$ & low-rank-but-wrong \\
 & RMSProp (cosine)     & 0.5266 & 12.80 & 4.74 & $6.0\times10^{-8}$ & needs decay; Appendix~\ref{app:c7} \\
 & Adafactor            & 0.5430 & 10.72 & 3.40 & $5.6\times10^{-8}$ & factored diag.\ still breaks \\
 & Adam                 & 0.5734 & 14.37 & 5.37 & $1.2\times10^{-11}$ & Wilson anchor \\
\bottomrule
\end{tabular}
\end{table}

\paragraph{Result.} The two clusters of predicted methods are cleanly separated: all equivariant
methods perform strictly better than any coordinate-wise method ($\le 0.286$ vs.\ $\ge 0.42$),
leaving a $0.14$ gap between the best performer in the latter group and the worst performer in the
former at this computational budget (see Figure~\ref{fig:zoo}). Because the table quotes three-seed
means with no dispersion, the separation claim itself is deferred to the $10$-seed ladder of
Appendix~\ref{app:c9}, where it persists at every problem size up to $n{=}256$ (GD, scalar-Adam, and
Muon against the clean coordinate-wise methods, with two flagged exceptions). Since all nine
methods manage to interpolate, the difference must be attributed to the choice of interpolant. The
reference row lets us ground this in the absolute recovery error: compared to the min-nuclear-norm
solution ($0.0335$), Muon is better, while GD, scalar-Adam, and Shampoo are within an order of
magnitude and the coordinate-wise approaches are all worse by more than an order of magnitude.
Two points about that row are worth emphasizing. First, GD is worse than the convex benchmark and
converges toward it as the step size decreases ($0.131 \to 0.113$, and $0.108$ on the finer ladder
grid) but never reaches it within this budget. This illustrates the distinction made in
Remark~\ref{rem:flowlimit} between gradient \emph{flow} at infinitesimal initialization and the
discrete GD we actually run, at initialization scale $10^{-3}$ and on a finite budget: the
nuclear-norm characterization is a restricted-regime statement about the former, while practitioners
must contend with the latter. Second, while Muon outperforms the convex benchmark, this
is not a contradiction, since on the single seed where nuclear-norm minimization did not return
$X^\ast$, Muon did recover it; hence these methods do more than just approximate the convex
relaxation. Muon is behaving differently because of its equal-rate schedule
\citep{kang2026uniform}, a bias not related to any norm---the reading \S\ref{sec:phase} develops.

We note three observations in particular. \emph{(i)} Muon is capable of close to exact recovery
(recovery error $6.8\times10^{-6}$, effective rank $2.95$ instead of the true rank $3$) on this task
once cosine decay permits interpolation. This demonstrates a stronger inductive bias towards
low-rank solutions than gradient descent shows at practical learning rates, as one would expect from
its equal-rate spectral dynamics \citep{kang2026uniform} when the true matrix is exactly low-rank.
\emph{(ii)} Adafactor eliminates the low-rank bias as completely as Adam does; the per-coordinate
anisotropy of the factored diagonal approximation makes this a matter of gauge dependence rather
than of memory footprint. \emph{(iii)} Finally, signum showcases why recovery, and not effective
rank, is the metric to rely on: with the lowest effective rank of any coordinate-wise method ($7.8$,
against Shampoo's $6.95$) but a recovery error of $0.445$, signum still falls among the worst
optimizers for this task. Its updates are concentrated along only a few directions, and those
directions are rarely aligned with the true solution. (When doubling the length of the annealing
phase, signum is in fact the only optimizer to return to the low-recovery regime, but through a
different and more volatile mechanism than the flowing optimizers, rather than by preserving the
flow-induced bias. This is discussed further in Appendix~\ref{app:c10}.)

\paragraph{Controls (Appendix~\ref{app:controls}).} The observed splitting is not due to
learning-rate scheduling: under a single shared cosine decay we still get $9/9$ on classification,
and giving GD and Adam the same benefit of the doubt as Muon---the full $2\times10^4$-step cosine
decay horizon, no early stopping---does not improve their results (Table~\ref{tab:schedules}). The
same is true for initialization scales of $10^{-3}$ and $3\times10^{-3}$, though it is less
pronounced at $10^{-2}$, where every method but Muon has lost the small-initialization edge
(Appendix~\ref{app:c4}). We also observe the same phenomenon in a larger setting ($60{\times}60$,
rank $5$), not tabulated: $0.000$ (Muon), $0.077$ (GD), $0.465$ (Adam); see
\texttt{experiments/zoo\_size\_check.py}. The only optimizer for which scheduling makes a
difference is RMSProp: because of the $\epsilon$-bounded denominator in its update, it cannot reach
the interpolation threshold at a fixed step size, and it recovers $0.527$ either way
(Appendix~\ref{app:c7}).

\section{The same gauge lives in attention heads}
\label{sec:attention}

\paragraph{Attention has the same symmetry.} In an attention head with queries $q = W_Q x$ and keys
$\kappa = W_K x$ \citep{vaswani2017attention} (written $\kappa$ here, since $k$ is the gauge
dimension throughout), the logits are invariant to the choice of basis in the head
dimension: they only depend on the weights through $q^\top \kappa = x^\top W_Q^\top W_K x$. For any
orthogonal matrix $A_h$ specific to the head, replacing $(W_Q, W_K)$ with $(A_h W_Q, A_h W_K)$ leaves
all logits unchanged, hence all outputs of the model. In other words, picking a basis in the head
dimension is a gauge freedom, and $W_Q^\top W_K$ plays the role of $UV^\top$ (under the convention
$U = W_Q^\top$, $V = W_K^\top$).\footnote{The full function-preserving group of a head's logits is
larger (any invertible $A$ acts by $(W_Q, W_K) \mapsto (AW_Q, A^{-\top}W_K)$); as in
Definition~\ref{def:equivariance} we work with its orthogonal subgroup, the maximal part acting by
\emph{isometries} of parameter space (Lemma~\ref{lem:max-isometric}).} Hence all statements from
\S\ref{sec:theory} carry over to individual attention heads.

This symmetry has been previously noted and deliberately broken: in particular,
\citet{silverstein2026symmetry} induce a preferred direction by adding an unlearned bias to $q$ and
$v$, resampled at random each batch, in the spirit of a Hamiltonian/Noether framework for
memory-efficient optimizers. We instead retain the gauge symmetry and study the interplay between
standard optimizers and it.

For a given head, define $U = W_Q^\top$, $V = W_K^\top$, and $Q = A_h^\top$ (this $Q$ is the gauge
element of \S\ref{sec:setup}; the $Q$ in $W_Q$ is the query label, and the two are unrelated). Then, by
Proposition~\ref{prop:adam-onestep}, the exact first-step deviation in the invariant $W_Q^\top W_K$
under Adam at zero optimizer state is given by the expression in that proposition. The subsequent
drift in logits is then an empirical consequence of this deviation at the product level, which the
twin protocol isolates.

\paragraph{Twin protocol.} We train two $2$-layer $4$-head transformers with embedding dimension
$d{=}64$ on modular addition modulo $47$ \citep[the arithmetic task of][]{power2022grokking}, using
full-batch training on deterministic CPU, with no bias terms or qk-norm (to retain the gauge
symmetry). In the second model, we replace each head's $(W_Q, W_K)$ by $(A_hW_Q,\, A_hW_K)$ at
initialization, with $A_h$ a random orthogonal matrix drawn per head. As both models realize the same function at initialization, any deviation
during training is due to the optimizer being sensitive to the internal basis. We also have a twin
with $A{=}I$ (the identity matrix), which should be bit-identical and serve as a sanity check for
reproducibility; a noise twin with the same basis but with $W_Q$ and $W_K$ perturbed by a Gaussian of
magnitude $10^{-7}$, probing the chaos amplification due to small functional differences; and runs
with equivariant optimizers. Of the latter, SGD is applied with heavy-ball momentum $0.9$, and Muon
follows the standard hybrid procedure: it updates the matrix parameters---where $W_Q$, $W_K$, and
hence the head gauge live---while Adam optimizes the embeddings and the output head. The gauge acts
only on the parameters Muon updates, and the remaining gradients are identical across twins, so the
hybrid is equivariant for the symmetry under test (see Appendix~\ref{app:details}).

\begin{table}[t]
\centering
\caption{\textbf{Adam is basis-dependent in attention, and the noise twin separates structure from
chaos.} The values are relative logit distances between twins on validation inputs for the same
task; multi-seed statistics are shown in Appendix~\ref{app:c5}. At the harness's default CPU
precision, the baseline rounding threshold is the step-$0$ column ($\approx2\times10^{-7}$, i.e.\
one function evaluated across two bases); the equivariant methods reach this floor at step $1$, and
their later nonzero entries result from numerical noise, not from gauge-symmetry violations---Muon's
split in particular, as per Proposition~\ref{prop:equivariant}. These discrepancies are reduced, if
not eliminated, in GPU \texttt{float64}, where the step-$1$ values fall within
$2.8\times10^{-16}$--$1.1\times10^{-15}$ over the three methods and four configurations;
Table~\ref{tab:attnscale} carries the worst case in each.
}
\label{tab:attention}
\vspace{2pt}
\small
\begin{tabular}{llcccc}
\toprule
optimizer & twin type & step 0 & step 1 & step 100 & step 1500 (final) \\
\midrule
Adam & gauge   & $1.8\times10^{-7}$ & $\mathbf{3.6\times10^{-3}}$ & $6.5\times10^{-1}$ & $7.7\times10^{-1}$ \\
Adam & $A{=}I$ & $0$ & $0$ & $0$ & $0$ \\
Adam & noise ($10^{-7}$) & $2.6\times10^{-7}$ & $2.9\times10^{-7}$ & $1.7\times10^{-5}$ & $1.6\times10^{-5}$ \\
SGD  & gauge   & $1.8\times10^{-7}$ & $2.9\times10^{-7}$ & $3.7\times10^{-5}$ & $2.0\times10^{-5}$ \\
scalar-Adam & gauge & $1.8\times10^{-7}$ & $1.6\times10^{-7}$ & $4.1\times10^{-6}$ & $4.3\times10^{-6}$ \\
Muon & gauge   & $1.8\times10^{-7}$ & $2.2\times10^{-7}$ & $2.3\times10^{-2}$ & $8.5\times10^{-1}$ \\
Muon & noise ($10^{-7}$) & $2.6\times10^{-7}$ & $2.9\times10^{-7}$ & $4.0\times10^{-2}$ & $8.7\times10^{-1}$ \\
\bottomrule
\end{tabular}
\end{table}

\begin{figure}[t]
\centering
\includegraphics[width=\linewidth]{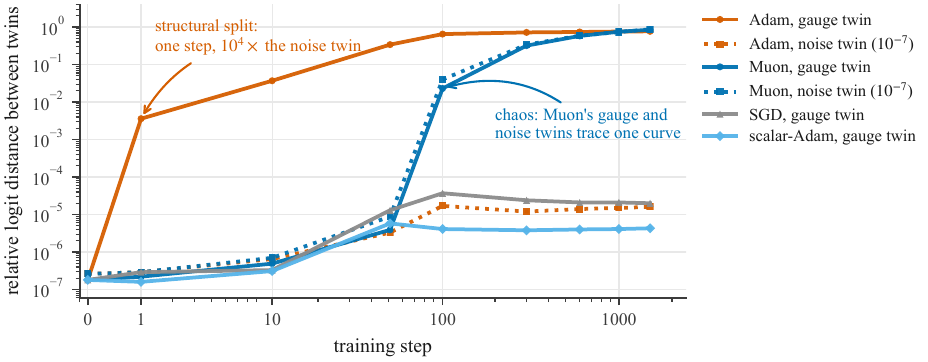}
\caption{\textbf{Adam is basis-dependent in attention}, shown here by the relative logit
distance between twins in the gauge and noise experiments. In the former the two copies are initialized as
the same function, while in the latter they differ by a small ($10^{-7}$) perturbation in the same
basis. Adam's gauge twins (solid vermilion line) split structurally in one step, to a value four
orders of magnitude greater than its own noise twin (dotted line), and then saturate. The same is
not observed for SGD (heavy-ball momentum $0.9$) or scalar-Adam, whose gauge twins remain at float
noise. For Muon the gauge and noise twins trace the same curve, indicating that its separation is
numerical chaos stemming from $\msign$, not basis dependence.}
\label{fig:attention}
\end{figure}

\paragraph{Results (Table~\ref{tab:attention}, Figure~\ref{fig:attention}).} The Adam twins diverge
right away: after one step, their logit difference is already $3.6\times10^{-3}$, as opposed to
$2.9\times10^{-7}$ for the noise twin at the same step (a difference of four orders of magnitude).
The relative logit distance then saturates at $0.77$. Meanwhile, the $W_Q^\top W_K$ invariants have a
relative Frobenius distance of $56\%$ (QK drift: mean relative Frobenius distance of per-head
invariants, see Appendix~\ref{app:details}; not shown in Table~\ref{tab:attention}, which reports
logit distances), which manifests as a substantial shift in the gauge-invariant structure of the
model, beyond mere rotations of individual heads. That last number is the one with a downstream
consequence, which we take up at the end of this section.

The two perturbations are not directly comparable, as they act on different scales: the gauge
transformation leaves the function of the model intact, but changes the parameters by $O(1)$. The
logit differences at step $0$ for the two twins are $1.8\times10^{-7}$ vs.\ $2.6\times10^{-7}$
(Table~\ref{tab:attention}; the gauge twin's step-$0$ value is rounding, one function evaluated in
two bases, which is why the same-basis $A{=}I$ twin is exactly $0$), so the noise twin serves as a
probe of how sensitive the optimizer is to
a function difference of that magnitude when no basis has changed. \emph{In short, what Adam learns
in an attention head depends on an arbitrary choice of coordinates that has no effect on the model's
input--output behavior.}

SGD and scalar-Adam twins sit at the harness's numerical-noise level at step $1$ and stay four to
five orders of magnitude below Adam's structural split thereafter, so replacing Adam's
per-coordinate denominator by a shared scalar---the $p{=}0$ end of the dial we build next
(\S\ref{sec:dial})---kills the basis dependence here as well as on the sensing task. Muon's gauge twin also diverges, but its noise twin tracks the same trajectory up to a
certain point (Appendix~\ref{app:c5}): the agreement is within $0.05$ decades averaged over the
trajectory past step $100$ (at step $100$ itself the two still differ by $0.24$ decades,
Table~\ref{tab:attention}), and both twins sit at the noise floor for about $50$ steps, before numerical chaos takes
over due to the instability of $\msign$, not basis dependence.

Proposition~\ref{prop:spectral} shows that any spectrum-flattening schedule massively amplifies small
singular values: unboundedly for the exact $\msign$, and in practice by a large but finite
Newton--Schulz factor around zero, which is where $\msign$ acts in early training with near-zero
momentum. To disentangle these effects, note that the noise twin separates chaotic amplification by
the schedule from genuine basis dependence during training. We thus propose using the triplet (gauge
twin, $A{=}I$ twin, noise twin) as a diagnostic for claims of equivariance during training.

The phenomena are reliably observed across different random seeds and initialization draws: six
(init $\times$ gauge-draw) pairs for Adam and two for each equivariant method, with the noise twin
run out to a $100\times$ stronger perturbation, past which Adam's gauge split still stands clear of
it by over $600\times$ (Appendix~\ref{app:c5}).

\paragraph{Scale, and real stochastic training.} These phenomena are not specific to toy models or
CPU precision. They occur in both full-batch and stochastic settings, and are not specific to model
depth (see Appendix~\ref{app:c5} for the same analysis in a GPU \texttt{float64} setting). When
rerun in \texttt{float64} on GPU (deterministic, with $A{=}I$ twins exactly zero), for modular
addition over the integers mod $97$, Adam's gauge twins diverge on step $1$, by
$8.2$--$11\times10^{-3}$ ($4$ layer/$d{=}128$) and $6.5$--$7.2\times10^{-3}$ ($6$ layer/$d{=}256$),
for six init $\times$ draw pairs each, saturating at $0.61$--$0.75$ gauge drift. By contrast, all the
equivariant methods are confined to $2.8\times10^{-16}$--$1.1\times10^{-15}$ at step $1$: about machine
epsilon in \texttt{float64}, $12$--$13$ orders of magnitude below Adam.

Similar findings obtain for the stochastic setting: for a $6$-layer, $d{=}256$ character-level language model (tiny-Shakespeare,
\citealp{karpathy2015unreasonable}), with all methods receiving identical minibatch streams, Adam's
gauge twins split at step $1$ ($2.2$--$2.4\times10^{-3}$), while the equivariant twins stay at
$3$--$8\times10^{-16}$. Critically, while the two Adam runs have very similar validation losses
($1.579$ vs.\ $1.585$), they have very different functions; their logit distributions end up
$0.36$--$0.37$ apart in relative distance, depending on the gauge pair (Table~\ref{tab:attnscale}).

At larger depths, the final gauge drift stops being an informative summary, as most methods' twins
separate by the end; SGD's are the exception and stay at float level throughout, though at $6$
layer/$d{=}256$ SGD never leaves chance accuracy, so its zero drift reports an untrained model
rather than a preserved gauge (Appendix~\ref{app:c5}). These observations
are not specific to stochastic training: a similar effect is
seen in the deterministic full-batch mod-$97$ experiments at $6$ layer/$d{=}256$, where even the
equivariant twins start to separate, and it again manifests in the character-level LM. Wherever a
method's twins do separate, each one lands where its own noise twin lands
(Appendix~\ref{app:c5} tabulates
both), suggesting that late-time gauge divergence occurs due to Lyapunov instability
amplified onto the gauge symmetry via finite precision arithmetic; the equivariance failure modes are
subdominant to the amplified floating-point error. For Muon, the instability is the Newton--Schulz
iteration's sensitivity to small singular values, while for scalar-Adam it is limited precision
arithmetic (the depth and minibatching acting as additional amplifiers). Only Adam moved at step $1$
(a change $39\times$
larger than even the $100\times$ stronger noise twin), while the equivariant methods were confined to
float precision. In this picture, early basis-dependence is the discriminating signal, and this trend
appears to generally hold (Appendix~\ref{app:c5}, Table~\ref{tab:attnscale}).

\paragraph{A consequence for model merging.} Aligning two networks' internal bases before averaging
them (permutation re-basin, \citealp{ainsworth2023git}, and its rotational analogue here)
presupposes that alignment makes the networks agree. The twin experiments make this premise
optimizer-dependent inside every attention head. Alignment can at most reconcile the gauge, so two
runs are head-mergeable only if the gauge-invariant product $W_Q^\top W_K$ agrees.

In the shallow deterministic regime, where the dynamics stay non-chaotic, our equivariant twins agree in the
invariant to floating-point error and are mergeable by one Procrustes step; at $6$ layers they
separate as well, but only as far as their own noise twins do (Appendix~\ref{app:c5}). Adam's twins
instead disagree in the invariant itself by $56\%$, so no per-head rotation
can reconcile them, even though the runs began as the same function. Basis-dependent training does
not simply pick among equivalent solutions; it removes the agreement that alignment-based merging
needs. We have not tested merging end to end, so this is a consequence of the measured invariant
rather than a merging experiment.

\section{Isolating preconditioner anisotropy with a dial}
\label{sec:dial}

The zoo sorts the optimizers, and the attention twins show the same sorting inside a transformer.
Neither says which ingredient of Adam does the damage. Here, we employ a
one-parameter family to isolate the specific factor driving these differences. We define Adam-$p$
by modifying Adam's denominator $\sqrt{\hat v}+\epsilon$ to $(\sqrt{\hat v}+\epsilon)^{\,p}\, (\bar
s+\epsilon)^{\,1-p}$, where $\bar s = \sqrt{\operatorname{mean}(\hat v)}$ represents a shared
root-mean-square scalar, and $\epsilon=10^{-8}$ is the stabilizing term used in Adam. The choice of
$\bar s$ is significant. Computing the arithmetic mean of $\hat v$ across the entries in both
factors gives a gauge-invariant quantity, since the Frobenius-norm statistic to which it is
equivalent is preserved under the right-orthogonal transformation of Lemma~\ref{lem:grad-cov}. As
such, when $p{=}0$ the denominator reduces to the gauge-invariant scalar of
Proposition~\ref{prop:equivariant}(2). Adam-$0$ then coincides with the scalar-Adam procedure of
\S\ref{sec:zoo} up to an additive $O(\epsilon)$ term in the denominator: it is the same algorithm,
and the results are identical ($0.201$, erank $5.43$). In contrast, the geometric mean of
$\sqrt{\hat v}$ would fail to be gauge-invariant, rendering the $p{=}0$ case itself dependent on an
arbitrary choice of factorization basis. We adhere to the RMS convention throughout, giving the
geometric-mean alternative only as a consistency check.\footnote{The geometric-mean dial traces the same monotone curve
and agrees with the RMS dial at the endpoint to within $0.03$ (recovery $0.229$, erank $5.8$ at
$p{=}0$). It is an approximate rather than an exact dial, because its $p{=}0$ limit is not
equivariant.}

When $p{=}1$, we recover standard Adam, which is coordinate-wise and gauge-breaking. For
intermediate values of $p \in (0,1)$, the preconditioner is less anisotropic while retaining the
same adaptation, momentum, and noise properties as standard Adam. The overall size of the
denominator is controlled by $p$, so for the purposes of the experimental sweeps it makes sense to
test either a shared set of learning rates for all $p$ or a separately optimized learning rate for
each $p$ (see Algorithm~\ref{alg:dial} for the exact procedure, including the two places in which
$\epsilon$ is used).

\begin{algorithm}[t]
\caption{\textbf{Adam-$p$: the preconditioner-anisotropy dial} (\S\ref{sec:dial}). Setting $p{=}1$
recovers standard Adam; $p{=}0$ gives the gauge-equivariant scalar-Adam from
Proposition~\ref{prop:equivariant}(2). Only the denominator moves with $p$: the gradient, moment
estimates, bias corrections, step size, and (zero) weight decay are the same throughout. All
operations on the two factors are elementwise: $\odot$, $\oslash$,
$(\cdot)^{\odot q}$ denote elementwise multiplication/division/power. The only inter-factor
cross-talk happens at Line~\ref{alg:line:scalar}, which is precisely what makes the update
equivariant at $p{=}0$: by pooling $\hat v$ over all entries of the two factors, we make $\bar s$
depend only on $\norm{G_U}_F^2 + \norm{G_V}_F^2$, which is invariant under gauge transformations by
Lemma~\ref{lem:grad-cov}.
For $p<1$ an extra $\epsilon$ sits outside the product as well, so at $p{=}0$ the denominator is
$\bar s + 2\epsilon$ rather than the ideal $\bar s$; with $\epsilon = 10^{-8}$ this $O(\epsilon)$
offset has no effect on any number reported here. At $p{=}1$ that extra $\epsilon$ is dropped and
the denominator is stock Adam's $\sqrt{\hat v_\theta} + \epsilon$. The second-moment term $v_\theta$
is kept elementwise for all $p$, including $p{=}0$; but since taking the entry mean commutes with
the exponential moving average, the pooled $\bar s^{\,2}$ equals the bias-corrected scalar $\nu_t$
of Proposition~\ref{prop:equivariant}(2) exactly, making the parameter trajectory the equivariant
one. It is this pooled scalar, not the stored elementwise $v_\theta$, that carries the state map for
the purposes of Definition~\ref{def:equivariance}.}
\label{alg:dial}
\begin{algorithmic}[1]
\Require factors $U, V$; step size $\eta$; $(\beta_1,\beta_2) = (0.9, 0.999)$;
$\epsilon = 10^{-8}$; dial $p \in [0,1]$; weight decay $0$
\State $m_\theta \gets 0$, \; $v_\theta \gets 0$ \; for each factor $\theta \in \{U, V\}$
\For{$t = 1, 2, \dots$}
  \For{each factor $\theta \in \{U, V\}$}
    \State $g_\theta \gets \nabla_\theta L(U,V)$
      \Comment{$L(U,V) = f(UV^\top)$; covariant: $g_\theta \mapsto g_\theta Q$}
    \State $m_\theta \gets \beta_1 m_\theta + (1-\beta_1)\, g_\theta$, \quad
           $v_\theta \gets \beta_2 v_\theta + (1-\beta_2)\, g_\theta^{\odot 2}$
    \State $\hat m_\theta \gets m_\theta / (1 - \beta_1^{\,t})$, \quad
           $\hat v_\theta \gets v_\theta / (1 - \beta_2^{\,t})$
  \EndFor
  \State $\bar s \gets \bigl(\operatorname{mean}(\hat v_U, \hat v_V)\bigr)^{1/2}$
         \label{alg:line:scalar}
      \Comment{one mean, pooled over \emph{both} factors: gauge-invariant}
  \For{each factor $\theta \in \{U, V\}$}
    \State $D_\theta \gets \bigl(\hat v_\theta^{\odot 1/2} + \epsilon\bigr)^{\odot p}
           \odot (\bar s + \epsilon)^{1-p}$
      \Comment{$p{=}1$: per-coordinate; $p{=}0$: one shared scalar}
    \If{$p < 1$}
      \State $\theta \gets \theta - \eta\, \hat m_\theta \oslash (D_\theta + \epsilon)$
    \Else
      \State $\theta \gets \theta - \eta\, \hat m_\theta \oslash D_\theta$
        \Comment{$D_\theta = \hat v_\theta^{\odot 1/2} + \epsilon$ here: stock Adam}
    \EndIf
  \EndFor
\EndFor
\Statex \emph{Denominator convention:} at $p{=}0$ it is $\bar s + 2\epsilon$ and at $p{=}1$ it is
$\hat v_\theta^{\odot 1/2} + \epsilon$; both are the ideal value up to $O(\epsilon)$.
\end{algorithmic}
\end{algorithm}

The resulting behavior on the zoo task is shown below, with improvements in both parameter recovery
and effective rank as $p$ decreases towards $p{=}0$, without any change to the algorithm itself,
simply by varying $p$ in the update rule. Two experimental setups were used to illustrate this
result: a fixed-step version, where all learning rates were tested for all values of $p$ with the
other hyperparameters fixed as well, and an envelope version, where for each $p$ the best learning
rate was chosen individually based on recovery in the interpolating regime. In the fixed-step
version, for every learning rate whose sweep both interpolates and resolves the flow limit (for the
zoo task, the range from $10^{-3}$ to $3\times10^{-2}$), performance was averaged over three random
seeds, and parameter recovery consistently improved with decreasing $p$: from $0.576$ to $0.201$ at
$10^{-3}$, from $0.570$ to $0.256$ at $3\times10^{-3}$, from $0.573$ to $0.357$ at $10^{-2}$, and
from $0.581$ to $0.493$ at $3\times10^{-2}$, with every run meeting the $10^{-7}$ interpolation
threshold. Effective rank follows the same trend (Figure~\ref{fig:dial}).\footnote{At the grid's largest rate
($10^{-1}$) every $p$ still interpolates, but recovery sits at $0.55$--$0.62$ and effective rank at
${\approx}13.8$--$15.3$ for \emph{all} $p$: a step this coarse leaves every endpoint far from its flow
limit (Remark~\ref{rem:flowlimit}), so the sweep flattens rather than restores. The depth of the
restoration is rate-dependent; its monotonicity, at rates that resolve the flow limit, is not.}

In the envelope setup, for each $p$, the learning rate was selected which produced the best possible
recovery in the interpolating regime, according to the standard procedure for the zoo task
(Appendix~\ref{app:c2}). The resulting envelope also shows monotonic improvement and matches the fixed-step results up to
three decimal places at every $p \le 0.75$ when the learning rate is $10^{-3}$, indicating that the
per-$p$ learning rate selection does not introduce hidden effects. Figure~\ref{fig:dial} displays
the envelope sweep and provides tabulated results. Its $p{=}1$ endpoint reads $0.570$ where the
zoo's Adam row reads $0.5734$, because the dial's shared grid selects a different rate; Adam is flat
across that range (Appendix~\ref{app:details}).

\begin{figure}[t]
\centering
\includegraphics[width=0.74\linewidth]{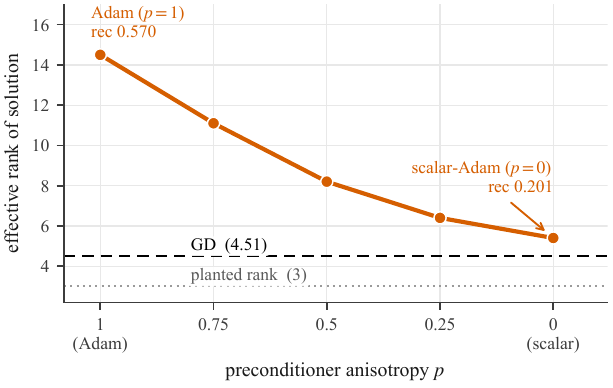}

\vspace{0.9\baselineskip}
{\small
\begin{tabular}{@{}lccccc@{}}
\toprule
$p$ & $1.0$ (Adam) & $0.75$ & $0.5$ & $0.25$ & $0.0$ (scalar) \\
\midrule
erank    & 14.5  & 11.1  & 8.2   & 6.4   & 5.4   \\
recovery & 0.570 & 0.459 & 0.348 & 0.260 & 0.201 \\
\bottomrule
\end{tabular}}
\caption{\textbf{The dial.} As Adam's preconditioner changes shape from coordinate-wise ($p{=}1$)
towards isotropic ($p{=}0$), the effective rank of the solution gracefully decreases towards that of
standard gradient descent (GD). The plotted curve is the ``envelope arm,'' in which for each $p$ we
pick the learning rate giving the best recovery from a shared grid. A ``fixed-step arm'' that
instead varies only $p$ at each individual rate of that grid agrees with it closely
(\S\ref{sec:dial}). Since the two arms differ precisely in whether the learning rate is free to
absorb the change in denominator scale that comes with $p$, their agreement points to per-coordinate
anisotropy as the driving force behind the effect.}
\label{fig:dial}
\end{figure}
This refines the finding of \citet{wilson2017marginal} into a continuous, adjustable relationship:
along this tuning axis, the coordinate-wise anisotropy of the preconditioner directly explains the
erosion of factored-model bias. The same anisotropy is also what buys Adam's speed---switching to
$p{=}0$ increases the number of steps needed for interpolation by roughly a factor of eight (from
$400$ to $3200$ on this task, comparing each end at its selected rate). The balancedness drift of
Proposition~\ref{prop:balancedness} falls with the dial, and the endpoints bracket it: the zoo's
Adam row carries $\norm{B_T}_F = 5.37$ and its scalar-Adam row $0.14$ (Table~\ref{tab:zoo}), which
ties the empirical adjustment to the conserved quantity. The monotonic trend is not tied to a
specific problem scale: on H100, the pattern holds at $n{=}128$ (recovery improves from $0.67$ to
$0.40$, effective rank from $36.6$ to $12.0$ as $p$ goes from $1$ to $0$, across five seeds) and
replicates at $n{=}40$ with ten seeds (recovery improves from $0.52$ to $0.14$ on an extended
flow-limit grid, both ends selected at $3\times10^{-4}$, under every rate the zoo grid carries,
which is why the $p{=}1$ end sits below the $0.56$--$0.58$ of Remark~\ref{rem:flowlimit}; the
$n{=}128$ endpoint reads $0.40$ against the ladder's $0.345$ for the grid reason above).

\section{Inside the equivariant class: the spectral-tail phase diagram}
\label{sec:phase}

Equivariance is binary, but behavior inside the equivariant class is not---for memoryless
spectral-separable maps, Proposition~\ref{prop:spectral} gives a coordinate: GD has
$h(\sigma)=\eta\sigma$, whereas the exact polar map has $h\equiv\eta$. Recent work highlights two
seemingly contradictory trends for Muon: a strong simplicity (spectral) bias
\citep{fan2025implicit,gronich2026implicit}, and, for deep-linear and linear-attention models, a loss
of GD's sequential simplicity bias that makes Muon prone to fitting spurious features
\citep{dragutinovic2026muon}. Our sensing and real-data experiments show both. In this
section we explain how the spectral tail of the target reconciles these trends.

\paragraph{Design.} Plant $X = \sqrt{1-\tau^2}\, X_3 + \tau E_\perp$ where $X_3$ is the rank-3
matrix and $E_\perp$ a dense rank-$(n{-}3)$ component supported on the orthogonal complements of
$X_3$'s column and row spaces, with $\norm{X_3}_F=\norm{E_\perp}_F=1$. The support condition gives
$\ip{X_3}{E_\perp}_F=0$, hence $\norm{X}_F=1$, so that $\tau\in[0,1]$ and $\tau^2$ is exactly the
fraction of target energy in the tail. Sense with $m=2\,\mathrm{dof}$ counted at the rank-$3$ target, as before.
All methods interpolate, and we sweep $\tau$. The $\tau{=}0$ target is the zoo task of
Table~\ref{tab:zoo}, but this sweep is a separate $10$-seed run on its own two-rate grid, so its
$\tau{=}0$ column reproduces that table's \emph{ordering} rather than its digits (GD $0.112$ here
vs.\ $0.131$ there, and so on).

\begin{table}[ht]
\centering
\caption{\textbf{Spectral-tail phase diagram} (recovery; $10$ seeds, \texttt{float64}). Muon is exact
at $\tau{=}0$, degrades fastest as tail energy grows, and cedes to GD in a crossover region near
$\tau^\ast \approx 0.2$ ($\approx 4\%$ tail energy). Bold marks each row's best recovery, with both
members of a within-noise tie bolded; the seed dispersions behind those ties are the $\pm1$ s.d.\
bands of Figure~\ref{fig:phase}, omitted here for space. Decay-symmetrized rows in
Appendix~\ref{app:controls}.
}
\label{tab:phase}
\vspace{2pt}
\small
\begin{tabular}{cccccl}
\toprule
$\tau$ & GD & Adam & Muon & Shampoo & regime \\
\midrule
0.00 & 0.112 & 0.542 & \textbf{0.000} & 0.334 & Muon exact \\
0.05 & 0.150 & 0.543 & \textbf{0.095} & 0.343 & Muon \\
0.10 & 0.214 & 0.555 & \textbf{0.191} & 0.375 & Muon \\
0.20 & \textbf{0.351} & 0.597 & \textbf{0.354} & 0.449 & boundary ($\approx 4\%$ tail); Muon cedes \\
0.35 & \textbf{0.549} & 0.685 & 0.580 & 0.576 & GD \\
0.50 & \textbf{0.727} & 0.776 & 0.752 & \textbf{0.723} & tail regime; GD/Shampoo within noise \\
\bottomrule
\end{tabular}
\end{table}

\begin{figure}[t]
\centering
\includegraphics[width=0.71\linewidth]{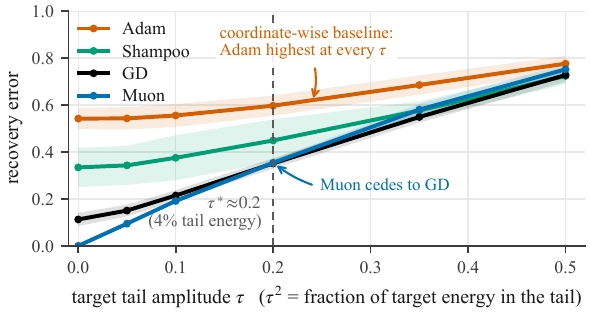}
\caption{\textbf{The spectral-tail phase diagram.} Recovery vs.\ the target's tail-amplitude
parameter $\tau$ (the tail carries $\tau^2$ of the target's energy). Adam has the worst recovery
error at every $\tau$ plotted. Within the equivariant class, Muon's aggressive equal-rate schedule is
exact at $\tau{=}0$ and crosses over to GD near $\tau^\ast{\approx}0.2$. At the largest $\tau$, all
methods are near the measurement-determined floor set by the unidentifiable tail, and GD and Shampoo
are within seed noise, so the ordering there is uninformative. Bands show $\pm1$ s.d.\ over $10$
seeds. The boundary is unchanged when every method is given the identical cosine schedule
(Appendix~\ref{app:controls}).}
\label{fig:phase}
\end{figure}

\paragraph{The two axes in the sweep.} Table~\ref{tab:phase} tabulates the sweep and
Figure~\ref{fig:phase} plots it. On the symmetry axis, Adam has the highest recovery error at every
$\tau$ in Table~\ref{tab:phase}. Along the schedule dimension of the equivariant class, the degree of
schedule aggressiveness (with Muon the most aggressive, followed by Shampoo and then GD) trades
accurate low-rank recovery, which the aggressive schedules win, against robustness to the tail,
which the moderate ones win. This creates a crossover region around $4\%$ tail energy (the
difference between GD's and Muon's performance at $\tau{=}0.2$ is within the range of the
seed-to-seed variance). Thus, Muon's preference for uniform growth across modes
\citep{kang2026uniform} is optimal if the signal is truly low-rank, but hurts performance on signals
that carry a tail which should not be fit. Hence, there is a spectral regime in which each of the
schedules is preferable, depending on the properties of the target signal.

This preference boundary does not depend on the type of decay schedule used: re-running the
experiment with cosine-decayed schedules instead of constant ones keeps each method's performance
curve as a function of $\tau$ unchanged, with the crossings left where their constant-schedule
counterparts put them. That control ran on the $3$-seed grid, whose crossing is
$\tau^\ast\approx0.35$ under both schedules; the estimate then became more precise, shifting to
approximately $0.2$ on a $10$-seed grid (see Appendix~\ref{app:c1}).

\paragraph{An analytically tractable model.} In the simplified framework of decoupled
dynamics---commonly used in studies of greedy dynamics---it is possible to precisely characterize
both sides of the trade-off and explain why the scale of initialization influences one learning
schedule but not the other.

\begin{proposition}[Greedy vs.\ equal-rate schedules: a solvable two-timescale scenario]
\label{prop:boundary}
Consider the factored dynamic system decomposed into independent, balanced scalar components
($w = uv$, with $u = v = \sqrt{w}$; aligned target; this is the idealization adopted in
\citet{arora2019implicit} and \citet{li2021towards}). Suppose target modes satisfy $s_1 > s_2 > 0$,
all initialized at $w_i(0) = w_0$ where $0 < w_0 \le s_2/2$, and let $T_1$ denote the first time the
leading mode reaches $(1-\delta)s_1$, with $\delta$ fixed such that $0 < \delta < 1 - s_2/s_1$.
(Both lower bounds are strict and necessary: $w_0 = 0$ is a stationary point, $\delta \le 0$ would
make the threshold unreachable, and the proof involves division by these terms.)
\emph{(i) Greedy schedule ($h(\sigma) = \eta\sigma$):} Each mode follows logistic growth,
$\dot w_i = 2\eta\, w_i(s_i - w_i)$, and at time $T_1$ the smaller mode remains inactive:
\[
w_2(T_1) \;\le\; C_{\delta,\rho}\, s_1 \Big(\tfrac{w_0}{s_1}\Big)^{1 - \rho}
\xrightarrow[\;w_0 \to 0\;]{} 0,
\qquad \rho \coloneqq s_2/s_1,\quad
C_{\delta,\rho} = 2\big(\tfrac{1-\delta}{\delta}\big)^{\rho}.
\]
Thus, reducing initialization enhances the timescale gap between leading and trailing modes
indefinitely.
\emph{(ii) Equal-rate schedule ($h \equiv \eta$):} All modes grow at the same rate ($\sqrt{w_i}$
increases linearly with time) and saturate at their respective $s_i$; by $T_1$, the smaller mode is
already fully fitted, $w_2(T_1) = s_2$, regardless of $w_0$. Here, the flat schedule does not let
the initialization scale differentiate between the modes.
\end{proposition}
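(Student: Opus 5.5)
The plan is to solve both scalar ODEs in closed form, read off $T_1$ from the leading mode, and substitute it into the trailing mode. Because the modes are decoupled, balanced and aligned, each component evolves independently under $L_i = \tfrac12(u_iv_i - s_i)^2$ with $u_i = v_i$. Part (i) then reduces to a logistic equation. Part (ii) reduces to a sign-driven equation in which $\sqrt{w_i}$ moves at constant speed.

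\emph{Greedy schedule.} With $h(\sigma)=\eta\sigma$, i.e.\ plain gradient flow at rate $\eta$, we have $\dot u_i = \eta(s_i - u_iv_i)v_i$ and symmetrically for $v_i$. Since $u_i=v_i=\sqrt{w_i}$, this gives $\dot w_i = u_i\dot v_i + v_i \dot u_i = 2\eta w_i(s_i-w_i)$, as claimed. The explicit solution is
\[
w_i(t)=\frac{s_i w_0 e^{2\eta s_i t}}{s_i - w_0 + w_0 e^{2\eta s_i t}} .
\]
Setting $w_1(T_1)=(1-\delta)s_1$ and solving yields
\[
e^{2\eta s_1 T_1} = \frac{(1-\delta)(s_1-w_0)}{\delta\, w_0} \;\le\; \frac{1-\delta}{\delta}\cdot\frac{s_1}{w_0}.
\]
This is finite and positive because $0<\delta<1$ and $w_0<s_1$.

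For the trailing mode I will not use the exact expression. I will use the crude bound $w_2(t)\le s_2w_0e^{2\eta s_2 t}/(s_2-w_0)\le 2w_0 e^{2\eta s_2 t}$, which follows by dropping the positive $w_0e^{2\eta s_2 t}$ term in the denominator and using $w_0\le s_2/2$; this is where the factor $2$ in $C_{\delta,\rho}$ comes from. Raising the bound on $e^{2\eta s_1T_1}$ to the power $\rho=s_2/s_1$ gives $e^{2\eta s_2T_1}\le\big(\tfrac{1-\delta}{\delta}\big)^{\rho}(s_1/w_0)^{\rho}$. Multiplying by $2w_0$ gives exactly $C_{\delta,\rho}\,s_1(w_0/s_1)^{1-\rho}$. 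This tends to $0$ as $w_0\to0$ because $\rho<1$.

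\emph{Equal-rate schedule.} With $h\equiv\eta$, the scalar polar map $\msign$ is just $\mathrm{sign}$. Hence $\dot u_i=\eta\,\mathrm{sign}(s_i-w_i)$, and likewise for $v_i$, so balance is preserved and $\tfrac{d}{dt}\sqrt{w_i}=\eta$ while $w_i<s_i$. The solution is $\sqrt{w_i(t)}=\sqrt{w_0}+\eta t$ up to the hitting time $t_i=(\sqrt{s_i}-\sqrt{w_0})/\eta$, after which $w_i\equiv s_i$.

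The step needing care is the discontinuity of $\mathrm{sign}$ at $w_i=s_i$. I will handle it by taking Filippov solutions, or equivalently the limit of the cosine-decayed discrete scheme. At $w_i=s_i$ the velocity set is $[-\eta,\eta]\ni0$ and the field points inward from both sides, so the sliding solution stays at $s_i$.

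Finally, the hypothesis $\delta<1-s_2/s_1$ gives $(1-\delta)s_1>s_2$. The leading mode passes $\sqrt{s_2}$ before it reaches $\sqrt{(1-\delta)s_1}$, so $T_1=(\sqrt{(1-\delta)s_1}-\sqrt{w_0})/\eta>t_2$. Hence $w_2(T_1)=s_2$ for every admissible $w_0$, which proves (ii).
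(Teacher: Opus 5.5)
Your proposal is correct and follows the paper's proof essentially step for step. It uses the same closed-form logistic solution and the same expression for $e^{2\eta s_1T_1}$, the same tail bound $w_2(t)\le 2w_0e^{2\eta s_2t}$ raised to the power $\rho$, and the same linear growth of $\sqrt{w_i}$ with hitting times compared via $(1-\delta)s_1>s_2$; the one difference is that you justify a fitted mode staying at $s_i$ by a Filippov sliding argument, whereas the paper invokes the convention $\mathrm{sign}(0)=0$ (from $\msign(0)=0$) to stop the mode, and both yield the same stopped trajectory.
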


Proposition~\ref{prop:boundary} (proved in Appendix~\ref{app:proofs}) shows that the decoupled-mode
model has a transient dynamics consistent with the observed phase behavior. In particular, if the
target matrix has exactly low rank, uniformly fitting all the planted modes recovers the full signal
in this model, as suggested by the equal-rate principle of \citet{kang2026uniform} for Muon. In the
case of a target matrix with a less-determined tail (sensed with $m = 2\,\mathrm{dof}$ measurements
counted at the head), the flat schedule biases the fit towards directions the measurements do not
constrain, while the greedy schedule suppresses those unconstrained modes down to the level of
$w_0^{\,1 - s_2/s_1}$. Thus, the observed transition may be determined by a combination of the
spectral properties of the target matrix, the measurement design, the initialization scale, the
training length, the schedule, and the optimization algorithm.

\section{Real data at matched training loss}
\label{sec:realdata}

While the synthetic data was constructed to have a known ground truth, real-world matrices are
expected to have spectral tails. As discussed in \S\ref{sec:phase}, this suggests that the same
mechanism should be at work in practice, with additional contributions from the tail components. We
test this hypothesis on two hyperspectral image-completion benchmarks. The first is the Indian Pines dataset
\citep{baumgardner2015indianpines}, consisting of $2000$ randomly selected pixels across $200$
spectral bands, derived from a corrected Airborne Visible/Infrared Imaging Spectrometer (AVIRIS)
cube with water-absorption bands excluded. The second is Pavia University, featuring $2000$ random
pixels and $103$ bands captured by the Reflective Optics System Imaging Spectrometer (ROSIS) sensor.
In both cases, the spectral data are approximately low-rank due to physical properties, and both
datasets were made publicly available by \citet{grana2011hsiscenes}.

We perform rank-48 factored matrix completion without weight decay, testing at two underdetermined
sampling densities per dataset, calibrated so that $m/\mathrm{dof}_{24} \approx 1.15$ and $1.9$.
Here, degrees of freedom (dof) are based on the intrinsic rank of $24$---where each singular-value
spectrum stabilizes---rather than the higher model rank of $48$. This ensures the ratio reflects how
underdetermined the true signal is (see Appendix~\ref{app:details}). The intrinsic rank is determined
directly from each scene's singular spectrum, sets a fixed sampling density for all methods, and is
not tuned using validation performance. We refer to this ratio as $m/\mathrm{dof}$ henceforth.

A key challenge in comparing optimization methods is early stopping: slower algorithms may appear
superior simply because they benefit from implicit regularization due to incomplete fitting. To
address this, we compare methods along their full (training loss, test error, effective rank)
trajectories at matched training loss levels. Specifically, for any given training loss $\ell$, we
record each method's held-out root-mean-square error (RMSE) at the first point it reaches $\ell$.
Learning rates are selected using a training-only criterion (deepest convergence followed by fewest
steps; see Appendix~\ref{app:c6}), and results are averaged over four random seeds.
That rule sets the size of every gap we report below. It selects for fast convergence within the
budget, hence for larger rates, which puts GD in its rate-invariant region while handing Adam the
worst rate on its own grid: giving each method the rate that minimizes its own held-out error would
leave GD ahead by $13\%$ rather than $43\%$. We keep the train-only rule because selecting and
evaluating on the same held-out statistics would reintroduce the bias the protocol exists to avoid,
and neither the ordering nor the rank profile changes under the other one (Appendix~\ref{app:c6}).

At matched training loss $\le 3\times10^{-5}$, at an observed entry density of $0.15$
($m/\mathrm{dof} \approx 1.15$), gradient descent (GD) achieves a held-out RMSE of
$0.0150 \pm 0.0001$ at an effective rank of $11$, while Adam reaches $0.0268 \pm 0.0005$ at rank
$28$. Thus GD's held-out error is $44\%$ lower than Adam's at equivalent fit, in all four seeds,
aligning with the rank behavior predicted by the mechanism.

The full learning trajectories offer more insight than final performance alone
(Figure~\ref{fig:realtraj}). GD's test error decreases steadily as training progresses, consistent
with a greedy, head-first fitting pattern. In contrast, Adam's test error increases during
interpolation (from $0.0251$ at training loss $3\times10^{-4}$ to $0.0268$ at $3\times10^{-5}$), as
its effective rank rises from $22$ to $28$. This shows that, on real data, deeper fitting with an
optimizer that breaks gauge symmetry can actually degrade generalization. Muon shows little
improvement in held-out error until very late stages, with effective rank remaining close to the
model limit (dropping from $46$ to $36$)---a real-data manifestation of the tail behavior described
in \S\ref{sec:phase}.

The performance gap narrows as sampling density increases, as predicted by the selection mechanism:
the advantage of GD reduces from $+44\%$ at $m/\mathrm{dof} = 1.15$ to $+28\%$ at
$m/\mathrm{dof} = 1.9$ (consistent across all four seeds), and appeared to fall to nearly zero by
$m/\mathrm{dof} \approx 3$ in an earlier interpolation-based evaluation, which we do not tabulate
because the matched-loss protocol supersedes it. As more
data become available, the solution becomes better constrained, reducing the impact of interpolant
selection---a transition noted in \S\ref{sec:boundary}.

\paragraph{Validation and a second benchmark.} The observed effect is neither an artifact of where
the comparison is taken nor specific to one dataset. Read at the stricter threshold
($\le 10^{-5}$, the level Table~\ref{tab:realdata} reports), the same Indian Pines runs give
$+43.0\%$ at $m/\mathrm{dof} \approx 1.15$ and $+27.6\%$ at $1.9$, against the $+44\%$ and $+28\%$
quoted above at $3\times10^{-5}$: the gap narrows slightly as the fit deepens, and does not turn
on the choice of stopping level.

\begin{figure}[t]
\centering
\includegraphics[width=\linewidth]{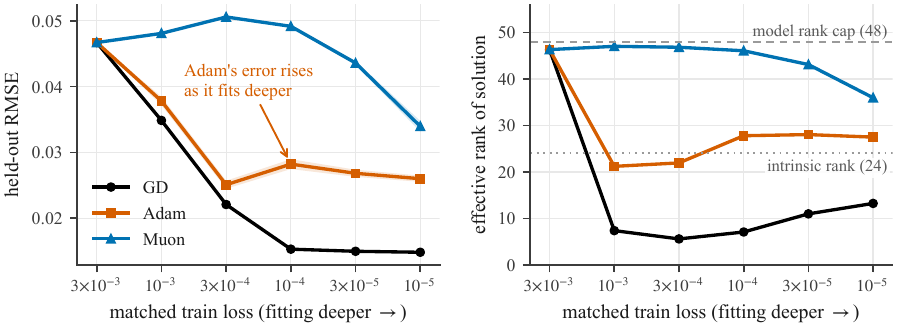}
\caption{\textbf{Performance as a function of training loss on real data} (Indian Pines,
$m/\mathrm{dof}\approx1.15$, $4$ random seeds, GPU \texttt{float64}; learning rates chosen via the
train-only rule in Appendix~\ref{app:c6}). This is the corresponding row of
Table~\ref{tab:realdata}, read along its whole path rather than at its endpoint alone. Gradient
descent (GD) sits below Adam at every matched training loss beyond the shared starting point, and
the two are compared only at equal training loss, so no fitted run is ever set against an unfitted
one. \emph{Left panel:} held-out error; bands are $\pm1$ s.d.\ over the $4$ seeds, hidden inside the
line width wherever the seeds agree---the whole path for GD---and opening up only where they do not,
as Adam's do once overfitting begins. \emph{Right panel:} effective rank. GD settles between $5.6$
and $13.2$; Adam climbs to $28$, against the scene's intrinsic rank of $24$; Muon tracks the rank
cap until convergence, then drops sharply.}
\label{fig:realtraj}
\end{figure}

A second independent hyperspectral dataset, Pavia University, matched to the same two density
levels, exhibits a consistent trend: $+43.8\%$ improvement at $m/\mathrm{dof} \approx 1.15$ and
$+22.9\%$ at $m/\mathrm{dof} \approx 1.9$ (results consistent across all $4$ seeds; see
Table~\ref{tab:realdata}). Muon does not outperform GD on either dataset or density, and its
effective rank stays near the model limit ($46$--$47$ of a capacity of $48$) for most of the
matched-loss path, declining only in its final levels, earlier on Pavia than on Indian
Pines---the tail regime of \S\ref{sec:phase}. On Indian Pines, Muon
is actually worse than Adam, the opposite effect to
the one observed in the synthetic experiments (Table~\ref{tab:phase}), where Adam was significantly
worse at every $\tau$; on Pavia, where that collapse does run to completion (rank $10$ and $8$ of
$48$), Muon lands back ahead of Adam at both densities.
While being equivariant influences which mechanisms a method can access, it
does not guarantee improvements in performance over the alternatives in all spectral settings.
Muon's persistently high effective rank suggests that it is the schedule dimension that determines
the behavior of a method here, not the gauge. Scalar-Adam is absent from the table because its rate
grid here was undersampled and it does not reproduce the phenomenon (Appendix~\ref{app:c6}). The
GD--Adam gap carries the main matched-loss conclusion of this section (Table~\ref{tab:realdata}).

\begin{table}[ht]
\centering
\caption{\textbf{Matched-loss recovery on two hyperspectral datasets} (held-out RMSE
$\times10^{-2}$, mean $\pm$ standard deviation over $4$ random seeds, evaluated at matched training
loss $\le10^{-5}$, GPU \texttt{float64}; learning rates as in Appendix~\ref{app:c6}). The reduction
column reports the percent improvement of GD over Adam, as
$(\mathrm{Adam}-\mathrm{GD})/\mathrm{Adam}$ using mean performance over seeds; gradient descent (GD)
outperforms Adam in every individual seed of every entry. In parentheses, Muon's effective rank at
the deepest point in the fit (after the collapse typical of the late stages), given a model capacity
of $48$.}
\label{tab:realdata}
\vspace{2pt}
\small
\begin{tabular}{llcccc}
\toprule
dataset & $m/\mathrm{dof}_{24}$ & GD & Adam & RMSE reduction & Muon (rank) \\
\midrule
\multirow{2}{*}{Indian Pines} & $1.15$ & $\mathbf{1.481 \pm 0.013}$ & $2.600 \pm 0.047$ & $\mathbf{+43.0\%}$ & $3.397 \pm 0.082$ (36) \\
                              & $1.9$  & $\mathbf{1.244 \pm 0.012}$ & $1.718 \pm 0.035$ & $+27.6\%$ & $3.210 \pm 0.122$ (35) \\
\multirow{2}{*}{Pavia Univ.}  & $1.15$ & $\mathbf{0.819 \pm 0.026}$ & $1.458 \pm 0.123$ & $\mathbf{+43.8\%}$ & $1.260 \pm 0.121$ (10) \\
                              & $1.9$  & $\mathbf{0.615 \pm 0.010}$ & $0.798 \pm 0.023$ & $+22.9\%$ & $0.770 \pm 0.017$ \phantom{0}(8) \\
\bottomrule
\end{tabular}
\end{table}

\section{Repairing our own optimizer using the criterion}
\label{sec:repair}

FlowAdam \citep{singh2026flowadam} introduces clipped gradient-flow velocity into Adam's momentum
mechanism, aiming to incorporate the dynamics responsible for bias as described in the memoryless,
idealized form of Theorem~\ref{thm:transfer}---though the actual stateful hybrid version falls
outside the theorem's formal reach. Without weight decay, however, it only partially closes the
performance gap to gradient descent (GD), reducing the recovery error from $0.573$ to $0.347$,
compared to GD's $0.131$. The symmetry criterion exposes the flaw in the design of this algorithm:
coordinate-wise velocity clipping, $v \mapsto \mathrm{clip}(v, -c, c)$, acts independently on each
coordinate and so breaks gauge symmetry, by Proposition~\ref{prop:coordwise}. In short, the
component designed to restore gauge-consistent behavior breaks it by the very act of being there.

By design, the clipping is the binding constraint: as its threshold $c \to \infty$ the clip switches
off and the modified dynamics reduce to the underlying gradient flow. The appropriate way to modify
the clipping is to restrict the whole velocity vector's norm, not individual coordinates, which
constrains magnitude without altering direction. It is this correction that brings FlowAdam's bias
to match the level of GD in the continuum limit (as $c \to \infty$ it recovers at $0.113$, which is
GD's own value at its smallest interpolating rate in Table~\ref{tab:lrcurves}, versus the $0.131$ of
GD's benchmark row). Applied alone, it improves the recovery error by $37\%$ ($0.347 \to 0.220$,
this run stopping at a training loss of $6\times10^{-7}$ rather than the $10^{-7}$ threshold, so it
is a deep fit and not an interpolating reading; Appendix~\ref{app:details}).
The clip is all that this repairs, though: the surrounding Adam step still preconditions per
coordinate, so the method as a whole remains gauge-breaking, and $0.220$ is where that leaves it.
Combined with the preconditioner-anisotropy dial from \S\ref{sec:dial} (making it FlowAdam-$p$), it
gives the best recovery among the Adam-type variants we test ($0.169$ versus Adam's $0.573$), with
the flow contributing a further $+15.9\%$ over the dial alone ($0.169$ vs.\ $0.201$) at the same
learning rate. Furthermore, its effective rank ($4.8$--$5.4$ across different random seeds) is the
closest among all the considered Adam-type algorithms to GD's $4.51$.

The $+15.9\%$ improvement is achieved at the same learning rate (both at $10^{-3}$, as per the
dial's anchoring procedure in \S\ref{sec:dial}), and should not be viewed as an improvement in the
asymptotic regime: on finer grids with smaller learning rates the two methods converge empirically,
the dial alone reaching $0.143$ ($n{=}40$)~/~$0.40$ ($n{=}128$) while FlowAdam-$p$ achieves nearly
identical results at $0.148$~/~$0.42$. That $n{=}40$ cell is the slowest to fit and clears the
interpolation bar only past the sweep's $3\times10^4$ steps; continued to a true $10^{-7}$, all ten
seeds cross by $6\times10^4$ and read the same $0.1475 \pm 0.0357$.
The evidence therefore points to a consistent improvement at
fixed learning rates rather than a distinct limiting solution, which is what keeps the
anchor-protocol comparison central. The
improvement is not due to early stopping: extending training to $1.2\times10^5$ steps shows stable
recovery values from a training loss of $10^{-4}$ down to the stringent $10^{-7}$ interpolation
threshold (mean $0.1694 \to 0.1691$, with individual seeds shifting by no more than
$3\times10^{-4}$; Appendix~\ref{app:c8}). This trajectory---from coordinate-wise clipping to an
unexplained performance ceiling, then to a global-norm correction---mirrors, on a smaller scale, the
broader validation of the paper's symmetry criterion.

\subsection{Where the bias succeeds and where it does not}
\label{sec:boundary}

Implicit bias lacks a separate, adjustable strength parameter. Across the three dimensions explored
in the FlowAdam setup of \S\ref{sec:repair} (varying observation density, signal conditioning, and
explicit regularization, as per Appendix~\ref{app:details}), a certain pattern emerges regarding the
advantage of flow-based or equivariant bias over a well-chosen explicit weight decay: the former
demonstrates clear benefits only when the optimal amount of regularization is relatively low and
structured. When the signal is dense and well-conditioned, the method consistently outperforms its
Adam equivalent at the same L2 placement, achieving gains of around $3$--$6\%$ for all five random
seeds in our experiments (results not tabulated). On the other hand, in the more difficult regime,
where strong regularization is necessary (e.g.\ with sparse observations or ill-conditioned,
low-rank targets), using the flow hurts performance compared to training with Adam directly.

Three more limitations stem from the theoretical properties of the described procedure rather than
from empirical observation.
First, phenomena such as grokking \citep{power2022grokking} likely require persistent forces that
have no analogue in the geometric flow, such as weight decay, to bias the trajectory after task
gradients have vanished.
Second, in physics-informed neural networks (PINNs) \citep{raissi2019pinn}, stiff problems have
gradients whose magnitudes differ vastly across coordinates \citep{wang2021pathologies}. There,
anisotropic (Adam-style) optimization is more beneficial than isotropic (SGD-style) optimization, as
recently advocated in $\ell_\infty$ geometry \citep{xie2024implicit,gronich2026implicit}. Third,
LoRA fine-tuning \citep{hu2022lora} factors the adapter weight matrix into lower-rank matrices, i.e.\
it introduces a rank constraint. By fixing the rank rather than learning it, LoRA breaks the
mechanism discussed in this paper; it may, however, still find better solutions inside the
lower-dimensional subspace if they exist.

\section{Discussion and limitations}
\label{sec:discussion}

\paragraph{Scale.} The original experiments were deliberately modest in scope, using deterministic
twins, matrices of modest size ($40\times40$ to $2000\times200$) and transformers of up to six
layers, since establishing the mechanism required focusing on determinism and interpolation
discipline rather than on raw size. To ground these claims, we scaled up a replication experiment on
H100 GPUs, varying the problem size from $n{=}64$ to $n{=}256$ and using ten different random seeds
(Appendix~\ref{app:c9}). We tested attention twins with four and six layers and one character
language model (Appendix~\ref{app:c5}), and applied the method to a second hyperspectral dataset
(\S\ref{sec:realdata}). On the sensing ladder, gradient descent, scalar-Adam, and Muon significantly
outperformed the coordinate-wise Adam-family methods at every size, with Muon exact up to
$n{=}256$. On the hyperspectral scenes only the GD--Adam ordering survives: Muon loses to Adam on
Indian Pines at both densities, and scalar-Adam's grid there is undersampled
(\S\ref{sec:realdata}, Appendix~\ref{app:c6}). The attention gauge-split was generally scalable and
robust to realistic stochastic training, and similar dynamics were observed in the second dataset.

\paragraph{Two exceptions we left untuned.} The distinctive two-cluster structure in
Figure~\ref{fig:zoo} is qualified by these two exceptions. Shampoo with damping
($\lambda{=}1$) is evaluated at scale without parameter tuning, which means its absolute recovery
degrades with increasing $n$: it sits far below Adam at $n{=}40$, overtakes Adam at $n{=}64$, and
remains above it thereafter, albeit not monotonically---it dips slightly below Adam again at
$n{=}96$. The reason for this is that we kept all hyperparameters at their originally published
values for all methods and problem sizes, varying only the learning rate. This means that Adam's
$(\beta_1,\beta_2,\epsilon)$, RMSProp's decay hyperparameter, and Muon's Newton--Schulz coefficients
were kept as published. Tuning $\lambda$ in Shampoo only at the sizes where it misbehaves
would have amounted to outcome-based tuning, which \S\ref{sec:setup} advises against; hence we note
its deterioration across scales instead. A similar consideration applies to the
long-anneal variant of signum in Appendix~\ref{app:c10}, which drives a coordinate-wise method to
zero recovery. In both cases the relative performance ordering of GD, scalar-Adam, and Muon remains
the same, but taken together these two exceptions make the zoo's empty gap a statement about this
computational budget and this size range rather than about an asymptotic principle. Two further
points: the $n{=}384$ case probably has a better learning rate to be found, as Muon did not reach
its interpolating rate on the grid we ran there; and several methods' recovery values are
budget-restricted estimates, with a decent amount of room for improvement from a wider
hyperparameter search.

\paragraph{Not a prescription for practice.} With this, we think the results neither single out a
particular method to be favored in practice nor suggest that GD is somehow better than Adam. In
fair comparisons, with the same budget spent on tuning, no optimizer has been found to consistently
outperform a well-tuned Adam \citep{schmidt2021descending,dahl2023algoperf}. It is important to
note, finally, that what we compare is not how quickly a method optimizes the training objective,
but which interpolant it selects at a fixed budget. Adam has also been shown to be more robust to
issues such as gradient heterogeneity and heavy-tailed class imbalance \citep{kunstner2024heavy},
neither of which appears in this task.

\paragraph{Open theoretical directions.} Equivariance has been the geometric condition that allowed
us to analyze the GD family, but the transfer theorem only applies to the
common-scalar subclass (Remark~\ref{rem:transferscope}). While equivariance alone does not specify
the interpolant---ScaledGD is equivariant but equalizes the schedule
(Remark~\ref{rem:necessary})---the non-equivariant annealed sign descent of
Appendix~\ref{app:c10} shows that it is not strictly necessary either. Classifying interpolants thus
requires consideration of the gauge and of the spectral schedule, and two directions seem natural
for extending the theory as it stands. First, a full-dynamics theorem for the
boundary in Table~\ref{tab:phase} (and Figure~\ref{fig:phase}), along the lines of
\citet{kang2026uniform}, would characterize the equivariant greedy-schedule regime in which the
gradient-flow bias is maintained, with ScaledGD (schedule-equalizing) and signum (flattening,
large-step) as its extremes along the two axes. Proposition~\ref{prop:boundary} captures this in a
decoupled two-factor model, leaving aside the interesting possibility of interacting schedules.

Second, a momentum-invariance lemma. Since the first-moment EMA is
linear and respects the gauge, showing that it preserves the limit point of the flow would
generalize the observed small-step trend for scalar-Adam into an exact equivalence with gradient
flow, and would extend the transfer theorem to the momentum-augmented methods it currently sets
aside.
\paragraph{Stochasticity.} Our twin models are both full-batch and deterministic. Mini-batch noise
adds a diffusion term whose interaction with the gauge (and with Muon's $\msign$-induced chaos) we
have probed only empirically---in the character-language-model twins of \S\ref{sec:attention} and
the noise-twin ladder of Appendix~\ref{app:c5}---but not characterized. The quotient-geometric
framework of \citet{aladrah2026implicit} is a natural language for that interaction.
\paragraph{Further consequences.} The consequence for alignment-based model merging is in
\S\ref{sec:attention}; Appendix~\ref{app:further} takes up what the criterion implies for
invariant-optimizer design.

\paragraph{Reproducibility.} The experiments seeking to understand the mechanism ran on CPU in
minutes. Only the H100 replication ladder (Appendix~\ref{app:c9}) and the bigger
attention twins needed a GPU. The code, random seeds, and all control conditions
(including schedule symmetrization, model selection criteria, learning-rate curves, initialization
variations, and noise-twin scaling) are available at
\url{https://github.com/idevender/loss-basis-adam}.

\subsubsection*{Acknowledgments}
I thank Tarun Sheel for the collaboration on FlowAdam that motivated this line of work, and for his
support. This research was enabled in part by computing resources provided by ACENET and the Digital
Research Alliance of Canada (the Nibi cluster).

\section*{Use of Large Language Models}
LLMs were used for copy-editing only. All questions, proofs, experiments, and numbers are the
author's own and were checked against the derivations and the experiment logs.

{\small

}

\clearpage
\appendix

\section{Inside the equivariant class}
\label{app:structure}

Theorem~\ref{thm:structure} pins down the shape of an equivariant memoryless rule. This appendix
collects what lives inside that shape: the spectral coordinate, the extension to stateful rules and
to deeper factorizations, and what happens to balancedness.

\begin{proposition}[The spectral transfer function: the within-class coordinate]
\label{prop:spectral}
Suppose $H = \psi(GG^\top)$, where $\psi$ is a scalar-valued spectral function applied via functional
calculus to the symmetric positive-semidefinite Gram matrix $GG^\top$---that is, $\psi$ acts as
multiplication by $\psi(\lambda)$ on each corresponding eigenspace. Given the singular value
decomposition $G = \sum_i \sigma_i u_i v_i^\top$, it follows that
\[
\Phi(G) \;=\; \sum_i h(\sigma_i)\, u_i v_i^\top,
\qquad \text{where} \quad h(\sigma) \coloneqq \psi(\sigma^2)\,\sigma.
\]
\end{proposition}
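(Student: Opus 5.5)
We compute $\Phi(G)=\psi(GG^\top)\,G$ directly from the singular value decomposition of $G$; the claim is pure functional calculus, so no earlier result beyond Theorem~\ref{thm:structure} (which supplies the form $\Phi(G)=H(GG^\top)G$) is needed. Write the compact SVD $G=\sum_{i=1}^{r}\sigma_i u_i v_i^\top$ with $\sigma_i>0$, orthonormal left vectors $u_1,\dots,u_r\in\R^n$ and orthonormal right vectors $v_1,\dots,v_r\in\R^k$. The first step is to read off the spectral decomposition of the Gram matrix,
\[
GG^\top=\sum_{i=1}^{r}\sigma_i^2\,u_iu_i^\top ,
\]
and to complete $\{u_i\}$ to an orthonormal basis of $\R^n$ by vectors $u_{r+1},\dots,u_n$ spanning $\ker(GG^\top)=\ker G^\top$. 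This exhibits the eigenspaces of $GG^\top$: for each distinct value $\lambda$, the eigenspace $E_\lambda$ is spanned by those $u_i$ with $\sigma_i^2=\lambda$, and $E_0$ is the complement.

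The second step applies the functional calculus exactly as defined in the statement, namely $\psi(GG^\top)=\sum_\lambda \psi(\lambda)P_\lambda$ with $P_\lambda$ the orthogonal projector onto $E_\lambda$. The key remark is that this is independent of the choice of orthonormal basis within each eigenspace, which handles repeated singular values. Writing each projector in our basis, $P_\lambda=\sum_{i:\sigma_i^2=\lambda}u_iu_i^\top$, and then using $\psi(GG^\top)u_i=\psi(\sigma_i^2)u_i$, we get
\[
\psi(GG^\top)\,G=\sum_i \sigma_i\,\psi(GG^\top)u_i\,v_i^\top=\sum_i \psi(\sigma_i^2)\,\sigma_i\,u_iv_i^\top=\sum_i h(\sigma_i)\,u_iv_i^\top .
\]

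The only delicate point, and the one I expect to need a sentence of care, is the zero eigenspace together with well-definedness. On $E_0$ the factor $\psi(0)$ may be arbitrary or even undefined, for instance for the polar map $\psi(\lambda)=\lambda^{-1/2}$. Since the columns of $G$ lie in $\operatorname{range}(G)=E_0^\perp$, the projector $P_0$ annihilates $G$, so the value $\psi(0)$ never enters the product. One should therefore interpret $\psi$ on the active subspace, matching the convention already used for Shampoo with $\lambda=0$ in Proposition~\ref{prop:equivariant} and $\msign(0)=0$. The other half of well-definedness is that the right-hand side $\sum_i h(\sigma_i)u_iv_i^\top$ does not depend on the non-unique SVD: within a block of equal $\sigma$ it equals $h(\sigma)/\sigma$ times the corresponding block of $G$. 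Finally, I would note the two examples from the statement as sanity checks, reading them off with $\eta$ absorbed into $\psi$ since the statement's $h$ carries no learning rate. GD has $\psi\equiv\eta$, which gives $h(\sigma)=\eta\sigma$. The exact polar map has $\psi(\lambda)=\eta\lambda^{-1/2}$, which gives $h\equiv\eta$ and hence $\eta\,\msign(G)$.
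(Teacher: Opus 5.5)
Your proposal is correct and follows essentially the same route as the paper. The paper writes $GG^\top=\sum_i\sigma_i^2u_iu_i^\top$ and expands $\psi(GG^\top)=\sum_i\psi(\sigma_i^2)u_iu_i^\top+\psi(0)P_0$, then drops the $\psi(0)$ term because $P_0G=0$; your additional care about repeated singular values and about the right-hand side not depending on the choice of SVD is sound and only makes explicit what the paper leaves implicit.
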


This means the update progresses along the singular directions of the gradient itself, scaling each
by a scalar factor $h(\sigma_i)$, which we refer to as the spectral transfer function. (When an
inverse or root of a singular positive-semidefinite matrix appears later, it is interpreted as
acting on the support, with zero assigned on the nullspace.) For memoryless one-step methods,
standard gradient descent (GD) corresponds to $h(\sigma) = \eta\sigma$, while
common-scalar-preconditioned GD gives $h_t(\sigma) = \eta\sigma/a_t$, equivalent to a time-rescaled
GD (Theorem~\ref{thm:transfer}). When applied to a single input matrix, both the exact polar map and
support-restricted undamped one-step Shampoo fully flatten nonzero singular values, resulting in
$h \equiv \eta$. In contrast, accumulated damped Shampoo, momentum-based methods, and Muon depend on
internal state; thus, the above expressions serve as useful one-step approximations rather than full
descriptions of their dynamics.

The fractional family introduced by \citet{dong2026muonp}, which
replaces each $\sigma_i$ with $\sigma_i^{\,q}$, directly implements $h(\sigma) = \eta\sigma^{q}$.
Their exponent $q$ is not our dial $p$, and it runs the other way: $q{=}1$ is GD and $q{=}0$ is the
polar map, whereas $p{=}1$ is Adam and $p{=}0$ is the shared scalar.

Two implications of Proposition~\ref{prop:spectral} are particularly relevant below.

\emph{(i)}~The ratio $h(\sigma)/h(\sigma_{\max})$, measuring relative amplification of small versus
large singular values, defines the ``aggressiveness'' axis explored in \S\ref{sec:phase}. However,
this quantity alone does not fully rank practical optimizers: for instance, the one-step
idealizations of the exact polar map and undamped Shampoo both yield $h \equiv \eta$, yet in actual
trajectories, accumulation and damping cause Shampoo to flatten singular values less aggressively
than Muon---an ordering (GD $<$ Shampoo $<$ Muon) based on empirical observation rather than the
idealized form.

\emph{(ii)}~Near rank-deficient matrices, the Lipschitz constant of $\Phi$ between nearby matrices is
at least $\lvert h(\delta)\rvert/\delta$, as seen when comparing diagonal matrices
$\mathrm{diag}(1,\pm\delta)$ for small $\delta$ (distinct from Adam's stabilizer $\epsilon$). For the
exact $\msign$ operation, this quotient diverges as $\delta\to0$ whenever $h(0^+)>0$, which includes
the exact polar map and support-restricted one-step Shampoo. In contrast, GD and
scalar-preconditioned GD maintain bounded ratios ($\eta$ and $\eta/a_t$, respectively), provided
$a_t$ remains bounded away from zero; rules where the scaling vanishes with the gradient norm (e.g.\
$a = \norm{G}_F$) are excluded.

The practical finite Newton--Schulz iteration is equivariant under real arithmetic and Lipschitz
continuous. Using the standard normalization $X_0 = M/(\norm{M}_F + \epsilon_{\mathrm{NS}})$ (with
small stabilizer $\epsilon_{\mathrm{NS}}$, NS denoting Newton--Schulz) and first coefficient
$a = 3.4445$ as in \citealp{jordan2024muon} (see Appendix~\ref{app:details}), after five iterations
the slope at zero singular values in original units becomes
$a^5/(\norm{M}_F + \epsilon_{\mathrm{NS}})$, conditioned on the global scale rather than a fixed
univariate $h$. As a result, the finite map can greatly enhance small singular values while not
introducing the discontinuity of the exact counterpart. This behavior is fundamentally different
from the effect of breaking the gauge symmetry, and the two can be distinguished experimentally
using the twin protocol of \S\ref{sec:attention}. Remark~\ref{rem:stateful} generalizes this
observation to state-dependent update rules.

\begin{remark}[Stateful rules and right preconditioners]
\label{rem:stateful}
With state-dependent update rules, the transformation properties of the various objects are still
consistent: momenta transform covariantly ($M_t \mapsto M_tQ$), Frobenius norms and left-side
accumulators $L_t = \sum_u G_uG_u^\top$ remain invariant, while right-side quantities---such as
$R_t = \sum_u G_u^\top G_u$, or factor Gram matrices $U^\top U$ and $V^\top V$---transform via
conjugation as $R_t \mapsto Q^\top R_tQ$. A sufficient construction preserving these properties is
\[
\Delta = H(\mathcal I_t)\, Z_t\, K(R_t),
\]
where $Z_t$ is a covariant quantity, like $G_t$ or $M_t$, $\mathcal I_t$ is gauge-invariant, and $K$
is a spectral function. This captures the covariant structures present in momentum-based gradient
descent ($Z_t = M_t$), scalar-Adam, Muon ($Z_t = M_t$), Shampoo, and ScaledGD. It does not capture
all stateful update rules respecting equivariance, however, only those composed as a product of
per-factor update rules and their gauge-invariant couplings.
\end{remark}

\begin{corollary}[Deeper factorizations]
\label{cor:depth}
Assume now a composite weight matrix $W = U_1U_2\cdots U_D$ of depth $D$ (written $D$, not $L$,
which is the objective), so that the loss is invariant under the transformations at each
factorization interface,
\[
(U_i, U_{i+1}) \mapsto (U_iQ_i,\, Q_i^\top U_{i+1}),
\qquad Q_i \in \OO(k_i),
\]
with $Q_i$ orthogonal. Then gradients transform covariantly at each interface, with a change of
basis on the side each factor acts. As a consequence, the earlier covariance and non-covariance
conditions on the update rule carry over to each interface individually, with the caveat of
transposition for the gradient terms. Any fixed nonlinear coordinate-wise update rule will fail to
be equivariant at any interface where the internal dimension satisfies $k_i \ge 2$. On the other
hand, if all factor gradients are scaled by the same positive scalar $a(t)$, then the dynamics for
all $D$ factors reduce to a single time-rescaled gradient flow, and the transfer theorem continues
to hold under its original conditions. Equivariance at individual interfaces therefore does not by
itself imply the transfer. The gauge freedom of the type discussed in \S\ref{sec:attention} is
obtained when $D=2$.
\end{corollary}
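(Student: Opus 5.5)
I would prove Corollary~\ref{cor:depth} by redoing, interface by interface, the two-factor argument of Lemma~\ref{lem:grad-cov} and then applying the earlier propositions to each interface in turn. Write $L(U_1,\dots,U_D) = f(U_1\cdots U_D)$ and fix an interface $i$ with $Q_i\in\OO(k_i)$. The transformed tuple changes only $U_i$ and $U_{i+1}$, and the product is unchanged because $Q_iQ_i^\top = I$; this is the invariance claim. For covariance I would use the chain rule. With $G = \nabla f(W)$, $A_j = U_1\cdots U_{j-1}$ and $B_j = U_{j+1}\cdots U_D$, the standard identity is $\nabla_{U_j}L = A_j^\top G B_j^\top$. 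Under the interface-$i$ move, $A_j$ and $B_j$ are unchanged for $j\notin\{i,i+1\}$, so those gradients are invariant. For $j=i$, $B_i$ becomes $Q_i^\top B_i$, which gives $\nabla_{U_i}L \mapsto (\nabla_{U_i}L)\,Q_i$, a right action. For $j=i+1$, $A_{i+1}$ becomes $A_{i+1}Q_i$, which gives $\nabla_{U_{i+1}}L\mapsto Q_i^\top \nabla_{U_{i+1}}L$, a left action. This is the covariance statement, including the ``transposition caveat'': the gradient of $U_{i+1}$ transforms as the transpose of a right action.

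Next I would transfer the equivariance conditions. A right-acting rule such as $\Phi(GQ)=\Phi(G)Q$ applied to $U_i$ is the two-factor condition verbatim. For $U_{i+1}$ I would pass to $U_{i+1}^\top$, whose gradient is $(\nabla_{U_{i+1}}L)^\top$ and transforms by right multiplication by $Q_i$. So any rule built from the operations of Proposition~\ref{prop:equivariant} (applied to the transpose where needed) is equivariant at that interface. For the negative claim I would invoke Proposition~\ref{prop:coordwise} directly. A fixed entrywise $\phi$ commutes with transposition, so equivariance at interface $i$ with $k_i\ge2$ forces $\phi(GQ)=\phi(G)Q$ for all $G$ of shape $d\times k_i$ and all $Q\in\OO(k_i)$, hence $\phi$ is linear. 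The only point to check is that every $G$ of that shape actually occurs as a gradient, or at least a rich enough family of them. I would sidestep this the way the two-factor case does, by stating the result as a property of the update map itself.

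Finally, the transfer claim follows because $\dot U_j = -\nabla_{U_j}L/a(t)$ for all $j$ is exactly $\dot\theta = -\nabla L(\theta)/a(t)$ with $\theta=(U_1,\dots,U_D)$. Theorem~\ref{thm:transfer} never used the two-factor structure, only the shared scalar $a$ and the gauge-invariance of the statistics determining it, so it applies verbatim. Invariance of those statistics under all interface moves simultaneously follows from the covariance above, since Frobenius norms are preserved. For the remark that interface equivariance does not imply transfer, a counterexample suffices: per-factor scalars $a_j(t)$, for instance each factor normalized by its own gradient Frobenius norm, remain equivariant at every interface but break the single-time reparameterization when the ratios $a_j/a_{j'}$ are nonconstant. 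The $D=2$ case is Lemma~\ref{lem:grad-cov} with $U_2=V^\top$, which is the attention dictionary. The main obstacle I expect is keeping the left/right bookkeeping consistent for the $U_{i+1}$ factor, and deciding whether to phrase the counterexample rigorously or to leave that clause as the observation that the hypothesis of Theorem~\ref{thm:transfer} fails.
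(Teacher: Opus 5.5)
Your proposal is correct and is essentially the paper's proof. The same prefix/suffix chain rule $\nabla_{U_j}L = A_j^\top G B_j^\top$ gives a right action on $\nabla_{U_i}L$ and a left action on $\nabla_{U_{i+1}}L$, the earlier propositions then apply (transposed for the left action), and your treatment of the untouched factors, the transfer clause, and the per-factor-scalar counterexample only makes explicit what the paper leaves implicit. To make that counterexample rigorous, use the interface balance $U_i^\top U_i - U_{i+1}U_{i+1}^\top$ (the depth-$D$ analogue of Proposition~\ref{prop:balancedness}): every time-reparameterized gradient flow conserves it, but under per-factor scalars its derivative is proportional to $1/a_i - 1/a_{i+1}$ times a generically nonzero matrix, so even constant unequal scalars break the transfer and nonconstancy of the ratios $a_j/a_{j'}$ is not the right dividing line.
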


\begin{proposition}[Balancedness: a special property of GD-type flows, not a general invariance]
\label{prop:balancedness}
For standard gradient flow, and in particular in the setting of Theorem~\ref{thm:transfer}, the
quantity
\[
B_t = U_t^\top U_t - V_t^\top V_t
\]
is conserved during training. For diagonally preconditioned flows of the form
$\dot U = -D_U \odot \nabla_U L$ (and analogously for $V$), it is conserved when $D_U$ and $D_V$ are
one and the same positive scalar, and drifts otherwise at a rate set by how far each departs from
that common scalar. Agreeing up to a factor is not enough: $D_U \equiv 1$, $D_V \equiv 2$ already
breaks the cancellation. Nevertheless,
conservation of $B_t$ is not necessary for good low-rank recovery: neither Muon nor Shampoo
conserves $B_t$, and both carry large final imbalances $\norm{B_T}_F$ ($1.47$--$1.65$) next to
gradient descent's $0.06$ (Table~\ref{tab:zoo}), yet both still recover well on the zoo task. The
imbalance present at the $10^{-3}$ initialization is negligible at this precision, so those final
values are accumulated drift. In particular, this suggests that balancedness is a property of flows
with common-scalar preconditioning, not of equivariant optimization procedures in general.
\end{proposition}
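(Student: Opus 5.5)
The plan is to differentiate $B_t$ along the flow and show that the derivative vanishes identically because of the gauge symmetry of $L$. For gradient flow, $\dot U = -\nabla_U L$ and $\dot V = -\nabla_V L$. With $G = \nabla f(UV^\top)$ we have $\nabla_U L = GV$ and $\nabla_V L = G^\top U$, so
\[
\dot B_t = \dot U^\top U + U^\top \dot U - \dot V^\top V - V^\top \dot V
= -\bigl(V^\top G^\top U + U^\top G V\bigr) + \bigl(U^\top G V + V^\top G^\top U\bigr) = 0 .
\]
Conceptually, this is the Noether identity obtained by differentiating $L(Ue^{tS},Ve^{tS})=L(U,V)$ at $t=0$ for skew $S$. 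That route, however, only yields the skew part, while the full symmetric identity comes from the $\mathrm{GL}(k)$ invariance of Lemma~\ref{lem:max-isometric}. I will therefore use the direct computation, and mention the $\mathrm{GL}(k)$ interpretation only as a remark.

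For the setting of Theorem~\ref{thm:transfer}, the same computation carries a common factor $1/a(t)$, so $\dot B_t = 0$ almost everywhere. Since $B_t$ is locally absolutely continuous, being a polynomial in an absolutely continuous $\theta$, it is constant. Alternatively, the conclusion follows directly from the path equivalence in Theorem~\ref{thm:transfer}, because $B$ is a function of the point on the path.

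For the diagonally preconditioned flow, I would write
\[
\dot B = -(D_U\odot GV)^\top U - U^\top (D_U\odot GV) + (D_V\odot G^\top U)^\top V + V^\top(D_V\odot G^\top U).
\]
I then decompose $D_U = c + E_U$ and $D_V = c + E_V$ for a reference scalar $c$. The $c$ parts cancel by the computation above, leaving $\dot B$ linear in $(E_U, E_V)$, with norm bounded by
\[
2\bigl(\norm{E_U}_\infty\norm{GV}_F\norm{U}_F + \norm{E_V}_\infty\norm{G^\top U}_F\norm{V}_F\bigr).
\]
This bound is the ``drift at a rate set by the departure'' statement. The counterexample $D_U\equiv1$, $D_V\equiv2$ gives $\dot B = U^\top GV + V^\top G^\top U$, which is nonzero, for instance, for $n=k=1$ with $U=V=1$ and $G=1$, where $\dot B = 2$. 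Sufficiency of equality to a common positive scalar is the first case.

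The remaining claims, that Muon and Shampoo fail to conserve $B$ and that conservation is not necessary for recovery, are empirical and will be cited from Table~\ref{tab:zoo}. To give nonconservation a theoretical anchor, I would exhibit a one-step polar update with $k=1$ for which $\norm{U}^2-\norm{V}^2$ changes. The main obstacle is making ``drifts otherwise'' precise, since a non-scalar $D$ can accidentally give $\dot B=0$ at particular points. I would therefore state the drift as the explicit bound above, together with the generic nonvanishing witnessed by the example, rather than claiming drift at every point.
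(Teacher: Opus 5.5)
Your proposal is correct and follows the paper's proof essentially step for step. It uses the direct computation showing $\tfrac{d}{dt}(U^\top U)=\tfrac{d}{dt}(V^\top V)$ under gradient flow, the common factor $1/a(t)$ in the transfer setting, the split $D_U=c+E_U$, $D_V=c+E_V$ giving the same Frobenius bound on $\dot B$, and reads the Muon/Shampoo claims off Table~\ref{tab:zoo}. Your extras are correct and only sharpen what the paper leaves implicit: the absolute-continuity step, the explicit witness $\dot B=2$ for $D_U\equiv1$, $D_V\equiv2$, and the observation that conservation of $B$ comes from the symmetric generators of $\mathrm{GL}(k)$ rather than from the orthogonal gauge alone.
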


\begin{remark}[Comments on discretization and step-size choice]
\label{rem:flowlimit}
Theorem~\ref{thm:transfer} concerns continuous-time dynamics, and while the corresponding
discrete-time statements hold under smoothness and stability assumptions, an explicit Euler
discretization of the memoryless common-scalar flow is only guaranteed to track the trajectory up to
an $O(\eta)$ error over a fixed time interval. In practice, this does not mean that decreasing the
step size $\eta$ always improves performance, nor does it apply directly to scalar-Adam-style
algorithms with fixed $\beta$. The learning-rate trends of Appendix~\ref{app:c2} are specific to
this task and empirical: both GD and scalar-Adam recover better at the smaller rates tested
($0.131 \to 0.113$ for GD and $0.201 \to 0.165$ for scalar-Adam), while Adam's recovery is almost
insensitive to the choice of $\eta$, staying within $0.56$--$0.58$ across its grid.
\end{remark}

\section{Proofs}
\label{app:proofs}

\subsection{Lemma~\ref{lem:max-isometric} (maximal isometric gauge)}
\begin{proof}
For $A\in\mathrm{GL}(k)$,
\[
(UA)(VA^{-\top})^\top=UA A^{-1}V^\top=UV^\top,
\]
so the full action preserves the represented matrix. If it is an isometry for every $(U,V)$, take
$V=0$ and let $U$ have an arbitrary single nonzero row $u\in\R^{1\times k}$. Then
$\norm{uA}_2=\norm{u}_2$ for every $u$, hence $AA^\top=I$. Since $A$ is square, this is equivalent to
$A\in\OO(k)$. Conversely, if $A\in\OO(k)$, both right multiplications preserve Frobenius norms, so the
action is an isometry.
\end{proof}

\subsection{Lemma~\ref{lem:grad-cov} (gradient covariance)}
\begin{proof}
Write $L(U,V) = f(UV^\top)$ and let $G = \nabla f(W)\in\R^{n\times n}$ at $W = UV^\top$. Then
$\nabla_U L = G V$ and $\nabla_V L = G^\top U$. At the transformed point $(UQ, VQ)$ the represented
matrix is unchanged, $W' = UQ(VQ)^\top = W$, so $\nabla f(W') = G$ and
$\nabla_U L(UQ, VQ) = G\,(VQ) = (GV)Q = (\nabla_U L)\,Q$; likewise
$\nabla_V L(UQ,VQ) = G^\top (UQ) = (\nabla_V L)\,Q$.
\end{proof}

\subsection{Proposition~\ref{prop:equivariant} (equivariant class)}
\begin{proof}
Throughout, primed quantities denote the run initialized at $(U_0 Q, V_0 Q)$, with the state
initialized by $\sigma_Q$ as constructed below. We show by induction that
\[
U'_t = U_t Q, \qquad V'_t = V_t Q,
\]
with the state transforming as stated. By Lemma~\ref{lem:grad-cov}, if the iterates are
gauge-related at time $t$, then the factor gradients satisfy $G'_t = G_t Q$.

\emph{(1) GD and momentum.} For GD,
\[
U'_{t+1} = U'_t - \eta G'_t = (U_t - \eta G_t)Q.
\]
Momentum buffers are linear in the gradients, so by induction (with $\sigma_Q: M \mapsto MQ$),
\[
M'_t = \beta M'_{t-1} + G'_t = M_t Q,
\]
and the update again right-multiplies by $Q$. For Nesterov, the gradient is evaluated at the
lookahead point $U_t + \beta(U_t - U_{t-1})$. In the primed run this is the same point
right-multiplied by $Q$, and Lemma~\ref{lem:grad-cov} applies there verbatim. Hence the lookahead
gradient is also covariant, and the induction goes through unchanged.

\emph{(2) Scalar-preconditioned Adam.} The first moments transform as $M'_t = M_t Q$, since the EMA
is linear. The second-moment state is the single scalar
\[
\nu_t = \beta_2\,\nu_{t-1} + (1-\beta_2)\,\mathrm{mean}\bigl(G_t^{\odot2}\bigr),
\qquad \nu_0 = 0,
\]
where the mean is taken over all entries of both factors. Because the mean is linear, it commutes
with the EMA, so this scalar equals $\mathrm{mean}(v_t)$ for the entrywise EMA $v_t$ of
$g^{\odot2}$. Storing the scalar therefore changes no number. It is also what makes the state map
literal, since the entrywise $v_t$ does not itself transform under a fixed map (squaring and
right-rotation do not commute). Now, the entry mean of $G^{\odot2}$ is $\norm{G}_F^2$ divided by the
entry count, so $\nu_t$ depends on the gradients only through their Frobenius norms. Since
$\norm{G_t Q}_F = \norm{G_t}_F$, we get $\nu'_t = \nu_t$. The bias-corrected read
$\hat\nu_t = \nu_t/(1-\beta_2^t)$ divides by a fixed scalar and is therefore invariant as well. The
state map is
\[
\sigma_Q:(M_U, M_V, \nu)\mapsto(M_UQ, M_VQ, \nu),
\]
with $\sigma_Q(0)=0$ as Definition~\ref{def:equivariance} requires. Finally, the update
$-\eta\, \hat M_t / (\sqrt{\hat\nu_t} + \epsilon)$ is a scalar multiple of a covariant matrix,
hence covariant.

\emph{(3) Muon.} With momentum $M'_t = M_t Q$ as in (1), it suffices to show that
$\msign(MQ) = \msign(M)\,Q$. Let $M = A\Sigma B^\top$ be an SVD. Then
$MQ = A \Sigma (Q^\top B)^\top$ is an SVD of $MQ$, so
\[
\msign(MQ) = A (Q^\top B)^\top = A B^\top Q = \msign(M)\, Q.
\]
Next, consider the Newton--Schulz iteration
$X_{j+1} = aX_j + b\,(X_jX_j^\top)X_j + c\,(X_jX_j^\top)^2 X_j$ with
$X_0=M/(\norm{M}_F+\epsilon_{\mathrm{NS}})$. If $X_j\mapsto X_jQ$, then $X_jX_j^\top$ is invariant,
so every term right-multiplies by $Q$, and the property is preserved at every iteration and at any
truncation. Equivariance is therefore exact in real arithmetic for the finite practical algorithm,
not only for the idealized $\msign$; finite-precision sensitivity is a separate issue.

\emph{(4) Shampoo.} The left accumulator $L_t = \sum_{u\le t} G_u G_u^\top$ is invariant, since
$G_u Q Q^\top G_u^\top = G_uG_u^\top$, while $R_t = \sum G_u^\top G_u$ transforms as
$R'_t = Q^\top R_t Q$. For any spectral function $h$ we have $h(Q^\top R Q) = Q^\top h(R)\, Q$.
Taking $h(R) = (R + \lambda I)^{-1/4}$, with the damping included since
$Q^\top R Q + \lambda I = Q^\top (R + \lambda I) Q$, we obtain
\[
\begin{aligned}
(L'_t + \lambda I)^{-1/4}\, G'_t\, (R'_t + \lambda I)^{-1/4}
&= (L_t+\lambda I)^{-1/4}\, G_t Q\, Q^\top (R_t+\lambda I)^{-1/4} Q \\
&= \big[(L_t+\lambda I)^{-1/4} G_t (R_t+\lambda I)^{-1/4}\big] Q .
\end{aligned}
\]
\end{proof}

\subsection{Proposition~\ref{prop:coordwise} (coordinate-wise rules break the gauge)}
\begin{proof}
Consider a memoryless entrywise map $\Phi(G)_{ij}=\phi(G_{ij})$. Equivariance for all $Q\in\OO(k)$
requires $\Phi(GQ) = \Phi(G)\,Q$ for all $G$. Take $k \ge 2$ and let $Q_\theta$ be a rotation by
$\theta$ in the first two coordinates. Apply the requirement to matrices $G$ supported on a single
row $(x, y, 0, \dots)$: it reads
\[
\phi(x\cos\theta - y\sin\theta) = \phi(x)\cos\theta - \phi(y)\sin\theta,
\qquad
\phi(x\sin\theta + y\cos\theta) = \phi(x)\sin\theta + \phi(y)\cos\theta
\]
for all $x, y, \theta$. Taking $x=y=0$ in the first identity gives
$\phi(0) = \phi(0)(\cos\theta-\sin\theta)$ for every $\theta$, hence $\phi(0)=0$ (take
$\theta=\pi/2$). Setting $y=0$ in both identities then gives
$\phi(x\cos\theta) = \phi(x)\cos\theta$ and $\phi(x\sin\theta) = \phi(x)\sin\theta$; as $\theta$
ranges over $[0,2\pi)$ its cosine and sine sweep all of $[-1,1]$, so $\phi(cx) = c\,\phi(x)$ for
every $c\in[-1,1]$ and every $x\in\R$. For any $x$ and any $u$ with
$0<|u|\le|x|$, choosing $c = u/x \in[-1,1]$ gives $\phi(u) = (u/x)\,\phi(x)$, i.e.\
$\phi(u)/u = \phi(x)/x$. Since any two nonzero reals are both dominated in absolute value by a common
third real, the ratio $\phi(x)/x$ takes a single value $c$ on $\R\setminus\{0\}$, whence
$\phi(x) = c\,x$ for all $x$: linearity, with no appeal to additivity.
Thus any nonlinear fixed entrywise map fails equivariance already for a $2\times2$ rotation. The
standard zero-state first steps of Adam and RMSProp are nonlinear maps of this form, as are signSGD
and Lion's outer-sign update.
An explicit counterexample with $\theta = \pi/4$ is immediate (with the usual convention
$\mathrm{sign}(0)=0$). Take the single row $G=(1,1)$, so
$GQ_{\pi/4} = (0,\sqrt2)$. Applying sign after the rotation gives
$\mathrm{sign}(GQ_{\pi/4}) = (0,1)$, whereas rotating the sign image gives
$\mathrm{sign}(G)\,Q_{\pi/4} = (1,1)\,Q_{\pi/4} = (0,\sqrt2)$; the two disagree, so $\mathrm{sign}$ is
not gauge-equivariant.

\emph{Adafactor.} Its second-moment surrogate is not entrywise but rank-one factored, so we check it
directly. Adafactor replaces the per-coordinate second moment by the reconstruction
$\hat v_{ij} = r_i c_j / \sum_l r_l$ from the row and column sums $r_i = \sum_j v_{ij}$,
$c_j = \sum_i v_{ij}$ of the accumulator $v = \mathrm{EMA}(g^2)$, with update $g_{ij}/\sqrt{\hat v_{ij}}$.
Taking a single step (so that $\hat v = g^{\odot2}$ after bias correction; any common positive factor
here rescales both sides of the test alike), the row sums are gauge-invariant,
$r_i = (GG^\top)_{ii}$, but the
column sums are the basis-dependent diagonal $c_j = (Q^\top G^\top G\, Q)_{jj}$ of the conjugated Gram:
already the $\theta=\pi/4$ rotation of $G=\mathrm{diag}(1,\delta)$ with $0<\delta\ne1$ leaves
$r=(1,\delta^2)$ fixed while sending
$c=(1,\delta^2)\mapsto\tfrac12(1+\delta^2)(1,1)$. Writing
$Q=2^{-1/2}\begin{psmallmatrix}1&1\\-1&1\end{psmallmatrix}$, the resulting zero-state updates are
\[
\Phi(G)=\sqrt{1+\delta^2}\,\mathrm{diag}(1,\delta^{-1}),\qquad
\Phi(GQ)=\begin{psmallmatrix}1&1\\-1&1\end{psmallmatrix},
\]
whereas
\[
\Phi(G)Q=\frac{\sqrt{1+\delta^2}}{\sqrt2}
\begin{psmallmatrix}1&1\\-\delta^{-1}&\delta^{-1}\end{psmallmatrix}\ne\Phi(GQ).
\]
The two sides differ at order $1$ for fixed $\delta$, while a small scalar stabilizer added to
$\hat v$ perturbs both sides continuously, so a sufficiently small stabilizer does not reconcile
them. Adafactor therefore breaks
equivariance through a factored, coordinate-dependent second moment.
\end{proof}

\subsection{Proposition~\ref{prop:adam-onestep} (exact first-step Adam defect)}
\begin{proof}
For either factor and its gradient $G$, zero initial state and bias correction give
\[
m_1=(1-\beta_1)G,\qquad v_1=(1-\beta_2)G^{\odot 2},
\qquad \hat m_1=G,\qquad \hat v_1=G^{\odot 2}.
\]
Hence the first Adam update is exactly $-\eta D_\epsilon(G)$; in particular, it is independent of
$\beta_1,\beta_2$ under the stated convention. By Lemma~\ref{lem:grad-cov}, the gradients in the
rotated run are $\widetilde G_U=G_UQ$ and $\widetilde G_V=G_VQ$. Gauge-align its first iterate back to
the reference coordinates:
\[
\widetilde U_1Q^\top
= U_0-\eta D_\epsilon(G_UQ)Q^\top
= U_0-\eta\bigl[D_U+E_Q(G_U)\bigr],
\]
and likewise
\[
\widetilde V_1Q^\top
= V_0-\eta\bigl[D_V+E_Q(G_V)\bigr].
\]
The gauge-aligned and reference products are
\[
\begin{aligned}
\widetilde W_1
&= (\widetilde U_1Q^\top)(\widetilde V_1Q^\top)^\top,\\
W_1
&= (U_0-\eta D_U)(V_0-\eta D_V)^\top.
\end{aligned}
\]
Subtracting and expanding gives the stated identity. The alternative quadratic expression follows from
$D_\epsilon(GQ)Q^\top=D_\epsilon(G)+E_Q(G)$ for each factor.
For the displayed witness, $G_U=G_V=(1,0)$ while $G_UQ=G_VQ=(2^{-1/2},2^{-1/2})$; direct substitution
gives the two scalar products in the statement.
\end{proof}

\subsection{Theorem~\ref{thm:structure} (structure)}
\begin{proof}
($\Leftarrow$) $(GQ)(GQ)^\top = GG^\top$, so $\Phi(GQ) = H(GG^\top)\,GQ = \Phi(G)\,Q$.
($\Rightarrow$) For full-column-rank $G$ let $G^{+} = (G^\top G)^{-1}G^\top$ (the Moore--Penrose pseudoinverse for full column rank) and define
$X(G) \coloneqq \Phi(G)\,G^{+} \in \R^{n\times n}$. Then $X(G)\,G = \Phi(G)(G^{+}G) = \Phi(G)$,
and $X$ is gauge-invariant: $(GQ)^{+} = Q^\top G^{+}$, so
$X(GQ) = \Phi(G)\,Q\,Q^\top G^{+} = X(G)$. It remains to see that $X$ depends on $G$ only through
$GG^\top$, i.e., that full-column-rank matrices with equal Grams lie on one gauge orbit: if
$G_1G_1^\top = G_2G_2^\top$, set $Q \coloneqq G_1^{+}G_2$; then
$Q^\top Q = G_2^\top (G_1G_1^\top)^{+} G_2 = G_2^\top (G_2G_2^\top)^{+} G_2 = I_k$ (full column
rank), and $G_1 Q = G_1G_1^{+}G_2 = P_{\mathrm{range}(G_1)}\,G_2 = G_2$ since
$\mathrm{range}(G_i)$ and $\mathrm{range}(G_iG_i^\top)$ coincide. Hence $X(G_1) = X(G_2)$, and
$H(GG^\top) \coloneqq X(G)$ is well defined, giving $\Phi(G) = H(GG^\top)\,G$. In the square
invertible case, the polar decomposition $G = (GG^\top)^{1/2}O$ with $O \in \OO(n)$ gives directly
$\Phi(G) = \Phi\big((GG^\top)^{1/2}\big)\,O = H(GG^\top)\,G$ with $H(P) = \Phi(P^{1/2})\,P^{-1/2}$;
uniqueness follows by evaluating at $G = P^{1/2}$.
\end{proof}

\subsection{Proposition~\ref{prop:spectral} (spectral transfer)}
\begin{proof}
With the compact SVD $G = \sum_i \sigma_i u_iv_i^\top$ we have
$GG^\top = \sum_i \sigma_i^2\, u_iu_i^\top$, so
$\psi(GG^\top) = \sum_i \psi(\sigma_i^2)\, u_iu_i^\top + \psi(0)\,P_0$, where
$P_0 = I - \sum_i u_iu_i^\top$ projects onto $\ker(GG^\top)$ (nontrivial whenever $G$ is
rank-deficient, in particular when $k<n$). Since $P_0G = 0$ that term drops out, and
$\psi(GG^\top)\,G = \sum_i \psi(\sigma_i^2)\,\sigma_i\, u_iv_i^\top$. The instances follow by
inspection; for support-restricted one-step undamped Shampoo,
\[
(GG^\top)_{\mathrm{supp}}^{-1/4}\,G\,(G^\top G)^{-1/4}
= \sum_i \sigma_i^{-1/2}\sigma_i\sigma_i^{-1/2}u_iv_i^\top
= \sum_i u_iv_i^\top
= \msign(G).
\]
For the Lipschitz claim (consequence \emph{(ii)} stated after the proposition), fix $0<\delta<1$
(a small singular value, distinct from Adam's stabilizer $\epsilon$)
and evaluate at $G_{\pm} = \mathrm{diag}(1, \pm\delta)$ (embedded in the top-left
$2\times2$ block): the second singular pair of $G_\pm$ is $(\sigma, u, v) = (\delta, e_2, \pm
e_2)$, so $\Phi(G_+) - \Phi(G_-) = 2h(\delta)\, e_2e_2^\top$ while $G_+ - G_- =
2\delta\, e_2e_2^\top$: the corresponding Lipschitz quotient is at least
$\lvert h(\delta)\rvert/\delta$, which
diverges as $\delta \to 0$ whenever $h(0^+) > 0$, and equals $\eta$ for $h(\sigma) =
\eta\sigma$.
\end{proof}

\subsection{Corollary~\ref{cor:depth} (deeper factorizations)}
\begin{proof}
Write $L(U_1, \dots, U_D) = f(U_1\cdots U_D)$, fix an interface $i$, and set
$A = U_1\cdots U_{i-1}$, $B = U_{i+2}\cdots U_D$, $G = \nabla f(W)$. Then
$\nabla_{U_i}L = A^\top G\,(U_{i+1}B)^\top$ and $\nabla_{U_{i+1}}L = (A U_i)^\top G\, B^\top$.
Under $(U_iQ,\, Q^\top U_{i+1})$ the represented $W$, hence $G$, is unchanged, so
$\nabla_{U_i}L \mapsto (\nabla_{U_i}L)\,Q$ and
$\nabla_{U_{i+1}}L \mapsto Q^\top\,(\nabla_{U_{i+1}}L)$: the gradient transforms covariantly on
the acted side of each factor, which is the only property the cited proofs use (transposed for
the left action).
\end{proof}

\subsection{Theorem~\ref{thm:transfer} (transfer theorem)}
\begin{proof}
Let $\theta(t)$ be a locally absolutely continuous solution of
$\dot\theta = -\nabla L(\theta)/a(t)$ with $a(t) > 0$ determined by
gauge-invariant trajectory statistics. Define $\tau(t) = \int_0^t du/a(u)$, a strictly increasing
bijection onto $[0, \tau_{\max})$ with $\tau_{\max} = \int_0^\infty du/a(u)$, and
$\tilde\theta(\tau) = \theta(t(\tau))$. Then, for a.e.\ $\tau$,
$\frac{d\tilde\theta}{d\tau} = \dot\theta\cdot\frac{dt}{d\tau} = \big(-\nabla L(\theta)/a\big)\cdot a
= -\nabla L(\tilde\theta)$: $\tilde\theta$ is a Carath\'eodory gradient-flow solution with the same
initial condition, and the two trajectories traverse the same set of points under a monotone
reparameterization: the whole gradient-flow path when $\tau_{\max}=\infty$, and otherwise its prefix
$\tilde\theta([0,\tau_{\max}))$. So any property of the gradient-flow \emph{path} on the traversed
portion holds for the scalar-preconditioned flow. When
$\tau_{\max} = \infty$, which holds in particular whenever $a$ is bounded above (since then
$\int^\infty du/a(u)$ diverges), the reparameterization also reaches the gradient-flow limit, so the limit-point
characterizations of \citet{gunasekar2017implicit,arora2019implicit,li2021towards} under their
hypotheses transfer as well; if $\tau_{\max} < \infty$ only the finite-time path is shared. Since $a$
is computed from gauge-invariant statistics, the reparameterization is itself gauge-independent, so
the statement holds uniformly over the gauge orbit.
\end{proof}

\subsection{Proposition~\ref{prop:balancedness} (balancedness)}
\begin{proof}
Under gradient flow, $\dot U = -GV$, $\dot V = -G^\top U$ with $G = \nabla f(W)$:
\[
\tfrac{d}{dt}(U^\top U) = -V^\top G^\top U - U^\top G V, \qquad
\tfrac{d}{dt}(V^\top V) = -U^\top G V - V^\top G^\top U,
\]
which are equal, so $\dot B = 0$. (Time reparameterization by $1/a(t)$ multiplies both by the same
scalar; conservation persists for the whole class of Theorem~\ref{thm:transfer}.)
Under a diagonal preconditioned flow $\dot U = -D_U \odot (GV)$, $\dot V = -D_V \odot (G^\top U)$,
\[
\dot B = -\big[(D_U \odot GV)^\top U + U^\top (D_U \odot GV)\big]
         +\big[(D_V \odot G^\top U)^\top V + V^\top (D_V \odot G^\top U)\big].
\]
Write $D_U=c\,\mathbf{1}\mathbf{1}^\top+E_U$ and
$D_V=c\,\mathbf{1}\mathbf{1}^\top+E_V$ for a \emph{common} scalar $c$, where $\mathbf{1}\mathbf{1}^\top$
is the all-ones matrix of the factor's shape. The common-scalar terms
cancel, while the remainder obeys
\[
\norm{\dot B}_F
\le 2\norm{E_U}_\infty\norm{GV}_F\norm{U}_F
   +2\norm{E_V}_\infty\norm{G^\top U}_F\norm{V}_F,
\]
by $\norm{E\odot Z}_F\le\norm{E}_\infty\norm{Z}_F$ (with $\norm{\cdot}_\infty$ the entrywise maximum)
and the Frobenius submultiplicativity bound.
Thus, at bounded factor and gradient norms, the drift vanishes as both factor preconditioners
approach the same scalar.
\end{proof}

\subsection{Proposition~\ref{prop:boundary} (greedy vs.\ equal-rate phase boundary)}
\begin{proof}
Each decoupled mode is a balanced two-factor scalar factorization $w = uv$ with $u = v = \sqrt{w}$
and per-mode loss $\tfrac12(uv - s)^2$, so $\partial_u L = v(w - s)$ and $\partial_v L = u(w - s)$.
The balanced set $u = v$ is invariant under both flows below, since $\dot u = \dot v$ there, so the
reduction to the single variable $w$ is consistent.
(The tied model $w = u^2$ gives identical dynamics at twice the rate in (i), a uniform
time-rescaling.)

\emph{(i)} The $h(\sigma) = \eta\sigma$ (gradient) flow is $\dot u = -\eta\, v(w - s)$,
$\dot v = -\eta\, u(w - s)$, hence $\dot w = \dot u\,v + u\,\dot v = -\eta(u^2 + v^2)(w - s)
= 2\eta\, w(s - w)$ (using $u^2 = v^2 = w$ at balance): logistic, with solution
$w(t) = s\big(1 + (s/w_0 - 1)\,e^{-2\eta s t}\big)^{-1}$. The head-fit condition
$w_1(T_1) = (1-\delta)s_1$ gives
$e^{2\eta s_1 T_1} = \tfrac{1-\delta}{\delta}\,(s_1 - w_0)/w_0 \le \tfrac{1-\delta}{\delta}\, s_1/w_0$.
For the tail, the exact solution obeys
$w(t) \le \tfrac{s\,w_0}{s - w_0}\, e^{2\eta s t} \le 2 w_0\, e^{2\eta s t}$ for $w_0 \le s/2$, so
\[
w_2(T_1) \;\le\; 2 w_0 \big(e^{2\eta s_1 T_1}\big)^{\rho}
\;\le\; 2\Big(\tfrac{1-\delta}{\delta}\Big)^{\rho} s_1^{\,\rho}\, w_0^{\,1 - \rho}
\;=\; C_{\delta,\rho}\, s_1 \Big(\tfrac{w_0}{s_1}\Big)^{1 - \rho}.
\]

\emph{(ii)} The $h \equiv \eta$ flow acts on every factor by replacing its update with $\eta$ times
the matrix sign of its gradient. In the scalar case this gives
$\dot u = -\eta\,\mathrm{sign}(v(w - s)) = \eta$, so that $u$ grows linearly in time, and likewise
$\dot v = \eta$, for $0 < w < s$. Hence each of the two factors increases like
$\sqrt{w_0} + \eta t$, i.e.\ $\sqrt{w} = \sqrt{w_0} + \eta t$ until $w$ reaches the threshold $s$;
there the gradient becomes zero, and since $\mathrm{sign}(0)=0$ (as also follows from
$\msign(0)=0$ by Proposition~\ref{prop:equivariant}(3)), the mode stops. So the capped dynamics are
the stopped flow. Mode $i$ therefore fits at $t_i = (\sqrt{s_i} - \sqrt{w_0})/\eta$,
and the head reaches $(1-\delta)s_1$ at $T_1 = \big(\sqrt{(1-\delta)s_1} - \sqrt{w_0}\big)/\eta$.
For $\delta < 1 - s_2/s_1$ we have $(1-\delta)s_1 > s_2$, hence $t_2 < T_1$ and $w_2(T_1) = s_2$
exactly, with no dependence on $w_0$.
\end{proof}

\section{Experimental details}
\label{app:details}

\paragraph{Sensing task (\S\ref{sec:zoo}--\S\ref{sec:dial}, \S\ref{sec:phase}).}
\begin{spec}
\item \emph{Target and measurements.} $X^\ast = U^\ast V^{\ast\top}$,
$U^\ast, V^\ast \in \R^{40\times3}$ i.i.d.\ Gaussian scaled by $1/\sqrt{3}$;
$m = 462 = 2\,\mathrm{dof}$, $\mathrm{dof} = 3(80-3) = 231$; $A_i$ i.i.d.\ standard Gaussian;
loss $= \mathrm{mean}_i(\ip{A_i}{W} - y_i)^2$.
\item \emph{Factors.} $U, V \in \R^{40\times40}$ initialized $\mathcal{N}(0, 10^{-6})$ entrywise
($10^{-3}$ scale).
\item \emph{Budget.} $2\times10^4$ steps (extended where noted); interpolation bar $10^{-7}$
(checked every 200 steps); 3 paired seeds \{42, 123, 456\}.
\end{spec}

Optimizer hyperparameters:
\begin{spec}
\item \emph{Adam, scalar-Adam.} $\beta = (0.9, 0.999)$, $\epsilon = 10^{-8}$.
\item \emph{RMSProp.} Classical form, second-moment decay $0.99$ and no first moment,
$\epsilon = 10^{-8}$.
\item \emph{Muon.} Momentum 0.9, Newton--Schulz 5 iterations, coefficients
(3.4445, $-4.7750$, 2.0315) \citep{jordan2024muon}.
\item \emph{Shampoo.} Accumulated (non-EMA) $L, R$, damping $\lambda = 1$, inverse-root refresh
every 20 steps.
\item \emph{Adafactor.} Row/column EMA factored second moment.
\item \emph{Lion.} $\beta_1 = 0.9$ inner, 0.99 EMA.
\item \emph{signum.} Sign of an EMA momentum with $\beta_1 = 0.9$ (Adam's $\beta_1$) and no second
moment, no clipping.
\end{spec}

Learning-rate grids per method span $\ge$ one decade around the interpolating range (curves for five reference methods in
Appendix~\ref{app:controls}). The dial's fixed-step arm (\S\ref{sec:dial}) re-runs the full $p$-sweep
at each single rate of the shared grid $\{10^{-3}, 3\times10^{-3}, 10^{-2}, 3\times10^{-2},
10^{-1}\}$, same update rule, seeds, and interpolation bar
(\texttt{experiments/precond\_dial\_fixed\_lr.py}).

Both dial arms use a $3\times10^4$-step budget
(\texttt{experiments/precond\_dial\_scalar\_check.py}), longer than the zoo's $2\times10^4$. That
CPU implementation follows Algorithm~\ref{alg:dial} line by line; two other code paths place
$\epsilon$ differently. The GPU port behind the $n{=}128$ dial rows
(\texttt{experiments/nibi/common.py}) omits the outer $\epsilon$ for $0<p<1$ and divides by
$\bar s + \epsilon$ rather than $\bar s + 2\epsilon$ at $p{=}0$. The released \texttt{flowadam}
package, used for the FlowAdam-$p$ rows, folds the bias correction into the step size at $p{=}1$ and
adds $\epsilon$ to $\sqrt{v}$ before that factor---TensorFlow's Adam convention rather than
Algorithm~\ref{alg:dial}'s---for an effective $\epsilon/\sqrt{1-\beta_2^{\,t}}$. At
$\epsilon = 10^{-8}$ neither departure reaches the digits we report. Its shared scalar for $p<1$ is
the geometric mean rather than Algorithm~\ref{alg:dial}'s RMS, so the FlowAdam-$p$ rows sit on the
geometric-mean dial of \S\ref{sec:dial}'s footnote; \S\ref{sec:repair} reads their gain against the
RMS dial's $0.201$ rather than the geometric-mean endpoint $0.229$, which is the conservative of the
two.

\paragraph{Attention task (\S\ref{sec:attention}).}
\begin{spec}
\item \emph{Task.} Modular addition modulo 47 on input sequences of the form $(a, b, {=})$, with
40\% of the data used for training (fixed split).
\item \emph{Model.} A 2-layer transformer with 4 attention heads, model dimension
$d_{\mathrm{model}} = 64$, using RMSNorm \citep{zhang2019rmsnorm} in pre-normalization configuration
and SiLU-activated \citep{elfwing2018silu} MLPs of width 256. The model omits biases and does not
apply query--key normalization (qk-norm), so the gauge holds exactly.
\item \emph{Training.} Full-batch optimization over 1500 steps, executed deterministically on CPU
(\texttt{torch.use\_\allowbreak deterministic\_\allowbreak algorithms}).
\item \emph{Gauge.} An orthogonal transformation $A_h$ per head, derived via QR decomposition of a
Gaussian matrix, applied to the row-blocks of $W_Q$ and $W_K$.
\item \emph{Learning rates.} Adam uses $10^{-3}$; SGD with heavy-ball momentum ($0.9$, unnormalized,
as for Muon's momentum here) uses $0.5$; scalar-Adam is set to $3\times10^{-3}$, while Muon uses
$0.02$ on matrix parameters, with the standard hybrid applying Adam at $10^{-3}$ to the embeddings
and the output head.
\end{spec}

Drift is defined as the relative $\ell_2$ logit distance on the validation set,
$\norm{Z - Z'}_F \big/ \tfrac12(\norm{Z}_F + \norm{Z'}_F)$, where $Z$ and $Z'$ are the validation
logit matrices of the two model copies. QK drift is computed the same way, but on $W_Q^\top W_K$ for
each attention head and then aggregated across heads. The symmetrized ratio is used in both cases.

\paragraph{Hyperspectral task (\S\ref{sec:realdata}).}
\begin{spec}
\item \emph{Scenes.} The Indian Pines dataset is the corrected AVIRIS cube
\citep{baumgardner2015indianpines}, with 2000 randomly sampled pixels over 200 spectral bands
(water-absorption bands removed), and the Pavia University dataset is 2000 pixels in 103 bands from
the ROSIS sensor. Both scenes are taken from \citet{grana2011hsiscenes} with band-wise mean removed.
\item \emph{Model and loss.} The factorization rank is 48, at initialization scale $10^{-2}$; no
weight decay (wd $=0$); entrywise completion loss minimized over the entries in the observed mask.
\item \emph{Densities.} Two densities per scene were selected, for which
$m/\mathrm{dof}_{24} \approx 1.15$ and $1.9$ (corresponding to $0.15/0.25$ for Indian Pines and
$0.28/0.46$ for Pavia).
\item \emph{Protocol.} Checkpoints are selected by matching training-set loss values in the range
$3{\times}10^{-3}$ to $10^{-5}$; learning rates are chosen by the train-only selection rule of
Appendix~\ref{app:c6}. The experiments were repeated with four random seeds (two for the
learning-rate selection and two independent runs), with results aggregated across all four. Since
the selection procedure only uses the training set, it carries no information about the test set.
\end{spec}

The degrees of freedom are counted at rank $24$, which is where each scene's singular spectrum
begins to stabilize (leaving a residual energy of about $0.10$ for Indian Pines). Hence the ratio
$m/\mathrm{dof}_{24}$ characterizes how underdetermined the underlying signal is. The model is
over-parameterized to rank $48$, so that the implicit bias, rather than a rank constraint, selects
the solution. Every reading in \S\ref{sec:realdata} and Table~\ref{tab:realdata}---both scenes, both
densities---comes from the same GPU \texttt{float64} harness under the same train-only rule; the
$3\times10^{-5}$ and $\le10^{-5}$ figures are that harness read at two depths of fit.

\paragraph{FlowAdam experiments (\S\ref{sec:repair}).} Following \citet{singh2026flowadam}, we test
two clipping modes, \texttt{percoord} (each entry clipped to $[-1,1]$) and \texttt{globalnorm} (the
whole velocity rescaled to norm $\le c$, so $c$ is the threshold referred to in \S\ref{sec:repair}),
and a varying \texttt{precond\_power}, on a step budget of $3\times10^4$
(\texttt{experiments/flowadam\_upgrade.py}), longer than the zoo's $2\times10^4$; for reference, the
early-stopping audit of Appendix~\ref{app:c8} goes up to $1.2\times10^5$. We use the same three
paired seeds and the same $10^{-7}$ interpolation threshold. Two configurations failed to meet that
threshold within the allotted steps: FlowAdam-$p{=}0$ got to $7\times10^{-6}$, and global-norm
clipping at $p{=}1$ reached $6\times10^{-7}$. Appendix~\ref{app:c8} continues the former to a true
$10^{-7}$ and finds the same $0.1691$; the latter's $0.220$ is a deep fit rather than an
interpolating reading, and is treated as such in comparisons. The evaluations were carried out in
the dense, well-conditioned regime of \S\ref{sec:boundary}, tuned versus tuned, with the same
same-placement L2 control (identical L2 term in the loss for the Adam baseline) across five paired
seeds.

\paragraph{Comparing values across tables.} Some results are repeated with minor variations. The
differences come from different learning-rate grids (finer ones in the diagnostic experiments of
Appendix~\ref{app:c3}), different numbers of seeds (always at least three), and different step
limits---but not from a difference in outcome, and overall the finer grids produce slightly lower
values. Two observations stand out. First, the zoo's selection grids stop above the smallest
interpolating learning rate, whereas the diagnostic grids of Appendix~\ref{app:c3} go below it, so a
flow-limit reading can fall under the corresponding Table~\ref{tab:zoo} row
(Remark~\ref{rem:flowlimit}): GD $0.113$ versus $0.131$, scalar-Adam $0.165$ versus $0.201$, and
Adam fixed near $0.56$--$0.58$.

Second, because the dial evaluates $p$ on a single shared grid at a coarser resolution, its endpoint
values reflect the rates that grid selects rather than those chosen by the zoo or the ladder. For
instance, at $p{=}1$ the value reads $0.570$ compared to the zoo's $0.5734$, and at $p{=}0$ with
$n{=}128$ it reads $0.40$ compared to the ladder's $0.345$. Monotonicity in $p$ remains consistent
regardless. Discrepancies smaller than $0.01$ across tables stem from grid effects, and we do not
highlight them individually.

\section{The control battery}
\label{app:controls}

All key results reported in the paper were re-evaluated under the following control conditions. None
of them showed meaningful changes beyond noise. The only estimate that became more precise with
additional random seeds---the phase boundary $\tau^\ast$---is noted in Appendix~\ref{app:c1}. Two
controls actually strengthened the conclusions rather than simply verifying them: the can/cannot
framing detailed in Appendix~\ref{app:c2}, and the observation regarding RMSProp's conservative
behavior.

\subsection{Schedule symmetrization}
\label{app:c1}
Constant-update-norm methods (Muon, signum, Lion) require decay to interpolate; the concern is that
decay itself (small late steps) shifts the bias. We ran every method under three schedules (Table~\ref{tab:schedules}): constant
(early-stopped at interpolation), cosine (early-stopped), and cosine over the \emph{full}
$2\times10^4$-step horizon with no early stopping, the exact treatment Muon received, including its
small late steps.

\begin{table}[ht]
\centering\small
\caption{Recovery under three schedules (3 seeds, best-recovery lr per cell) for five reference
methods: the equivariant anchors GD, scalar-Adam, and Shampoo, the coordinate-wise anchor Adam, and
the constant-norm case Muon. The split does not move; under a single uniform cosine schedule the
complete nine-method classification is again $9/9$ (selection-rule test, Appendix~\ref{app:c3}).}
\label{tab:schedules}
\begin{tabular}{lccc}
\toprule
method & constant & cosine & cosine, full horizon \\
\midrule
GD          & 0.1312 & 0.1312 & 0.1312 \\
Adam        & 0.5734 & 0.5734 & 0.5764 \\
scalar-Adam & 0.2010 & 0.2010 & 0.2010 \\
Muon        & (0.0084$^\dagger$) & 0.0000 & 0.0000 \\
Shampoo     & 0.2856 & 0.2856 & 0.2856 \\
\bottomrule
\end{tabular}\\[2pt]
{\scriptsize $^\dagger$train-loss floor $6\times10^{-2}$: Muon does not interpolate at constant lr,
which is the reason decay is required. Shown for completeness.}
\end{table}

The phase diagram of \S\ref{sec:phase} was also redone with an added cosine row for each method.
Recovery is virtually identical across all $\tau$ for GD, Adam, and Shampoo, to within the noise,
and Muon is shifted by no more than $0.02$. The relative ordering of the methods is the same under
both schedules, constant and decaying, with Adam having the worst performance across all six $\tau$
in both cases. Increasing the number of random seeds from $3$ to $10$ brought the estimated
crossover point from $\tau^\ast \approx 0.35$ to $\tau^\ast \approx 0.2$ ($\approx 4\%$ tail
energy), but the method ranking and the schedule-invariance of that ranking still hold.

A separate sweep at $n{=}128$ reproduces the GD\,$<$\,Shampoo\,$<$\,Adam ordering
in the low-$\tau$ part of the grid. However, we prefer to report this experiment only qualitatively,
because on the ladder's own per-size grid and at $\tau{=}0$, untuned Shampoo instead sits above Adam
at $n{=}128$ (Table~\ref{tab:ladder}). This is not a contradiction to our main result, since the
per-size grid and the procedure are different. Within this sweep, the same untuned damping shows
again at higher $\tau$: starting from $\tau{=}0.25$, Shampoo edges above Adam, by less than $0.01$
throughout and well within the seed spread of either method, so unlike at $n{=}40$, Adam is not
universally the worst optimizer at $n{=}128$.
This sweep used a different two-rate grid from the one used in the main ladder experiments
(Appendix~\ref{app:c9}). On that grid Muon does not recover cleanly at $n{=}128$, its value being
$0.54$ at $\tau{=}0$ instead of $0$, so its crossing is not added to the table. It should be noted,
nevertheless, that the ladder's broader grid does intersect Muon's interpolation band at $n{=}128$,
where Muon indeed recovers fully (Table~\ref{tab:ladder}), so the difference is in the coverage, not
in the result.

\subsection{The learning-rate axis}
\label{app:c2}

Choosing a method's learning rate based on optimal recovery constitutes selection by outcome, while
selecting based on fastest interpolation ignores bias considerations---yet leads each method to its
most bias-distorted step size (even Muon degrades to $0.91$ recovery at its fastest-interpolating
rate). Neither criterion on its own supports a conclusion about implicit
bias. We therefore evaluate recovery as a function of learning rate (Appendix
Table~\ref{tab:lrcurves}) and discuss an achievability asymmetry: across the
tested grids, every equivariant method reaches good recovery at some learning rate, whereas no
coordinate-wise approach does at any rate, even its best (Adam's recovery stays confined to
$0.56$--$0.58$ over two orders of magnitude).

Three facts from the experiment support this reading.
\emph{(i)} Adam demonstrated almost no dependence of recovery on the learning rate: across all five
tested values it varied only from $0.56$ to $0.58$. These five learning rates span two orders of
magnitude and every one of them interpolates, so the range contains no step size at which Adam
keeps the low-rank bias. \emph{(ii)} GD
and scalar-Adam perform better at smaller step sizes (recovery improves from $0.131$ to $0.113$ for
GD, and from $0.201$ to $0.165$ for scalar-Adam, at their smallest interpolating rates). This
aligns with expectations from the common-scalar flow model, though the transfer theorem does not
formally guarantee the same behavior for EMA-based scalar-Adam. \emph{(iii)} Muon displays the
reverse trend: it recovers exactly at larger decayed rates ($0.0000$ error at learning rates
$0.03$--$0.1$), but degrades at smaller ones ($0.75$--$0.91$ recovery). On this task, that behavior
depends on the learning schedule, and it is consistent with equal-rate spectral growth
\citep{kang2026uniform} and the sensitivity of finite Newton--Schulz iterations near small singular
values. Thus, in this setting, equivariance defines a stable class membership, while the learning
schedule and step size affect recovery performance within that class.

\subsection{Learning-rate curves and selection rules}
\label{app:c3}

The grids of Table~\ref{tab:lrcurves} extend to lower rates than the zoo's selection procedure used,
and the $\to$fl rates were run specifically to probe the flow limit; they therefore usually lie
below the rates selected by the zoo's best-recovery criterion (\S\ref{sec:zoo}), which explains the
discrepancy with Table~\ref{tab:zoo}. Each zoo method appears in the table at the rate chosen for it
by that criterion (GD $0.131$ at lr $0.01$, Adam $0.573$ at $0.01$, scalar-Adam $0.201$ at $0.001$;
see the grid note in Appendix~\ref{app:details}). The reading is the one given in
Appendix~\ref{app:c2}: Adam is flat---no learning rate preserves the bias---while GD and scalar-Adam
improve steadily as they approach the flow limit, and Muon is driven by its large decaying steps (a
schedule-driven bias). The remaining coordinate-wise methods (RMSProp, Lion, Adafactor, and signum)
were swept on their own grids but are omitted from the table, as they all fall into the
poor-recovery category ($\ge 0.42$) under both selection criteria, as the stress test below reports.

\begin{table}[ht]
\centering\small
\caption{Recovery versus learning rate for the five reference methods, averaged over three random
seeds. For each method the results occupy two rows: the top row gives the learning-rate grid used
(of varying length, since Adam has five rates while the rest have four), and the second row the
corresponding recovery, so a method's row simply ends where its grid does. ``$\to$fl'' signifies the
lowest learning rate at which interpolation still succeeds, the flow-limit-aligned column; n/i $=$
does not interpolate; div $=$ diverges.}
\label{tab:lrcurves}
\begin{tabular}{llccccc}
\toprule
method & & \multicolumn{5}{c}{learning-rate grid} \\
\midrule
\multirow{2}{*}{GD}
 & lr       & 0.003$^{\to\mathrm{fl}}$ & 0.01 & 0.03 & 0.1 & \\
 & recovery & 0.113 & 0.131 & 0.437 & div & \\
\midrule
\multirow{2}{*}{Adam}
 & lr       & 0.0003$^{\to\mathrm{fl}}$ & 0.001 & 0.003 & 0.01 & 0.03 \\
 & recovery & 0.560 & 0.576 & 0.570 & 0.573 & 0.581 \\
\midrule
\multirow{2}{*}{scalar-Adam}
 & lr       & 0.0003$^{\to\mathrm{fl}}$ & 0.001 & 0.003 & 0.01 & \\
 & recovery & 0.165 & 0.201 & 0.256 & 0.357 & \\
\midrule
\multirow{2}{*}{Muon (cosine)}
 & lr       & 0.003 & 0.01 & 0.03 & 0.1 & \\
 & recovery & 0.915 & 0.748 & 0.000 & 0.000 & \\
\midrule
\multirow{2}{*}{Shampoo}
 & lr       & 0.01 & 0.03 & 0.1 & 0.3 & \\
 & recovery & (0.881 n/i) & 0.286 & div & div & \\
\bottomrule
\end{tabular}
\end{table}

Selection-rule stress test: with best-recovery selection under the uniform cosine schedule, the
classification achieves $9/9$; with fastest-interpolation selection it achieves $8/9$, with
\emph{Muon} the sole failure at recovery $0.91$. Thus the fastest-interpolation, bias-blind rule
hurts an otherwise \emph{equivariant} method, which is why Appendix~\ref{app:c2} reports results in
terms of capability (can/cannot) with the curves, rather than under any single selection criterion.
``Fastest-interpolating'' is defined within the fixed horizon of $2\times10^4$ steps: the rate that
crosses the $10^{-7}$ bar in the fewest steps, not the largest rate that eventually interpolates.
For Muon that is the smallest grid rate ($0.003$, recovery $0.915$ in Table~\ref{tab:lrcurves});
hence the two higher rates ($0.03$, $0.1$), at which it recovers exactly, are not what this rule
picks. The coordinate-wise cluster stays $\ge 0.42$ under \emph{both} rules.

\subsection{Initialization scale}
\label{app:c4}

Table~\ref{tab:initscale} sweeps the initialization scale for the five reference methods.

\begin{table}[ht]
\centering\small
\caption{Recovery vs.\ initialization scale (3 seeds). The split holds at $10^{-3}$ (the paper's
setting) and $3\times10^{-3}$; at $10^{-2}$ the small-init bias fades for every method except Muon,
which remains exact, marking its bias as schedule- rather than init-driven.}
\label{tab:initscale}
\begin{tabular}{lccccc}
\toprule
init & GD & Adam & scalar-Adam & Muon & Shampoo \\
\midrule
$10^{-3}$          & 0.131 & 0.573 & 0.201 & 0.000 & 0.286 \\
$3\times10^{-3}$   & 0.203 & 0.575 & 0.260 & 0.000 & 0.352 \\
$10^{-2}$          & 0.318 & 0.584 & 0.358 & 0.000 & 0.575 \\
\bottomrule
\end{tabular}
\end{table}

\subsection{Attention twins across seeds, initialization draws, and noise levels}
\label{app:c5}
Using the deterministic CPU setup described in \S\ref{sec:attention}, we trained six Adam gauge
pairs---combining two different initialization seeds with three independent gauge draws. Initial
drift for these runs varied from $3.4$ to $5.4\times10^{-3}$, in good agreement with the same
2L/$d{=}64$ configuration recalculated on GPU in \texttt{float64}, whose step-1 drift varied from
$4.0$ to $5.7\times10^{-3}$ (Table~\ref{tab:attnscale}). Final drift in this case ranged between
$0.69$ and $0.80$. The differences in validation accuracy were insignificant,
$|\Delta\mathrm{acc}| \le 0.0023$, suggesting that both models perform similarly well but converge
to two different points on the solution manifold.

In the case of the Adam noise-twin tests performed in the same basis, noise amplitude varied from
$10^{-7}$ to $10^{-5}$, and the final drift in this case varied from $1.6\times10^{-5}$ to
$1.0\times10^{-3}$. The structural ratio, defined as the ratio of the \emph{smallest} final gauge
drift to the final drift of the noise twin carrying $100$ times more perturbation, exceeds
$600\times$, the actual run value being $662\times$. The rounded figure quoted here is the more
cautious lower bound, and is the one we claim.

Among the equivariant rows, the SGD and scalar-Adam gauge twins, over two (seed, draw) combinations
each, end at no more than $3.7\times10^{-4}$ for SGD (its largest) and $4.3\times10^{-6}$ for
scalar-Adam.

The Muon chaos metric, the mean $|\log_{10}(\text{gauge}/\text{noise})|$ drift ratio past step 100,
equals $0.05$ decades for seed 42. Within that tolerance one may deem the gauge twin a perturbation
at the floating-point level, and therefore indicative of chaos rather than of structural
basis-dependence.

\paragraph{Scaling the twin diagnostic (GPU, \texttt{float64}).} Table~\ref{tab:attnscale} reports
analogous values for deeper and wider models, and on real text inputs. The $A{=}I$ determinism twin
is exactly $0$ for every row. Two observations are universal across all depths and widths: the
initial Adam gauge deviation, which is about $10^{-3}$--$10^{-2}$, and the machine-precision
``floor'' shared by all three equivariant methods, which is $12$--$13$ orders of magnitude below
Adam's value. The ``ratio'' column reports the ``onset ratio'', defined as the smallest Adam gauge
split at step 1 divided by the value of the $100\times$-stronger noise twin at that same step. We
report the onset, not the final, ratio, because in deep models the end-state statistics cease to be
informative about the dynamics, even in fully deterministic, non-minibatch runs.

At 2L/$d{=}64$ and 4L/$d{=}128$, the equivariant twins remain negligible by the end---scalar-Adam at
$6.6\times10^{-15}$ and $6.0\times10^{-12}$, SGD at $2.5\times10^{-9}$ and
$1.6\times10^{-10}$---and the final-drift ratio is substantial (exceeding $500\times$ for
2L/$d{=}64$; the CPU value mentioned above reflects the $662\times$ of the same comparison on that
harness). At
6L/$d{=}256$, however, the expectation is violated: scalar-Adam's gauge twins produce a value of
$1.8$ and Muon's $0.58$, while SGD gives no signal at that scale, never leaving chance accuracy, so
its exactly zero drift reports an untrained model rather than a preserved gauge. The noise twin
interprets the rest as instability, with scalar-Adam's noise twin at $1.7$ and Muon's at $0.83$,
corresponding roughly to where a $10^{-7}$ perturbation in the same basis would land at those
scales. Similarly, the char-LM has scalar-Adam's gauge and noise twins both at $0.44$, while Adam is
the only method with a discrepancy between its gauge and noise twin at step $1$---which is why the
step-$1$ onset, rather than the final state, carries the claim.

\begin{table}[ht]
\centering\small
\caption{Twin drift measured at scale and on real text, in GPU \texttt{float64} precision. For each
configuration, Adam runs $6$ init$\times$draw gauge pairs while each equivariant method runs $2$; the
$A{=}I$ determinism twin is evaluated for Adam at every configuration and is exactly $0$ throughout.
``eq.\ step 1'' is the maximum (worst) gauge drift across the three equivariant methods (SGD,
scalar-Adam, Muon) at step 1, all at machine precision; ``ratio'' is the onset ratio defined above.
The character-level language model twins use the same deterministic minibatch stream and reach
comparable validation losses ($1.579$ vs.\ $1.585$).}
\label{tab:attnscale}
\begin{tabular}{lcccc}
\toprule
config & Adam gauge, step 1 & eq., step 1 & Adam, final & ratio (step 1) \\
\midrule
mod-47, 2L, $d{=}64$          & $4.0$--$5.7\times10^{-3}$ & $5.4\times10^{-16}$ & $0.69$--$0.78$ & $93\times$ \\
mod-97, 4L, $d{=}128$         & $8.2$--$11\times10^{-3}$  & $7.3\times10^{-16}$ & $0.66$--$0.75$ & $158\times$ \\
mod-97, 6L, $d{=}256$         & $6.5$--$7.2\times10^{-3}$ & $1.1\times10^{-15}$ & $0.61$--$0.64$ & $94\times$ \\
text (char-LM), 6L, $d{=}256$ & $2.2$--$2.4\times10^{-3}$ & $8.0\times10^{-16}$ & $0.36$--$0.37$ & $39\times$ \\
\bottomrule
\end{tabular}
\end{table}

\subsection{Hyperspectral learning-rate hygiene}
\label{app:c6}
The learning rates in \S\ref{sec:realdata} are chosen via a train-only criterion: the best rate is
the one achieving the lowest training loss within the step budget, with ties broken in favor of the
one using fewest steps. We used different seeds for selection than for two of the four
evaluation seeds. Following this protocol, GD selects learning rate $10$ at density $0.15$, which
matches the test-informed choice, but with no data leakage---this is the only rate on the grid that
can reach training loss $3\times10^{-5}$ in $3\times10^4$ steps (lr $3$ reaches $3\times10^{-4}$,
lr $1$ only $3\times10^{-3}$).

A per-rate analysis using four different seeds confirms that this is
not an edge-of-stability \citep{cohen2021gradient} effect: GD with lr $3$ tracks the same
matched-loss trajectory as the lr${=}10$ run---at every comparable loss, the held-out RMSEs differ
by at most $10^{-5}$ (e.g., both have $0.03487$ at one point and $0.02207$ later). At its deepest
point achieved (train loss $\le 3\times10^{-4}$, all four seeds), GD(lr${=}3$) already outperformed
Adam by $12\%$, and the gap widens to $44\%$ at deeper levels with the higher learning rate. For the
two rates tried on GD and the one density, the matched-loss path is therefore independent of the
rate (within the rate class), varying only in how far along the path the optimization gets within
the step budget.

\paragraph{Two limits of this grid.}
(i) Our reported improvement depends on the selection criterion, though not the ordering: train-only
picks better rates for faster convergence, hence bigger rates for the same budget. In the
GPU-replicated run (Table~\ref{tab:realdata}), whose grid extends to lr $30$, this puts GD(lr${=}30$)
in its rate-invariant region while giving Adam the worst rate on its own grid. Adam's smallest rate
($10^{-3}$) achieves held-out RMSE $0.0171$ at effective rank $17$, compared to GD(lr${=}30$)'s
$0.0148$ at rank $13$. So picking
the best rate for each method would have improved GD by $13\%$ over Adam, compared to the $43\%$
found with the train-only selection (also consistent across the four paired seeds in the GPU
experiment). We report the train-only selection because using held-out statistics to both select and
evaluate a learning rate would reintroduce the very bias this protocol was designed to avoid
(Appendix~\ref{app:c2}). What does not change with the selection criterion, however, is the ordering
and rank profile: at every rate on Adam's grid ($0.0171/17$, $0.0198/20$, $0.0260/27$), the GD run
selected achieves both a lower held-out error and lower effective rank than the corresponding Adam
run.

(ii) Scalar-Adam is undersampled in these experiments: its rate grid
$\{3{\times}10^{-3}, 10^{-2}, 3{\times}10^{-2}\}$ sits at $3\times$ higher rates than Adam and shows
no signs of saturating, achieving lower held-out error still at the smallest rate tried ($0.0207$
vs $0.0360$ at the selected rate) and hence does not reproduce the phenomenon. This is entirely
consistent with the finding in \S\ref{sec:phase}---equivariance defines the algorithm class, while
spectral responses determine the dynamics within a class. Real-world data has spectral content that
makes this second channel relevant, and scalar-Adam's EMA of first moments
(Remark~\ref{rem:transferscope}) fails to satisfy the premise of Theorem~\ref{thm:transfer}. A
similar consideration applies to Shampoo across problem sizes (Appendix~\ref{app:c9}).

\subsection{Interpolation-bar audits}
\label{app:c7}
RMSProp necessitated the formulation of the rule in \S\ref{sec:setup} based on performance criteria
rather than method class, due to its behavior. Under a constant learning rate, it fails to
interpolate, plateauing at a training loss of $2.9\times10^{-4}$. This limit is not due to
insufficient training, as even with a threefold increase in training budget (up to $6\times10^4$
steps), the lowest achieved loss remains at $2.8\times10^{-4}$. The bottleneck stems from the
$\epsilon$-bounded denominator in its update rule, which persists regardless of extended training.
Evaluating recovery performance at this plateau would mean comparing a non-interpolating,
early-halted run with eight interpolating cases---precisely the kind of comparison the evaluation
protocol aims to avoid.

However, when using cosine decay---better suited to RMSProp's optimization dynamics---it
successfully interpolates, achieving a training loss of $6.0\times10^{-8}$ at a learning rate of
$10^{-4}$. In this setting, it reaches a recovery score of $0.5266$ with an effective rank of
$12.80$. This result falls within the coordinate-wise cluster and lies just $0.0004$ below the
$0.5270$ score of the constant-rate run---measured at its $2.9\times10^{-4}$ plateau, since that run
never met the interpolation bar, so $0.5270$ is a recovery at the plateau, not at interpolation.
Thus, the learning
schedule has minimal impact on final recovery performance despite significant differences in
training dynamics; it determines whether RMSProp is measurable, not what it selects. Once we apply
decay, all nine methods satisfy the $10^{-7}$ interpolation threshold, giving rise to a $9/9$
classification (Table~\ref{tab:zoo}). This conclusion holds true when every method follows the same
cosine learning-rate schedule (Appendix~\ref{app:c1}); thus, none of the clustering results
presented here depend on a specific method's scheduling scheme.

Conversely, our evaluation also turned up the opposite failure case. With a constant learning rate,
the signum algorithm stagnates at $5\times10^{-2}$ training loss, attaining a recovery score of
$0.009$. If recovery were measured without consideration of interpolation, this early-stopping
behavior would be considered evidence in favor of sign descent being able to maintain bias. In
contrast, with interpolation (under decay), signum has a recovery of $0.445$, indicating that the
bias was lost, as expected.

\subsection{FlowAdam-p is not an artifact of early stopping}
\label{app:c8}
The bias restoration achieved in \S\ref{sec:repair} with FlowAdam-$p{=}0$ (using RMS convention,
global norm clipping at $c{=}10$, and learning rate $10^{-3}$) is subject to the same early-stopping
ambiguity as any recovery measurement. In order to investigate this, we boosted the training budget
to $1.2\times10^5$ steps and measured recovery at matched training loss points ranging from
$10^{-4}$ to the stricter $10^{-7}$ interpolation threshold. Recovery stays relatively flat within
this range (Table~\ref{tab:c7}): each individual run varies by no more than $3\times10^{-4}$ between
$10^{-4}$ and $10^{-7}$, and the values are stable to four decimal places beginning at $10^{-6}$.
Therefore, the recovery and effective rank obtained with a lax loss floor are nearly
indistinguishable to three significant figures from those obtained at full interpolation. The bias
is well formed before the final stop, demonstrating that it is not an artifact of the stopping rule.

\begin{table}[ht]
\centering\small
\caption{FlowAdam-$p{=}0$ recovery along a trajectory aligned by training loss (extended
$1.2\times10^5$-step budget, 3 seeds). The mean at the $10^{-7}$ threshold ($0.1691$) matches the original
$3\times10^4$-step result exactly and improves upon the dial-only baseline ($0.2010$) by $+15.9\%$.
Its effective rank ($4.8$--$5.4$) is the closest of any Adam variant in this study to gradient
descent's ($4.51$).}
\label{tab:c7}
\begin{tabular}{lcccccc}
\toprule
train loss $\le$ & $10^{-4}$ & $10^{-5}$ & $3\times10^{-6}$ & $10^{-6}$ & $10^{-7}$ & erank \\
\midrule
seed 42  & 0.1404 & 0.1401 & 0.1401 & 0.1401 & 0.1401 & 4.77 \\
seed 123 & 0.1558 & 0.1556 & 0.1555 & 0.1555 & 0.1555 & 4.84 \\
seed 456 & 0.2120 & 0.2117 & 0.2117 & 0.2117 & 0.2117 & 5.41 \\
\midrule
mean     & 0.1694 & 0.1691 & 0.1691 & 0.1691 & 0.1691 & 5.01 \\
\bottomrule
\end{tabular}
\end{table}

\subsection{The H100 replication ladder}
\label{app:c9}
The mechanism-identification experiments presented in the main text are intentionally compact and
deterministic. To directly assess how results vary with scale, we evaluated the full suite of
methods on the sensing task across problem sizes $n \in \{64, 96, 128, 192, 256\}$ (with planted
rank $3$, $m = 2\,\mathrm{dof}$, using \texttt{float64} precision on H100 GPUs), using $10$ random
seeds per configuration. For each method, three seeds are used to choose the learning rate based on
the best average recovery performance among the interpolating rates; see Appendix~\ref{app:c3},
where this selection criterion is rigorously tested. The other seven seeds are used only with the
selected rate and not in selection; averages presented here are over all $10$ seeds. Importantly,
the seven holdout seeds alone recapitulate the same two-cluster ranking order at each problem size;
e.g., at $n{=}256$,
$\max(\mathrm{GD},\text{sc-Adam},\mathrm{Muon}){=}0.56<0.71{=}\min(\text{clean coordinate-wise})$
vs.\ $0.55<0.71$ using all the seeds in Table~\ref{tab:ladder} (in which ``sc-Adam'' means
scalar-Adam and ``clean'' means excluding the flagged
signum$^\ddagger$). The training budget is fixed at $4\times10^4$ steps.

Learning rate grids for each problem size are scaled so that each method operates within the interpolating regime across all sizes: GD
scales by $40/n$; Muon, Shampoo, and ScaledGD by $\sqrt{40/n}$, with ScaledGD included as a schedule
control but not reported; Adam-family methods scale by $\sqrt{n/40}$. At $n{=}40$, these grids match
exactly those used in the main paper, serving as a validation check. A cosine-annealing control was
also run at each size. All methods yield nearly identical recovery values under both the
constant-step and cosine schedules---up to three decimal places---with one exception: RMSProp, whose
selected rate becomes unstable at $n{=}64$ and $n{=}96$ (recovery shifts from $0.596$ to $0.610$ and
$0.615$ to $0.767$ under cosine), though it remains within the coordinate-wise cluster regardless.
Thus, the relative ordering in the ladder is unaffected by the choice of schedule. The ladder stops
at $n{=}256$: $n{=}384$ is omitted because boundary effects make the grid uninformative there, with
Muon no longer interpolating.

\begin{table}[ht]
\centering\footnotesize
\setlength{\tabcolsep}{4.2pt}
\caption{\textbf{Optimization algorithms on the Zoo recovery task, across problem sizes}
($10$-seed averages; \texttt{float64}, H100 GPU, $4\times10^4$ steps). The ranking is consistent
across scales: gradient descent (GD), scalar-Adam ($p{=}0$), and Muon significantly outperform
coordinate-wise Adam, RMSProp, Lion, and Adafactor, with Adam-type methods at $1.7$--$4.3\times$
GD's error at every size and Muon almost exact (error ${<}10^{-5}$ for all seeds) up to $n{=}256$.
The errors of GD and scalar-Adam are limited by the step-size choice
(Remark~\ref{rem:flowlimit}), so the relative ordering, not the absolute value, is the conclusion.}
\label{tab:ladder}
\begin{tabular}{r cccc ccccc}
\toprule
 & \multicolumn{4}{c}{equivariant} & \multicolumn{5}{c}{coordinate-wise} \\
\cmidrule(lr){2-5}\cmidrule(lr){6-10}
$n$ & GD & sc-Adam & Muon & Shampoo$^\dagger$ & Adam & RMSProp & Lion & Adafactor & signum$^\ddagger$ \\
\midrule
64  & 0.143 & 0.195 & \textbf{0.000} & 0.646 & 0.574 & 0.596 & 0.483 & 0.612 & 0.000 \\
96  & 0.202 & 0.279 & \textbf{0.000} & 0.595 & 0.609 & 0.615$^\S$ & 0.548 & 0.687 & 0.000 \\
128 & 0.248 & 0.345 & \textbf{0.000} & 0.659 & 0.643 & 0.617 & 0.602 & 0.745 & 0.000 \\
192 & 0.346 & 0.468 & \textbf{0.000} & 0.838 & 0.696 & 0.669 & 0.760 & 0.817 & 0.000 \\
256 & 0.410 & 0.548 & \textbf{0.000} & 0.918 & 0.741 & 0.708 & 0.895 & 0.867 & 0.000 \\
\bottomrule
\end{tabular}\\[2pt]
{\scriptsize $^\dagger$Shampoo's damping $\lambda{=}1$ is fixed independently of problem size, which
is why it does poorly at large $n$---a hyperparameter choice, not an intrinsic limitation.
$^\ddagger$signum reaches comparable performance on a sufficiently long training schedule, through a
different mechanism (Appendix~\ref{app:c10}). $^\S$RMSProp has one outlier at $n{=}96$, where a
poorly chosen learning rate makes it diverge on some seeds.}
\end{table}

\paragraph{Discussion around the ladder.} The ranking in Table~\ref{tab:ladder} is what underlies
our conclusion, and one number in it is easy to misread: the Adam-family ratio with respect to GD
decreases with growing $n$, but this is not because Adam is relatively better, rather because GD's
absolute value climbs toward its flow-limit value under the same step budget. Both GD and
scalar-Adam would approach the continuous-time flow limit given smaller steps and more of them,
which is why the ordering rather than the absolute error is what the table is asked to carry.

The $n{=}40$ canary run correctly recovers the
results of the paper when the protocol is followed, giving $0.424$ for Lion (versus $0.425$ in the
paper) and $0.543$ for Adafactor (matching the paper's $0.543$), while Adam stays flat across the
extended grid, registering $0.550$ at its finest interpolating rate. Meanwhile, both GD and
scalar-Adam achieve better recovery with smaller step sizes, reaching $0.108$ and $0.143$
respectively on their finest grids (GD's $0.108$ cell stops at a mean training loss of
$1.1\times10^{-7}$, just above the bar, so we read it as a flow-limit value rather than an
interpolating one)---agreeing with the flow-limit behavior described in
Remark~\ref{rem:flowlimit}, which is why the larger-$n$ values on the ladder should be considered
conservative. In particular, two cases were kept in the table for transparency rather than optimized
out: Shampoo with untuned damping, and signum's long-horizon recovery. Neither affects the stated
ranking, and both are flagged for further investigation (\S\ref{sec:discussion}).

\subsection{Recovery without equivariance: annealed sign descent at long horizons}
\label{app:c10}
The zoo's two-cluster separation (\S\ref{sec:zoo}) is defined under a fixed computational
budget. However, one coordinate-wise method manages to escape the collapsed cluster when training is
extended: signum (signSGD with momentum) using cosine decay over a prolonged annealing schedule. At
the paper's standard budget of $2\times10^4$ steps, its recovery score is $0.445$---classified as
collapsed. When extended to the ladder's $4\times10^4$-step horizon, recovery drops to near-zero
(below $10^{-5}$) for all $n \ge 64$ (across all $10$ random seeds). The transition begins at
$n{=}40$, a borderline scenario depending on numerical precision and the random seed. On H100 in
\texttt{float64}, two of the three seeds recover, for a mean error of $0.013$; the original
CPU-based implementation (\texttt{optimizer\_zoo\_bias.py},
\texttt{MAX\_STEPS}${=}4\times10^4$, lr $3\times10^{-3}$), meanwhile, gives a mean of $0.054$, with
the run using seed $42$ recovering to $1.5\times10^{-5}$; and at lr $10^{-3}$, recovery stays poor at
about $0.71$ throughout.

The values shift between individual runs and platforms, but the configuration as a whole---large
step sizes and an extended annealing schedule---reproduces the effect. It is not random, nor is it
due to numerical instability: we have observed the phenomenon but do not yet have a mechanistic
account of it, so we report it as an empirical fact.

The example does not disprove the theorem, but instead demonstrates where it does not apply.
Theorem~\ref{thm:transfer} concerns flows with memoryless updates and a scalar preconditioner shared
across the factors, and does not consider alternatives. It is entirely possible that a different set
of update rules, not respecting the equivariance of the loss under the gauge, would have different
dynamics and still be able to exploit the low-rank structure of the target signal. We considered one
possibility and ruled it out.

A possible explanation could be that signum's momentum has an intrinsic rank-one structure: if the
momentum were an outer product, then since
$\mathrm{sign}(\sigma u v^\top) = \mathrm{sign}(u)\mathrm{sign}(v)^\top$ for $\sigma > 0$, the signed
update would itself be rank-one and aligned with the planted signal. This does not appear to be the
case. At $4\times10^4$ steps, the successful runs at lr $3\times10^{-3}$ have a ratio of the top two
singular values of the momentum of $\sigma_1/\sigma_2 \approx 1.1$--$1.3$ at convergence, and the
signed updates themselves have even lower values ($1.05$--$1.17$). The fraction of squared Frobenius
norm contained in the leading singular component---the top-mode energy---lies between $10\%$ and
$24\%$ for these runs (\texttt{experiments/signum\_c9\_audit.py}), against the $100\%$ a genuinely
rank-one matrix would carry. The leading mode therefore holds only a small share of the update's
energy, so the update as a whole cannot have been rank-one. The mechanism by which signum achieves
lower-rank recovery after long annealing schedules remains an open question.

A more plausible theory concerns the $\ell_\infty$ or steepest-descent nature of sign-based updates
\citep{bernstein2018signsgd,xie2024implicit}, which implicitly biases towards max-margin
solutions---\citet{baek2026implicit} prove that signum converges to the $\ell_\infty$-max-margin
classifier at any batch size.
These, in turn, have low rank only if the ground truth is also low-rank, which would explain the
collapse of the corner in the presence of any nontrivial spectral tail (as illustrated below); we
posit this as an open problem.

Finally, this example differs from the common-scalar case in that it does not appear to stem from a
continuous limit. First, it emerges from large learning rates and long annealing schedules rather
than from a smooth flow limit. Specifically, at a learning rate of $3\times10^{-3}$, the average
recovery over three random seeds improves from
$0.445 \to 0.054 \to 0.000$ as the cosine-annealing schedule is extended from
$2\times10^4 \to 4\times10^4 \to 6\times10^4$ steps. By contrast, at a learning rate of $10^{-3}$,
it remains poor ($0.65$--$0.72$) throughout the schedule. A linear decay schedule has a
similar threshold behavior ($0.479 \to 0.103 \to 0.000$ at analogous durations), which suggests that
the issue is not specific to cosine annealing.

Second, the corner structure breaks down when a spectral tail is added to the planted matrix. In the tail sweep described in \S\ref{sec:phase},
signum at lr $3\times10^{-3}$ and $4\times10^4$ steps shows recovery values of $0.025$, $0.138$,
$0.280$, $0.422$, $0.646$, and $0.796$ at $\tau = 0$, $0.05$, $0.10$, $0.20$, $0.35$, and $0.50$
respectively (averaged over three seeds; the $\tau{=}0$ value again reflects the seed-sensitive
borderline case, and sign descent is chaotic enough here that these digits shift by up to $0.03$
across platforms while the monotone breakdown does not). Meanwhile, common-scalar methods follow the smoother degradation patterns seen in
Table~\ref{tab:phase}. Thus, we view signum as an exceptional case that helps define boundaries:
non-equivariant dynamics can achieve low-rank solutions only in a narrow regime---exactly low-rank
settings, large steps, and long anneals---while common-scalar equivariance supports a more robust,
generalizable, and analyzable gradient-flow mechanism.

\section{Extended related work}
\label{app:related}

\paragraph{Transformation invariance in optimizer design.}
The engineering literature has grappled with similar design choices at a lower level; for instance,
\citet{ling2022vectoradam} observe that for vector-valued geometry parameters Adam's per-coordinate
moment estimation incorrectly violates rotation equivariance, and restore it by using a single
shared scalar per vector. This is an early optimizer-level analog of the continuous transformations
that our dial (\S\ref{sec:dial}) performs. Recently, LoRA-RITE \citep{yen2025lorarite} proposes a
transformation-invariant matrix preconditioner on LoRA (low-rank adaptation,
\citealt{hu2022lora}) parameters, which improves practical efficiency by explicitly accounting for
non-invariance as a source of optimization error. We import this line of reasoning to the
implicit-bias setting, showing that failure to respect invariance not only slows down training but
biases the solution, in the factored parameterization that a LoRA adapter instantiates
(\S\ref{sec:zoo}) and in standard attention layers (\S\ref{sec:attention}). Riemannian and
preconditioned optimizers such as ScaledGD \citep{tong2021accelerating} incorporate similar domain
knowledge for better conditioning; ScaledGD is also our within-class counterexample
(Remark~\ref{rem:necessary}).

Independently, \citet{zhang2026polaradamw} discovers a similar local
phenomenon in a different application domain: in group-equivariant networks, Muon's polar step
respects orthogonal transformations in the Schur multiplicity basis while AdamW's coordinate-wise
preconditioning does not; this leads to a new optimizer, PolarAdamW, which deliberately breaks
equivariance to obtain per-coordinate adaptation. These works share the starting point with us in
that they identify a local covariance structure that should be respected by the optimizer, but we
are, to our knowledge, the first to analyze its consequences for solutions recovered by factored
models. Their observation that coordinate-wise
preconditioning is most useful when the multiplicity gauge is trivial corresponds to our boundary
analysis (\S\ref{sec:boundary}).

Recent works explore similar themes from different angles:
\citet{lau2026symmetry} propose that update directions should respect the symmetry group of each
weight block, unify spectral descent, Muon, Scion, and their polar variants, and derive equivariant
optimizers for permutation and shift symmetry. \citet{shirodkar2026ddc} generalizes standard
optimizers to be exactly $G$-equivariant across architectural gauges, e.g., rotations within each
attention head; the outcomes it reports---an Adam variant that resists the over-training collapse
AdamW falls into, and a Muon variant that groks at a depth plain Muon never reaches---are selection
effects of the kind our mechanism predicts, arrived at from the design side. \citet{stupariu2026equivariant} directly compare Muon and Adam in equivariant
settings. We view these works as providing important context and design inspiration; in particular,
by analyzing the interaction between optimization and symmetry we highlight cases where breaking
equivariance (\S\ref{sec:boundary}) is desirable.

\paragraph{Symmetry-driven accounts of implicit bias.}
Accounts based on symmetry offer a framework for understanding implicit bias.
\citet{aladrah2026implicit} propose a complementary stochastic mechanism in which continuous
predictor-preserving symmetries interact with noise from stochastic gradient descent (SGD), leading
to a geometric correction---based on quotient volume---to the effective loss. This adjustment
accounts for balancing and spectral effects and enables reverse engineering of the bias. Their
illustrative case involves positive rescaling symmetry $(UD, VD^{-1})$. In contrast, our approach is
deterministic and tied to specific optimizers rather than driven by noise or applicable universally
across them: full-batch update rules either commute with or explicitly depend on the orthogonal
gauge, and this property alone determines which interpolant is selected under a fixed computational
budget (as verified in our bitwise-deterministic twin experiments). These two mechanisms are treated
as separate in our analysis, and how they might interact under mini-batch training remains an open
question noted in \S\ref{sec:discussion}.

\paragraph{Muon and structured optimizers.}
Regarding structured optimizers, Muon \citep{jordan2024muon,liu2025muon} applies matrix-sign
orthogonalization to momentum, while Shampoo \citep{gupta2018shampoo} uses Kronecker-factored
preconditioning---both methods seeing practical use. Recent theoretical work attributes a spectral
max-margin bias to Muon \citep{fan2025implicit,gronich2026implicit}, and \citet{kang2026uniform}
establish equal-rate singular-value growth for Muon when applied to LoRA-style factorizations. Our
phase diagram captures these dynamics, predicting when Muon's scheduling is beneficial (for exactly
low-rank targets) versus detrimental (for targets with heavy spectral tails), aligning with the dual
empirical behavior observed by \citet{dragutinovic2026muon}.

Independently, \citet{beneventano2026spectral} identify the polar step as the entropy-maximizing
(spectrum-flattening) choice in one step and derive exact singular-value dynamics in underdetermined
regression, providing theoretical grounding for the tail-sensitive mechanism measured in our phase
diagram. \citet{shen2026rivervalley} arrive at a consistent view regarding performance trade-offs in
mixed-spectrum sensing tasks. \citet{dong2026muonp} directly implement a within-class family by
raising gradient singular values to a fractional power, effectively interpolating between Muon and
gradient descent along the spectral-transfer axis identified in our structure theorem
(\S\ref{sec:theory}) and visualized in our phase diagram. Our two axes place all of these on one
map.

\section{Relation to invariant-optimizer design}
\label{app:further}

LoRA-RITE \citep{yen2025lorarite} shows that
invariant preconditioners are practical, and our results supply a reason to want them beyond
optimization efficiency: basis-dependence changes what is learned. The dial's $p{=}0$ endpoint and
global-norm clipping are two minimal repairs that are already implemented (intermediate $p$ tempers
the broken symmetry without restoring it), and full matrix preconditioners (Shampoo, Muon,
RITE-style) are the aggressive end of the same axis.
\end{document}